\documentclass[11pt]{article}
\newif\ifarxiv
\arxivfalse
\usepackage[preprint]{acl}
\usepackage{times}
\usepackage{latexsym}

\usepackage[T1]{fontenc}

\usepackage[utf8]{inputenc}

\usepackage{microtype}
\usepackage{inconsolata}
\usepackage{amsthm}

\usepackage{xcolor}
\usepackage{graphicx}
\usepackage{booktabs}
\usepackage{multirow}
\usepackage{enumitem}
\usepackage{amsmath}
\usepackage{amsfonts}
\usepackage{nicefrac}
\usepackage{thm-restate}
\makeatletter
\@ifundefined{newcounteralias}{}{%
  \renewcommand\thmt@autorefsetup{%
    \@xa\def\csname\thmt@envname autorefname\@xa\endcsname\@xa{\thmt@thmname}%
  }%
}
\makeatother
\usepackage{pifont}
\usepackage[ruled,vlined,linesnumbered]{algorithm2e}
\usepackage{placeins}
\usepackage[skins,breakable]{tcolorbox}
\usepackage{listings}
\usepackage{upquote}
\SetKwIF{If}{ElseIf}{Else}{if}{:}{else if}{else:}{end if}
\SetKwFor{For}{for}{:}{end for}
\SetArgSty{textnormal}
\usepackage{tikz}
\usetikzlibrary{calc,tikzmark,decorations.pathreplacing,calligraphy}

\lstdefinestyle{prompt}{
  basicstyle=\ttfamily\normalsize,
  columns=fullflexible,
  keepspaces=true,
  breaklines=true,
  breakatwhitespace=true,
  breakindent=0pt,
  breakautoindent=false,
  showstringspaces=false,
  aboveskip=0pt,
  belowskip=0pt
}
\newcommand{\AlgoHighlightStart}[1]{%
  \tikzmark{#1-start}%
  \begin{tikzpicture}[remember picture,overlay]
    \coordinate (highlight-east) at ([xshift=\hsize]pic cs:#1-start);
    \coordinate (highlight-south) at ([yshift=-0.55ex]pic cs:#1-end);
    \fill[yellow!50]
      ($(pic cs:#1-start)+(-0.15em,1.55ex)$)
      rectangle (highlight-east |- highlight-south);
  \end{tikzpicture}%
}
\newcommand{\AlgoHighlightEnd}[1]{\tikzmark{#1-end}}

\newcommand{\framework}{\texttt{NFA-LM}}

\newcommand{\median}{\operatorname{median}}

\newtheorem{lemma}{Lemma}
\newtheorem{proposition}{Proposition}
\newtheorem{example}{Example}

\theoremstyle{definition}
\newtheorem{definition}{Definition}
\theoremstyle{plain}
\newtheorem{fact}{Fact}

\newenvironment{paperalgorithm}[1][t]{%
  \ifarxiv\begin{algorithm*}[t]\else\begin{algorithm}[#1]\fi
}{%
  \ifarxiv\end{algorithm*}\else\end{algorithm}\fi
}

\title{Provably Tractable NFA-Constrained Language Generation via HMMs}

\author{Jialiang Sun \and Kuldeep Meel \\
  University of Toronto \\
  \texttt{\{sjl,meel\}@cs.toronto.edu}}

\begin{document}
\maketitle

\begin{abstract}
Constrained generation aims to sample from language models (LMs) conditioned on hard constraints. Existing constrained-generation techniques for nondeterministic finite automaton (NFA) constraints either distort the distribution or sacrifice efficiency. Theoretically, this task reduces to counting the length-$n$ sequences accepted by an NFA (\#NFA), and the exact \#NFA problem is \#P-complete. Recent work has shown that
\#NFA admits a fully polynomial randomized approximation scheme (FPRAS). Inspired by this result, we propose \framework{}, a polynomial-time engine for NFA-constrained generation with theoretical guarantees under mild assumptions. Experiments show that \framework{} efficiently generates high-quality outputs with theoretically bounded approximation error.

\end{abstract}

\section{Introduction}
As language models (LMs) become ubiquitous across a wide range of tasks, the demand for \emph{constrained generation}, which requires outputs to satisfy specific constraints, continues to grow. Constrained-generation techniques benefit downstream applications such as text detoxification~\citep{realtoxicityprompts,trace}, agentic function calling~\citep{gorilla,bfcl,openai_function_calling_2024}, and structured SQL generation~\citep{spider}. Regular expressions (Regex)~\citep{kleene1956representation} are a powerful tool for modeling such constraints and can be represented efficiently by NFAs with a linear number of states~\citep{thompson_nfa,glushkov_nfa}.

Given an LM $P_{\text{lm}}$ and a constraint $\alpha$, constrained generation seeks to sample outputs from $P_{\text{lm}}(x_{1:n}\mid \alpha)$. At step $\ell$, this is equivalent to sampling $x_\ell$ from a distribution proportional to $P_{\text{lm}}(x_{\ell}\mid x_{1:\ell-1})\cdot P_{\text{lm}}(\alpha\mid x_{1:\ell-1}\cdot x_{\ell})$. The main difficulty lies in computing the constraint-conditioned probability $P_{\text{lm}}(\alpha\mid x_{1:\ell})$, which requires marginalizing over future continuations: $\underset{x_{\ell+1:n}: x_{1:n}\text{ satisfies }\alpha}{\sum}P_{\text{lm}}(x_{\ell+1:n}\mid x_{1:\ell})$. The unweighted special case of computing
the number of suffixes $x_{\ell+1:n}$ such that $x_{1:n}$ satisfies $\alpha$, given a prefix $x_{1:\ell}$, is a \#NFA problem, which is known to be \#P-complete~\citep{alvarez1993very}. Existing constrained-generation techniques fall into two categories based on the tradeoff between efficiency and distribution preservation. (i) \textit{Distribution-agnostic} techniques, such as PICARD~\citep{picard}, Synchromesh~\citep{synchromesh}, and XGrammar~\citep{xgrammar,xgrammar2}, mask out tokens that violate the constraint using explicit decoding-time checkers. These approaches efficiently determine whether $P_{\text{lm}}(\alpha\mid x_{1:\ell})$ is zero, but they distort the true conditional distribution~\citep{gad,approxaligned}. (ii) \textit{Distribution-aware} techniques seek to preserve $P_{\text{lm}}(\alpha\mid x_{1:\ell})$. Existing approaches, such as Langevin-based methods~\citep{mucola,qin2022cold}, MCMC samplers~\citep{amini2023structured,mcmc_cfg}, and SMC-based methods~\citep{lew2023sequential,zhao2024probabilistic,loula2025syntactic,lipkin2025fast}, generally lack polynomial-time guarantees for approximating the constrained LM distribution to a prescribed accuracy. Recently, a line of research distills Hidden Markov Model (HMM) $P_{\text{hmm}}(\alpha\mid x_{1:\ell})\approx P_{\text{lm}}(\alpha\mid x_{1:\ell})$ as a tractable proposal: GeLaTo~\citep{tractablecontrol} and Ctrl-G~\citep{ctrlg} develop efficient algorithms that compute $P_{\text{hmm}}(\alpha\mid x_{1:\ell})$ exactly for DFAs and handle basic constraints such as keyword inclusion and length requirements, but general Regex still requires up to exponentially many DFA states~\citep{moore1971bounds}.

Consequently, LMs still struggle with constrained generation~\citep{jsonschemabench,sun2023evaluatinglargelanguagemodels,promptconstraints}. This raises a research question: \textit{Can we achieve efficient distribution-aware constrained generation for NFAs with theoretically bounded error?}
We answer this question affirmatively with \framework{}. Our framework is theoretically inspired by efficient FPRAS for \#NFA~\citep{arenas2021nfa,meel2025towards}, and generalizes Ctrl-G~\citep{ctrlg} to NFA case. 
\framework{} first distills an HMM from the base LM and uses it to assign tractable probability weights to future continuations. It then generalizes the \#NFA FPRAS to estimate the HMM completion weight $P_{\text{hmm}}(\alpha\mid x_{1:\ell})$ to reweight the LM's next-token distribution.

We organize the rest of the paper as follows. We introduce the preliminaries in Section~\ref{sec:preliminaries}, followed by the key concepts of canonical runs in Section~\ref{sec:canonical-runs}. We present \framework{} in Section~\ref{sec:methods} and its main theoretical guarantees in Section~\ref{sec:theoretical-analysis}. We empirically evaluate \framework{} in Section~\ref{sec:evaluation} and conclude in Section~\ref{sec:conclusion}. Complete engineering details and proofs appear in Appendices~\ref{sec:cache},~\ref{sec:whole_proof}, and~\ref{sec:time_complexity}.

\section{Preliminaries and Background}
\label{sec:preliminaries}
We introduce the preliminaries as below.
\paragraph{LM and HMM}
A token in the vocabulary $\Sigma$ is a basic element of a language model. We denote by $\Sigma^n$ and $\Sigma^*$ the sets of length-$n$ and arbitrary-length sequences, respectively. For each sequence $x=x_{1:n}\in\Sigma^n$, we represent the language model as the autoregressive distribution $P_{\text{lm}}(x_{1:n})=\prod_{\ell=1}^n P_{\text{lm}}(x_\ell\mid x_{1:\ell-1})$. An HMM models a joint distribution over a token sequence $x_{1:n}$ and a hidden-state sequence $y_{1:n}\in [h]^n$: $P_{\text{hmm}}(x_{1:n},y_{1:n})=P_{\text{hmm}}(y_1)P_{\text{hmm}}(x_1\mid y_1) \prod_{\ell=2}^n P_{\text{hmm}}(y_\ell\mid y_{\ell-1})P_{\text{hmm}}(x_\ell\mid y_{\ell})$. It is specified by an
initial distribution vector for $P_{\text{hmm}}(y_1=i)$, an emission matrix for $P_{\text{hmm}}(x_\ell=k\mid y_{\ell}=i)$, and a transition matrix for $P_{\text{hmm}}(y_{\ell}=j\mid y_{\ell-1}=i)$.

\paragraph{Mathematical Notation}
We use $\mathbf{1}_{A}$ to denote the indicator function that equals $1$ if $A$ is true.
Given $n\in \mathbb{N}^+$, let $[n]$ denote $\{1,\ldots,n\}$. Given $a,b,\varepsilon\in \mathbb{R}^{\geq 0}$, let $a\in (1\pm \varepsilon)b$ denote $(1-\varepsilon)b\leq a\leq (1+\varepsilon)b$. Similarly, $a\in \frac{b}{(1\pm \varepsilon)}$ denotes $\frac{b}{1+\varepsilon}\leq a\leq \frac{b}{1-\varepsilon}$. We use $\lambda$ to denote the length-zero empty token or hidden state. For $x,x'\in \Sigma^*$, let $x\cdot x'$ be their concatenation. For $S,S'\subseteq \Sigma^*$, define $S\cdot x = \{x'\cdot x\mid x'\in S\}$ and $S\cdot S'=\{x\cdot x' \mid x\in S, x'\in S'\}$. Concatenation of product sequences of tokens and hidden states is defined similarly.

\paragraph{NFA and Regex}
An NFA is a 5-tuple $A=(Q,\Sigma,T,q_0,F)$ comprising a set of states $Q$, a set of transitions $T\subseteq Q\times \Sigma\times Q$, an initial state $q_0\in Q$, and a set of final states $F\subseteq Q$. A DFA is a special case of an NFA such that, for any $q\in Q$ and $a\in \Sigma$, there is at most one $q'$ with $(q,a,q')\in T$. A run $r$ of $x_{1:k}\in \Sigma^k$ from $q_0$ to $q_k$ on $A$ is denoted by $q_0\overset{x_1}{\rightarrow}q_1 \overset{x_2}{\rightarrow}\cdots\overset{x_k}{\rightarrow} q_k$, where $(q_{\ell-1},x_\ell,q_\ell)\in T$ for every $\ell\in [k]$. We say that $x$ is accepted by $q'$ from $q$ if there exists a run $r$ from $q$ to $q'$ for $x$, and we denote the set of all such $x$ by $L(q,q')$. In particular, $L(q)=L(q_0,q)$, and the language of $A$ is $L(A)=\bigcup_{q\in F}L(q)$. For $n\in \mathbb N^+$, the $n$-th slice of the language of $A$, denoted by $L_n(A)$, is the set of length-$n$ sequences in $L(A)$.
For $q,q'\in Q$, define the set of symbols labeling transitions from $q$ to $q'$ as $\Sigma(q,q')=\{a\in \Sigma\mid (q,a,q')\in T\}$, the successors of $q$ as $\mathsf{succ}(q)=\{q'\in Q\mid \Sigma(q,q')\neq \emptyset\}$, and the $a$-successors of $q$ as $\mathsf{succ}(q,a)=\{q'\in Q\mid (q,a,q')\in T\}$. Given a regular expression (Regex) $\alpha$ consisting of $k$ symbol occurrences, one can construct an NFA $A$ with $k+1$ states using Glushkov's construction~\citep{glushkov_nfa}. We abuse notation and use $P_{\text{lm}}(\alpha\mid x_{1:\ell})$ to denote the probability of the LM constraint-satisfaction event $x_{1:n}\in L_n(A)$ given $x_{1:\ell}$.
Given tolerance $\varepsilon>0$ and confidence $\delta$, the fully polynomial randomized approximation scheme (FPRAS) for \#NFA returns an estimate $\hat{N}$ such that $\mathbb{P}[\hat{N}\in (1\pm\varepsilon)|L_n(A)|]\geq 1-\delta$. Its running time is polynomial in $m$ (the number of states), $n$, $\varepsilon^{-1}$, and $\log(\delta^{-1})$.

\paragraph{Constrained Generation}
The main goal is to compute $P_{\text{lm}}(x_\ell \mid x_{1:\ell-1})P_{\text{lm}}(\alpha\mid x_{1:\ell})$ at each step $\ell$. We distill an HMM so that $P_{\text{hmm}}(\alpha\mid x_{1:\ell})$ approximates $P_{\text{lm}}(\alpha\mid x_{1:\ell})$, and we seek to approximate
the term $P_{\text{hmm}}(\alpha\mid x_{1:\ell})$ by our $\hat{P}_{\text{hmm}}(\alpha\mid x_{1:\ell})$. Given a prefix $x_{1:\ell-1}$, define \(P(x_\ell\mid x_{1:\ell-1},\alpha)=
{P_{\text{lm}}(x_\ell\mid x_{1:\ell-1})}P_{\text{hmm}}(\alpha\mid x_{1:\ell})/\Gamma(x_{1:\ell-1})\) and 
\(\hat{P}(x_\ell\mid x_{1:\ell-1},\alpha)=
{P_{\text{lm}}(x_\ell\mid x_{1:\ell-1})}\hat{P}_{\text{hmm}}(\alpha\mid x_{1:\ell})/\hat{\Gamma}(x_{1:\ell-1})\), where $\Gamma(x_{1:\ell-1})$ and $\hat{\Gamma}(x_{1:\ell-1})$ are the respective normalizing constants. They induce $P(x_{1:n}\mid\alpha)=\prod_{\ell=1}^nP(x_\ell\mid x_{1:\ell-1},\alpha)$ and $\hat{P}(x_{1:n}\mid\alpha)=\prod_{\ell=1}^n\hat{P}(x_\ell\mid x_{1:\ell-1},\alpha)$. We will show in Theorem~\ref{thm:distribution-preservation} that $P$ is provably close to $\hat{P}$.

\section{Canonical Runs on an NFA}
\label{sec:canonical-runs}
In this section, we summarize the unrolling procedure and canonical runs that are key to the algorithm description and theoretical analysis.
\paragraph{Unrolling} Given $n\in \mathbb{N}^+$, we can unroll the NFA $A$ into $A^u=(Q^{u},\Sigma,T^{u},q_0^0,q_F^n)$ with the property $L(A^{u})=L_n(A)$. The structure of $A^u$ resembles a directed acyclic graph (DAG), with layers of state copies $Q^\ell$ for $\ell=0,\ldots,n$. For each $q\in Q$ that is reachable from the initial state $q_0$ by a length-$\ell$ sequence, the corresponding state $q^\ell\in Q^\ell$ is reachable by that sequence in $A^u$. Formally, $A^u$ is defined as follows: (i) $q_0^0\in Q^u$; (ii) for every $\ell$ with $0<\ell<n$, if $(q_1,a,q_2)\in T$ and $q_1^{\ell-1}\in Q^u$, then $q_2^\ell\in Q^u$ and $(q_1^{\ell-1},a,q_2^\ell)\in T^u$; and (iii) $q_F^n\in Q^u$, and if $q_1^{n-1}\in Q^u$ and $(q_1,a,q_2)\in T$ for some $q_2\in F$, then $(q_1^{n-1},a,q_F^n)\in T^u$. The fixed-length case can be generalized to length ranges by padding: for every $q\in F$ reachable by a length-$j$ sequence for $n'\leq j\leq n$, add a padding chain $q^j \xrightarrow{\#} q_{\#}^{j+1}
\xrightarrow{\#}\cdots\xrightarrow{\#}
q_{\#}^{n-1}\xrightarrow{\#}q_F^n$ so that $|L(A^{u})|=\sum_{\ell=n'}^{n}|L_\ell(A)|$. For conciseness, we assume the fixed-length case throughout the paper. Given a prefix \(x_{1:\ell}\), let
\(R(x_{1:\ell})=\{q\in Q^\ell\mid x_{1:\ell}\in L(q_0^0,q)\}\) be its set of reachable states. Its set of accepting-completion suffixes is
\(L(x_{1:\ell},q_F^n)=
\bigcup_{q\in R(x_{1:\ell})}L(q,q_F^n)\).

\paragraph{Canonical Runs}
An accepted suffix may have several runs in an NFA. To count every suffix
once, fix a total order \(\prec\) on the states. For
\(q\in Q^\ell\) and \(a\cdot x'\in L(q,q_F^n)\), define its canonical
successor $\mathsf{first}(q,a,x')$ as follows:
\[
\underset{\prec}{\min}\{q'\in Q^{\ell+1}\mid
(q,a,q')\in T^u,\ x'\in L(q',q_F^n)\}
\]

\begin{definition}[Canonical run]
\label{def:canonical-run}
For \(q\in Q^\ell\) and \(x\in L(q,q_F^n)\), the canonical run of \(x\)
from \(q\), denoted by \(\mathsf{run}(x,q)\), is defined inductively as
follows:
\begin{itemize}
\item If \(q=q_F^n\), then \(x=\lambda\) and
\(\mathsf{run}(\lambda,q_F^n)=q_F^n\).
\item If \(q\neq q_F^n\), write \(x=a\cdot x'\), let
\(q'=\mathsf{first}(q,a,x')\), and set
\(\mathsf{run}(x,q)=
q\overset{a}{\rightarrow}\mathsf{run}(x',q')\).
We call \((x',q')\) the canonical child of \((x,q)\).
\end{itemize}
\end{definition}
For a reachable-state set \(R\subseteq Q^\ell\), we similarly write
\(\mathsf{first}(R,x)=
\underset{\prec}{\min}\{q\in R\mid x\in L(q,q_F^n)\}\).
This assigns every suffix in \(\bigcup_{q\in R}L(q,q_F^n)\) to one state
in \(R\).

\begin{definition}[Convergence state]
Let $r,r'$ be two runs in $A^u$ that both end at $q_F^n$. Let $r_k$ (respectively, $r_k'$) be the suffix of $r$ (respectively, $r'$) beginning at layer $k$. We have $r_n=r_n'=q_F^n$. The longest common suffix of $r$ and $r'$ is the suffix $r_k$ for the smallest $k$ such that $r_k=r_k'$. The convergence state of $r$ and $r'$ is the state reached by the longest common suffix of $r$ and $r'$. For $x,x'\in L(q,q_F^n)$, we denote by $q^{x_1,x_2}$ the convergence state of their canonical runs $\mathsf{run}(x,q)$ and $\mathsf{run}(x',q)$.
\end{definition}

\definecolor{runCoral}{RGB}{244,182,181}
\definecolor{runPeriwinkle}{RGB}{179,179,250}
\definecolor{runGreen}{RGB}{166,214,166}
\begin{figure}[t]
\centering
\begin{tikzpicture}[
  x=1.00cm,
  y=0.95cm,
  nfastate/.style={
    circle,
    draw=black!65,
    fill=white,
    minimum size=7mm,
    inner sep=0pt,
    font=\small
  },
  finalstate/.style={
    nfastate,
    double,
    double distance=1pt
  },
  trans/.style={
    draw=black!55,
    -latex,
    line width=0.45pt
  },
  edgelabel/.style={
    font=\fontsize{6.5}{7.0}\selectfont,
    inner sep=0.25mm,
    outer sep=0pt
  }
]
  \node[nfastate] (rawq0) at (0,0) {$q_0$};
  \node[nfastate] (rawq1) at (2,0.8) {$q_1$};
  \node[nfastate] (rawq3) at (2,-0.8) {$q_3$};
  \node[nfastate] (rawq2) at (4,0.8) {$q_2$};
  \node[nfastate] (rawq4) at (4,-0.8) {$q_4$};
  \node[finalstate] (rawq5) at (6,0) {$q_5$};

  \draw[trans] (rawq0) to[loop above,looseness=6]
    node[edgelabel,above,yshift=0.35mm] {0,1,2} (rawq0);
  \draw[trans] (rawq0) --
    node[edgelabel,above,sloped] {0} (rawq1);
  \draw[trans] (rawq0) --
    node[edgelabel,below,sloped] {1} (rawq3);
  \draw[trans] (rawq1) --
    node[edgelabel,above] {0,1,2} (rawq2);
  \draw[trans] (rawq3) --
    node[edgelabel,below,yshift=-0.2mm] {0,1,2} (rawq4);
  \draw[trans] (rawq2) --
    node[edgelabel,above,sloped] {1} (rawq5);
  \draw[trans] (rawq4) --
    node[edgelabel,below,sloped] {0} (rawq5);
  \draw[trans] (rawq5) to[loop above,looseness=6]
    node[edgelabel,above,yshift=0.35mm] {0,1,2} (rawq5);
\end{tikzpicture}
\par\smallskip
\begin{tikzpicture}[
  x=0.97cm,
  y=0.85cm,
  nfastate/.style={
    circle,
    draw=black!65,
    fill=white,
    minimum size=5.4mm,
    inner sep=0pt,
    font=\scriptsize
  },
  finalstate/.style={
    nfastate,
    double,
    double distance=1pt
  },
  trans/.style={
    draw=black!55,
    -latex,
    line width=0.35pt
  },
  edgelabel/.style={
    font=\fontsize{4.4}{4.8}\selectfont,
    inner sep=0.18mm,
    outer sep=0pt
  },
  runhighlight/.style={
    line width=1.00mm,
    line cap=round,
    line join=round
  },
  noncanonical/.style={
    draw=runCoral,
    line width=0.82mm,
    line cap=round,
    line join=round,
    dash pattern=on 2pt off 1.4pt
  }
]
  \coordinate (c00) at (0,0);

  \coordinate (c01) at (1.8,1.5);
  \coordinate (c11) at (1.8,0);
  \coordinate (c31) at (1.8,-1.5);

  \coordinate (c12) at (3.6,1.5);
  \coordinate (c32) at (3.6,0.5);
  \coordinate (c22) at (3.6,-0.5);
  \coordinate (c42) at (3.6,-1.5);

  \coordinate (c23) at (5.4,1.5);
  \coordinate (c43) at (5.4,0);
  \coordinate (c53) at (5.4,-1.5);

  \coordinate (cf4) at (7.2,0);

  \draw[runhighlight,draw=runPeriwinkle]
    (c00) to[bend left=18] (c01);
  \draw[runhighlight,draw=runPeriwinkle] (c32) -- (c43);

  \draw[noncanonical] (c00) -- (c11);
  \draw[noncanonical] (c11) -- (c22);
  \draw[noncanonical] (c22) -- (c53);
  \draw[noncanonical] (c53) -- (cf4);

  \draw[runhighlight,draw=runGreen]
    (c00) to[bend right=18] (c01);
  \draw[runhighlight,draw=runGreen]
    (c32) to[bend right=18] (c43);

  \draw[draw=runPeriwinkle,line width=0.68mm,line cap=round]
    ([yshift=0.18mm]c01) -- ([yshift=0.18mm]c32);
  \draw[draw=runGreen,line width=0.68mm,line cap=round]
    ([yshift=-0.18mm]c01) -- ([yshift=-0.18mm]c32);
  \draw[draw=runPeriwinkle,line width=0.68mm,line cap=round]
    ([yshift=0.18mm]c43) -- ([yshift=0.18mm]cf4);
  \draw[draw=runGreen,line width=0.68mm,line cap=round]
    ([yshift=-0.18mm]c43) -- ([yshift=-0.18mm]cf4);

  \node[nfastate] (q00) at (c00) {$q_0^0$};

  \node[nfastate] (q01) at (c01) {$q_0^1$};
  \node[nfastate] (q11) at (c11) {$q_1^1$};
  \node[nfastate] (q31) at (c31) {$q_3^1$};

  \node[nfastate] (q12) at (c12) {$q_1^2$};
  \node[nfastate] (q32) at (c32) {$q_3^2$};
  \node[nfastate] (q22) at (c22) {$q_2^2$};
  \node[nfastate] (q42) at (c42) {$q_4^2$};

  \node[nfastate] (q23) at (c23) {$q_2^3$};
  \node[nfastate] (q43) at (c43) {$q_4^3$};
  \node[nfastate] (q53) at (c53) {$q_5^3$};

  \node[finalstate] (qf4) at (cf4) {$q_F^4$};

  \draw[trans] (q00) to[bend left=18]
    node[edgelabel,above,sloped,pos=0.5] {0} (q01);
  \draw[trans] (q00) --
    node[edgelabel,above,sloped,pos=0.5] {1} (q01);
  \draw[trans] (q00) to[bend right=18]
    node[edgelabel,below,sloped,pos=0.5] {2} (q01);
  \draw[trans] (q00) --
    node[edgelabel,above] {0} (q11);
  \draw[trans] (q00) --
    node[edgelabel,below,sloped] {1} (q31);

  \draw[trans] (q01) --
    node[edgelabel,above] {0} (q12);
  \draw[trans] (q01) --
    node[edgelabel,above,sloped] {1} (q32);
  \draw[trans] (q11) to[bend left=18]
    node[edgelabel,above,sloped,pos=0.5] {0} (q22);
  \draw[trans] (q11) --
    node[edgelabel,above,sloped,pos=0.5] {1} (q22);
  \draw[trans] (q11) to[bend right=18]
    node[edgelabel,below,sloped,pos=0.5] {2} (q22);
  \draw[trans] (q31) --
    node[edgelabel,below] {0,1,2} (q42);

  \draw[trans] (q12) --
    node[edgelabel,above] {0,1,2} (q23);
  \draw[trans] (q22) --
    node[edgelabel,below,sloped] {1} (q53);
  \draw[trans] (q32) to[bend left=18]
    node[edgelabel,above,sloped,pos=0.5] {0} (q43);
  \draw[trans] (q32) --
    node[edgelabel,above,sloped,pos=0.5] {1} (q43);
  \draw[trans] (q32) to[bend right=18]
    node[edgelabel,below,sloped,pos=0.5] {2} (q43);
  \draw[trans] (q42) --
    node[edgelabel,above] {0} (q53);

  \draw[trans] (q23) --
    node[edgelabel,above,sloped] {1} (qf4);
  \draw[trans] (q43) --
    node[edgelabel,above] {0} (qf4);
  \draw[trans] (q53) --
    node[edgelabel,above,sloped] {0} (qf4);
  \draw[trans] (q53) to[bend right=14]
    node[edgelabel,below,sloped] {1,2} (qf4);
\end{tikzpicture}
\caption{\textbf{Top}: An NFA \(A\) over $\Sigma=\{0,1,2\}$ with ordered states $q_0\prec\cdots\prec q_5$ and final state $q_5$. \textbf{Bottom}: The unrolled NFA \(A^u\) of $A$ for length \(n=4\).}
\label{fig:unrolled-nfa}
\end{figure}

\newcommand{\runpaint}[2]{%
  \tikz[baseline=(run.base)]{%
    \node[
      fill=#1,
      text=black,
      rounded corners=0.8pt,
      inner xsep=1.2pt,
      inner ysep=0.45pt,
      outer sep=0pt
    ] (run) {$\displaystyle\mathstrut #2$};%
  }%
}

\begin{example}
\label{ex:canonical-runs}
\normalfont
Figure~\ref{fig:unrolled-nfa} illustrates the constraint ``Alice and Bob
occur two positions apart,'' where \(\text{Alice}\rightarrow0\), \(\text{Bob}\rightarrow1\),
and \(\text{others}\rightarrow2\). The sequence ``\text{Introduce Bob to Alice}'' corresponds to $2120$, whose canonical run is \(\runpaint{runGreen}{
q_0^0\overset{2}{\rightarrow}q_0^1
\overset{1}{\rightarrow}q_3^2
\overset{2}{\rightarrow}q_4^3
\overset{0}{\rightarrow}q_F^4}\). Another sequence $0110$ has two accepting runs: since \(\mathsf{first}(q_0^0,0,110)=q_0^1\) under the ordering $\prec$, the canonical run is \(\runpaint{runPeriwinkle}{
q_0^0\overset{0}{\rightarrow}q_0^1
\overset{1}{\rightarrow}q_3^2
\overset{1}{\rightarrow}q_4^3
\overset{0}{\rightarrow}q_F^4}\), whereas $\runpaint{runCoral}{
q_0^0\overset{0}{\rightarrow}q_1^1
\overset{1}{\rightarrow}q_2^2
\overset{1}{\rightarrow}q_5^3
\overset{0}{\rightarrow}q_F^4}$ is noncanonical. The longest common suffix of the canonical runs for $0110$ and $2120$ is therefore \(q_4^3\overset{0}{\rightarrow}q_F^4\), with convergence state \(q_4^3\).

\end{example}

\paragraph{NFA with HMM Weights}
Let \(H_0=\{\lambda\}\) for the initial-distribution step, and let \(H_\ell=[h]\) for \(1\leq\ell\leq n\).
A product state is a pair \(v=(q,b)\), where \(q\in Q^\ell\) and
\(b\in H_\ell\). An NFA transition \((q,a,q')\in T^u\) and a hidden-state
transition from \(b\) to \(b'\in H_{\ell+1}\) form a product edge
\((q,b)\xrightarrow{(a,b')}(q',b')\) with weight defined as follows:
\[
\psi(a,b,b')
=
\begin{cases}
P_{\text{hmm}}(b')P_{\text{hmm}}(a\mid b')&b=\lambda\\
P_{\text{hmm}}(b'\mid b)P_{\text{hmm}}(a\mid b')&b\neq\lambda
\end{cases}
\]
For $q\in Q^\ell$, define $Z(q)=L(q,q_F^n)\times[h]^{n-\ell}$. Given any suffix atom \(z=(x_{\ell+1:n},y_{\ell+1:n})\), define its weight as $w(z,b)=\prod_{i=\ell+1}^n\psi(x_i,y_{i-1},y_i)$ and the sum of weights at $(q,b)$ as $W(q,b)=\sum_{z\in Z(q)}w(z,b)$. Let \(Z_+(q,b)=\{z\in Z(q)\mid w(z,b)>0\}\). We call \(\mathcal E_\ell(q,b)\) the set of {productive edges} of \((q,b)\) which is defined inductively: at $\ell=n-1$, define $\mathcal E_{n-1}(q,b)=\{(a,q_F^n,b')\mid (q,a,q_F^n)\in T^u,\ b'\in H_n,\psi(a,b,b')>0\}$, and for $0\leq\ell<n-1$, define \(\mathcal E_\ell(q,b)
=\{(a,q',b')\mid (q,a,q')\in T^u,\ b'\in H_{\ell+1},\psi(a,b,b')>0,\
\mathcal E_{\ell+1}(q',b')\neq\emptyset\}\). Observe that $w(z,b)=P_{\text{hmm}}(x_{\ell+1:n},y_{\ell+1:n}\mid y_\ell=b)$ and $W(q,b)=P_{\text{hmm}}(x_{\ell+1:n}\in L(q,q_F^n)\mid y_\ell=b)$. For each prefix \(x_{1:\ell}\) with \(P_{\text{hmm}}(x_{1:\ell})>0\), let \(R=R(x_{1:\ell})\) be the set of reachable NFA states and let \(Z_R=\bigcup_{q\in R}Z(q)\) be the set of suffix atoms. If we write \(W_R(b)=\sum_{z\in Z_R}w(z,b)\), then the target probability is given by the following identity:
\[P_{\text{hmm}}(\alpha\mid x_{1:\ell})
=
\sum_{b\in H_\ell}
P_{\text{hmm}}(y_\ell=b\mid x_{1:\ell})W_R(b)\]
It therefore suffices to estimate \(W_R(b)\), as described in Section~\ref{sec:methods}.

The canonical NFA run lifts to a canonical product run.
If \(z=(a,b')\cdot z'\), where \(z'=(x',y')\), and
\(q'=\mathsf{first}(q,a,x')\), define analogously:
\[
\mathsf{run}(z,(q,b))
=
(q,b)\xrightarrow{(a,b')}
\mathsf{run}(z',(q',b'))
\]
The recursion terminates at \(q_F^n\), where all terminal product states are
identified with \(q_F^n\). We call \((z',(q',b'))\) the canonical child of
\((z,(q,b))\). These canonical product runs will control the dependence
among reused samples in Section~\ref{sec:theoretical-analysis}.

\section{Method}
\label{sec:methods}
Overall, \framework{} consists of three stages: (i) \textit{Distillation:} following~\citet{tractablecontrol,ctrlg}, we distill an HMM \(P_{\text{hmm}}\) using responses sampled from the base LM \(P_{\text{lm}}\); (ii) \textit{Precomputation:} we precompute HMM-weighted suffix samples that provide reusable information for constrained generation; and (iii) \textit{Generation:} we use the precomputed information to generate outputs. Algorithm~\ref{alg:overview} summarizes \framework{}.

\begin{paperalgorithm}[ht!]
\DontPrintSemicolon
\SetAlgoNlRelativeSize{-1}
\mbox{Construct NFA \(A\) for \(\alpha\) and unroll to \(A^u\)}

Run \(\mathsf{precompute}(A,n,\varepsilon,\delta)\)

Initialize \(x_{1:0}=\lambda\)

\For{\(\ell=1,\ldots,n\)}{
    \For{\(a\in \Sigma\)}{\mbox{\(\hat{P}_{\text{hmm}}(\alpha\mid x_{1:\ell-1}\cdot a)\leftarrow \mathsf{compute}(x_{1:\ell-1}\cdot a)\)}
    }

    \mbox{Sample \(x_\ell\propto P_{\text{lm}}(x_\ell\mid x_{1:\ell-1})\hat{P}_{\text{hmm}}(\alpha\mid x_{1:\ell})\)}
    \nllabel{line:generation-token-sampling}
}
\Return{\(x_{1:n}\)}
\caption{\framework(\(\alpha,n,\varepsilon,\delta\))}
\label{alg:overview}
\end{paperalgorithm}

\begin{paperalgorithm}[ht!]
\DontPrintSemicolon

\SetAlgoNlRelativeSize{-1}
\For{\(b\in H_\ell\)}{
    \mbox{\lIf{\nllabel{line:precompute-empty-edges}\(\mathcal{E}_{\ell}(q,b)=\emptyset\)}{\textbf{continue} to next \(b\)}}
    \(\rho^j(q,b)\leftarrow\min_{(a,q_i,b')\in\mathcal E_{\ell}(q,b)}
    \frac{p^j(q_i,b')}
    {\psi(a,b,b')}\)\nllabel{line:precompute-common-rate}

    \For{\(r\in[n_s n_t]\)}{
        \For{\((a,q_i,b')\in\mathcal E_\ell(q,b)\)}{
            \(
            \ifarxiv
\textstyle\bar S^{r,j}(a,q,q_i,b')\leftarrow\mathsf{reduce}\biggl(
            \textstyle(a,b')\cdot S^{r,j}(q_i,b'),\frac{\rho^j(q,b)\psi(a,b,b')}{p^j(q_i,b')}\biggr)
\else
\begin{aligned}
            &\textstyle\bar S^{r,j}(a,q,q_i,b')\leftarrow\mathsf{reduce}\biggl(\\
            &\textstyle(a,b')\cdot S^{r,j}(q_i,b'),\frac{\rho^j(q,b)\psi(a,b,b')}{p^j(q_i,b')}\biggr)
            \end{aligned}
\fi
            \)\nllabel{line:precompute-expand-reduce}
        }

        \(
        \ifarxiv
\textstyle\hat{S}^{r,j}(q,b)\leftarrow
        \underset{(a,q_i,b')\in \mathcal{E}_\ell(q,b)}{\bigcup}\{(a,b')\cdot (x',y')
        \textstyle\in \bar{S}^{r,j}(a,q,q_i,b'):\mathsf{first}(q,a,x')=q_i\}
\else
\begin{aligned}
        &\textstyle\hat{S}^{r,j}(q,b)\leftarrow
        \underset{(a,q_i,b')\in \mathcal{E}_\ell(q,b)}{\bigcup}\{(a,b')\cdot (x',y')\\
        &\textstyle\in \bar{S}^{r,j}(a,q,q_i,b'):\mathsf{first}(q,a,x')=q_i\}
        \end{aligned}
\fi
        \)\nllabel{line:precompute-canonical-union}
    }

    \For{\(t\in [n_t]\)}{
        \mbox{\(
        M^{t,j}(q,b)
        \leftarrow
        \frac{\overset{n_s t}{\underset{r=n_s(t-1)+1}{\sum}}
        \left|\hat S^{r,j}(q,b)\right|}{n_s\rho^j(q,b)}
        \)}\nllabel{line:precompute-block-mean}
    }
    \(\hat W^j(q,b)
    \leftarrow \median_{t\in[n_t]} M^{t,j}(q,b)\)
    \nllabel{line:precompute-median}

    \mbox{\(p^j(q,b)\leftarrow\min\left(
    \rho^j(q,b),\ \hat W^j(q,b)^{-1}
    \right)\)}\nllabel{line:precompute-rate-update}

    \For{\(r\in [n_s n_t]\)}{
        \mbox{\(
        S^{r,j}(q,b)
        \leftarrow
        \mathsf{reduce}
        \left(
        \hat S^{r,j}(q,b),
        \frac{p^j(q,b)}{\rho^j(q,b)}
        \right)
        \)}\nllabel{line:precompute-final-reduce}
    }
}
\caption{\(\mathsf{estimateAndSample}(q,j,\ell)\)}
\label{alg:weighted-estimate-sample}
\end{paperalgorithm}

\begin{paperalgorithm}[ht!]
\DontPrintSemicolon
\SetAlgoNlRelativeSize{-1}
Unroll \(A\) to \(A^u=(Q^u,\Sigma,T^u,q_0^0,q_F^n)\)

\mbox{\(\kappa,n_t,n_u\leftarrow \frac{\varepsilon}{6+\varepsilon},\left\lceil 8\log(16h|Q^u|)\right\rceil,\left\lceil 8\log(\delta^{-1})\right\rceil\)}

\mbox{
\(n_s,\theta
\leftarrow\left\lceil\frac{16(n+1)}{\kappa^2(1-\kappa)}\right\rceil,
\left\lceil 16(1+\kappa)n_s n_t h|Q^u|\right\rceil\)}

\For{\(j\in [n_u]\)}{
    \(\mathsf{fail}^j\leftarrow0\)
    \nllabel{line:precompute-core-start}

    {\advance\rightskip by -2\algoskipindent
    \For{\mbox{\(\ell=0,\ldots,n, q\in Q^\ell, b\in H_\ell, r\in [n_s n_t]\)}}{
        \mbox{\(\hat{W}^j(q,b),p^j(q,b),S^{r,j}(q,b)\leftarrow0,1,\emptyset\)}
    }}

    \For{\(b\in [h], r\in [n_s n_t]\)}
    {\nllabel{line:precompute-terminal-init}\(\hat{W}^j(q_F^n,b),p^j(q_F^n, b),S^{r,j}(q_F^n, b)\leftarrow1,1,\{(\lambda,\lambda)\}\)}
    \For{\nllabel{line:precompute-backward-order}\(\ell=n-1,\ldots,0\)}{
        \For{\(q\in Q^\ell\)}{
            \(\mathsf{estimateAndSample}(q,j,\ell)\)

            {\advance\rightskip by -2\algoskipindent\If{\mbox{
            \(
            \overset{n_s n_t}{\underset{r=1}{\sum}}
            \overset{n}{\underset{k=0}\sum}
            \underset{q\in Q^k}{\sum}
            {\underset{b\in H_k}{\sum}}
            |S^{r,j}(q,b)|\geq \theta
            \)\nllabel{line:precompute-budget-test}
            }}{\(\mathsf{fail}^j\leftarrow1\), go to repetition \(j+1\)\nllabel{line:precompute-budget}}}
        }
    }
}

\Return{\(\hat W^j(q,b), p^j(q,b),S^{r,j}(q,b)\)}
\caption{\(\mathsf{precompute}(A,n,\varepsilon,\delta)\)}
\label{alg:hmm-weighted-count-nfa}
\SetNlSty{textbf}{c}{}
\end{paperalgorithm}

\begin{paperalgorithm}[ht!]
\DontPrintSemicolon
\SetAlgoNlRelativeSize{-1}
Let \(R=R(x_{1:\ell})\) be the set of reachable states.

\lIf{\(\ell=n\)}{\Return{\(\mathbf{1}_{q_F^n\in R}\)}}

\lIf{\(P_{\text{hmm}}(x_{1:\ell})=0\) \textbf{or} \(R=\emptyset\)}{\Return{\(0\)}}

\For{\(j\in [n_u]\)}{
    \mbox{\lIf{\(\mathsf{fail}^j=1\)}{
        \(\hat{P}_{\text{hmm}}^{\,j}(\alpha\mid x_{1:\ell})=0\); \textbf{continue}
    }}

    \For{\(b\in H_\ell\)}{
        \(R_b^+\leftarrow\{q\in R\mid \mathcal{E}_{\ell}(q,b)\neq \emptyset\}\)

        \mbox{\lIf{\(R_b^+=\emptyset\)}{
            \(\hat W_R^j(b)\gets0\); \textbf{continue}
        }}

        \(\rho_R^j(b)\gets
        \min_{q\in R_b^+} p^j(q,b)\)
        \nllabel{line:generation-query-rate}

        \For{\(r\in [n_s n_t]\)}{
            \For{\(q\in R_b^+\)}{
                \(\bar S_R^{r,j}(q,b)\leftarrow
                \mathsf{reduce}\left(
                    S^{r,j}(q,b),
                    \frac{\rho_R^j(b)}{p^j(q,b)}
                \right)\)
            }

            \(\widetilde S_R^{r,j}(b)=
            \underset{q\in R_b^+}{\bigcup}
            \{(x,y)\in \bar S_R^{r,j}(q,b)
            \mid \mathsf{first}(R,x)=q\}\)
        }

        \For{\(t\in [n_t]\)}{\(M_R^{t,j}(b)\gets
            \frac{\overset{n_s t}{\underset{r=n_s(t-1)+1}{\sum}}
            |\widetilde S_R^{r,j}(b)|}{n_s\rho_R^j(b)}
            \)
        }

        \(\hat W_R^j(b)\gets
        \median_{t\in[n_t]} M_R^{t,j}(b)\)
    }

    \mbox{\(\hat{P}_{\text{hmm}}^{\,j}(\alpha\mid x_{1:\ell})\gets
    \underset{b\in H_\ell}{\sum}
    P_{\text{hmm}}(y_\ell=b\mid x_{1:\ell})\hat W_R^j(b)\)}
    \nllabel{line:generation-posterior-aggregation}
}

\(\hat{P}_{\text{hmm}}(\alpha\mid x_{1:\ell})=
\underset{j\in [n_u]}{\text{median }}\text{ }
\hat{P}_{\text{hmm}}^{\,j}(\alpha\mid x_{1:\ell})\)
\nllabel{line:generation-outer-median}

\Return{\(\hat{P}_{\text{hmm}}(\alpha\mid x_{1:\ell})\)}
\caption{\(\mathsf{compute}(x_{1:\ell})\)}
\label{alg:hmm-weighted-prefix-estimate}
\end{paperalgorithm}
\subsection{Precomputation}
\label{subsec:precomputation}
In this stage, we precompute suffix sample sets \(S^{r,j}(q,b)\) at rates \(p^j(q,b)\) such that, for every atom \(z\in Z_+(q,b)\), the invariant \(\mathbb{P}[z\in S^{r,j}(q,b)]=p^j(q,b)w(z,b)\) holds.\footnote{This expression is only for intuitive understanding. A rigorous formalization of the invariant is given in Appendix~\ref{sec:weighted-probabilistic-environment}.} Define \(\mathsf{reduce}(S,p)\) to initialize \(S'=\emptyset\), add each \(s\in S\) to \(S'\) with probability \(p\), and return \(S'\). The precomputation core corresponds to the body of one repetition \(j\) in Algorithm~\ref{alg:hmm-weighted-count-nfa}, spanning Lines~\ref{line:precompute-core-start}--\ref{line:precompute-budget}. For the terminal product states, we set every rate and weight to its exact value, \(1\), and every sample set to \(\{(\lambda,\lambda)\}\) (Line~\ref{line:precompute-terminal-init}). The core then processes the product states backward, from layer \(n-1\) to layer \(0\) (Line~\ref{line:precompute-backward-order}). Thus, when \(\mathsf{estimateAndSample}(q,j,\ell)\) is called, all rates \(p^j(q_i,b')\) and sample sets \(S^{r,j}(q_i,b')\) associated with the productive edges of \((q,b)\) have already been computed.

We now analyze \(\mathsf{estimateAndSample}\). Fix \(r\in[n_s n_t]\), a productive edge \((a,q_i,b')\), and an atom
\(z=(a,b')\cdot z'\), where \(z'=(x',y')\in Z_+(q_i,b')\).
Suppose that \(z'\) occurs in \(S^{r,j}(q_i,b')\) with weighted sampling rate \(p^j(q_i,b')w(z',b')\).
We first compute
\(\rho^j(q,b)=
\min_{(a,q_i,b')\in\mathcal E_\ell(q,b)}
\frac{p^j(q_i,b')}{\psi(a,b,b')}\) (Line~\ref{line:precompute-common-rate}). We expand \(z'\) by prepending \((a,b')\), and then reduce the expanded set with probability
\(\rho^j(q,b)\psi(a,b,b')/p^j(q_i,b')\) (Line~\ref{line:precompute-expand-reduce}). The resulting occurrence rate of \(z\) is \(\rho^j(q,b)w(z,b)\). Thus, all expanded atoms have the same rate relative to their weights.

The same atom may nevertheless be generated through several NFA successors, causing overcounting. Thus, Line~\ref{line:precompute-canonical-union} uses a canonical union to retain the atom \((a,b')\cdot(x',y')\) if and only if \(q_i=\mathsf{first}(q,a,x')\). This ensures that \(|\hat S^{r,j}(q,b)|/\rho^j(q,b)\) is an unbiased estimator of \(W(q,b)\). To concentrate this estimator, we set \(\hat W^j(q,b)\) to the median of \(n_t\) block means in Line~\ref{line:precompute-median}. The new sampling rate is
\(p^j(q,b)=\min\left(\rho^j(q,b),\hat W^j(q,b)^{-1}\right)\), which guarantees \(p^j(q,b)/\rho^j(q,b)\leq1\).

To keep the precomputation bounded, a core is marked as failed once the total number of stored samples reaches the threshold \(\theta\).
Each core is accurate with constant probability, and the \(n_u\) core repetitions amplify the success probability to \(1-\delta\).

\subsection{Constrained Generation}
Given a prefix \(x_{1:\ell}\), Algorithm~\ref{alg:hmm-weighted-prefix-estimate} estimates \(W_R(b)\) by combining the precomputed results associated with the reachable states \(R=R(x_{1:\ell})\). For every hidden state \(b\), let \(R_b^+=\{q\in R\mid\mathcal E_\ell(q,b)\neq\emptyset\}\). Within each non-failed repetition \(j\), we apply the same rate alignment, canonical union, and median-of-means construction as in precomputation: the sketches are aligned at \(\rho_R^j(b)=\min_{q\in R_b^+}p^j(q,b)\) (Line~\ref{line:generation-query-rate}), reduced accordingly, and deduplicated using \(\mathsf{first}(R,x)\). This yields \(\hat W_R^j(b)\), and the HMM-posterior identity gives
\[
\hat P_{\text{hmm}}^{\,j}(\alpha\mid x_{1:\ell})
=\sum_{b\in H_\ell}
P_{\text{hmm}}(y_\ell=b\mid x_{1:\ell})\hat W_R^j(b)
\]
(Line~\ref{line:generation-posterior-aggregation}). The algorithm defines \(\hat{P}_{\text{hmm}}(\alpha\mid x_{1:\ell})\) as the median of these estimates across the \(n_u\) repetitions (Line~\ref{line:generation-outer-median}). At every generation step, Algorithm~\ref{alg:overview} evaluates this estimate for every candidate \(a\in\Sigma\) and samples \(a\) with probability proportional to \(P_{\text{lm}}(a\mid x_{1:\ell-1})\hat P_{\text{hmm}}(\alpha\mid x_{1:\ell-1}\cdot a)\).

\section{Theoretical Guarantees}
\label{sec:theoretical-analysis}
We summarize the main theoretical guarantees of \framework{} for approximation error and time complexity below. Let
\(\mathcal V
=\{(q,b):q\in Q^k,\ b\in H_k,\ 0\leq k<n,\ Z_+(q,b)\neq\emptyset\}\) be the set of product states with at least one positive-weight accepting completion.
In precomputation, for \(j\in[n_u]\), let \(\mathcal N^j\) denote the \(j\)-th core
repetition of \(\mathsf{precompute}\) without the test against \(\theta\), so
that \(\mathcal N^j\) always completes. In \(\mathcal N^j\), define the following bad events \(\mathcal A^j,\mathcal B^j\) and the good event \(\mathcal C^j\):
\begin{align*}
\mathcal A^j
&=\bigcup_{(q,b)\in\mathcal V}
\left\{p^j(q,b)\notin(1\pm\kappa)W(q,b)^{-1}\right\}\\
\mathcal B^j
&=\left\{
\sum_{r=1}^{n_sn_t}\sum_{k=0}^{n}
\sum_{q\in Q^k}\sum_{b\in H_k}|S^{r,j}(q,b)|\geq\theta
\right\}\\
\mathcal C^j
&=(\mathcal A^j)^c\cap(\mathcal B^j)^c
\end{align*}

Intuitively, for the \(j\)-th core repetition, \(\mathcal A^j\) denotes the event that the approximation error exceeds the tolerance, and \(\mathcal B^j\) denotes the overflow event in which the total number of stored samples becomes too large. Without Line~\ref{line:precompute-budget}, this overflow would violate the time-complexity guarantee. In fact, each bad event occurs with a small constant probability, and the outer median across independent core repetitions makes the overall failure probability arbitrarily small:

\begin{restatable}{lemma}{stateestimates}
\label{lem:state-estimates}
For \(j\in[n_u]\), \(\mathbb{P}[\mathcal A^j]\leq1/16\).
\end{restatable}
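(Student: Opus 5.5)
We will prove the bound by induction over the layers of $A^u$, processed backward exactly as in $\mathcal N^j$, with a union bound over the at most $h|Q^u|$ product states in $\mathcal V$. For each $(q,b)\in\mathcal V$ at layer $\ell$, define the local bad event $\mathcal A^j(q,b)=\{\hat W^j(q,b)\notin(1\pm\kappa')W(q,b)\}$, where $\kappa'$ is chosen from $\kappa$ so that the conclusion $p^j(q,b)=\min(\rho^j(q,b),\hat W^j(q,b)^{-1})\in(1\pm\kappa)W(q,b)^{-1}$ follows. It suffices to show that, conditioned on no bad event at the layers above $\ell$, we have $\mathbb{P}[\mathcal A^j(q,b)]\le 1/(16h|Q^u|)$. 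Summing this over $\mathcal V$ then gives the claimed $1/16$.

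\textbf{Step 1 (unbiasedness).} Using the formal invariant of Appendix~\ref{sec:weighted-probabilistic-environment}, every $z\in Z_+(q,b)$ lies in $\hat S^{r,j}(q,b)$ with probability exactly $\rho^j(q,b)w(z,b)$. This holds because the expand-and-reduce step multiplies the child rate $p^j(q_i,b')w(z',b')$ by $\rho^j\psi/p^j(q_i,b')\le 1$, and the canonical union keeps each atom through exactly one successor, namely $\mathsf{first}(q,a,x')$. Therefore $\mathbb E[|\hat S^{r,j}(q,b)|/\rho^j(q,b)]=W(q,b)$. The same invariant propagates to $S^{r,j}(q,b)$ at rate $p^j(q,b)$ after the final reduction.

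\textbf{Step 2 (variance via canonical runs).} We write $|\hat S^{r,j}|=\sum_z \mathbf 1_{z\in\hat S}$. For two atoms $z,z'$, the indicators are correlated only through the shared suffix of their canonical product runs. By the tree structure of canonical children, $\mathbb P[z,z'\in\hat S]=\rho^j w(z,b)\,w(z',b)\,\rho^j/p^j(v)$, where $v$ is the convergence product state. On the good event at deeper layers, $p^j(v)\ge(1-\kappa)W(v)^{-1}$.

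We group the pairs by convergence state and by layer $k$. The weighted count of pairs converging at $v$ is at most $w$-mass times $W(v)$, so each layer contributes at most $(\rho^j W(q,b))^2/(1-\kappa)$. This yields
\[
\mathrm{Var}\bigl[|\hat S^{r,j}|/\rho^j\bigr]\le \frac{W(q,b)}{\rho^j}+\frac{(n+1)}{1-\kappa}W(q,b)^2 .
\]
Since $\rho^j\ge(1-\kappa)W(q,b)^{-1}$ up to the $\min$ over successors, the first term is also $O(W^2)$.

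\textbf{Step 3 (median of means).} Each block mean averages $n_s\ge 16(n+1)/(\kappa^2(1-\kappa))$ independent copies. By Chebyshev, each block mean is within the required relative accuracy with probability at least $3/4$. Since $n_t=\lceil 8\log(16h|Q^u|)\rceil$, a Chernoff bound on the median then gives failure probability at most $e^{-n_t/8}\le 1/(16h|Q^u|)$.

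The main obstacle is Step 2. The sample sets at different states reuse common child samples, so the $n_s n_t$ copies at $(q,b)$ are independent only across $r$, and correlations within one $r$ must be charged to convergence states. That charging requires the inductive hypothesis on $p^j(v)$, which holds only on a good event and not unconditionally. To handle this, we would first define a coupled process in which each rate is clipped to $(1\pm\kappa)W^{-1}$, bound the failure probability there, and then note that the two processes coincide on the complement of $\mathcal A^j$.
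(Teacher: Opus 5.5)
Your outer architecture matches the paper's: an oracle-clipped coupled process $\mathcal N^{*,j}$, a union bound over $\mathcal V$ with per-state failure $1/(16h|Q^u|)$, and passing from the median to the rate via $\rho^j(v)\ge(1-\kappa)/W(v)$. The gap is in Step~3, and in your claim that the $n_sn_t$ copies at $(q,b)$ are independent across $r$. They are not. Every copy at $(q,b)$ is built from descendant sample sets that were all thinned with the same random rates $p^j(v)$. Each $p^j(v)$ is computed from a median of block means over \emph{all} $n_sn_t$ copies at $v$, and $\rho^j(q,b)$ is shared by all copies as well. Clipping confines these rates to $(1\pm\kappa)W(v)^{-1}$ but does not make them deterministic. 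Nor can you condition them away, because they are functions of the very samples being reused. Hence neither ``each block mean averages $n_s$ independent copies'' nor the Chernoff bound over the $n_t$ blocks is justified. Per-block Chebyshev can be rescued, since the cross terms with $r_1\neq r_2$ turn out to have mean zero. The median step cannot: the $e^{-\Omega(n_t)}$ tail that makes $n_t=O(\log(h|Q^u|))$ suffice needs control of many blocks jointly, not of each block separately.

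The missing ingredient is Lemma~\ref{lem:weighted-block-intersection}. In $\mathcal N^{*,j}$, for every $\mathcal T\subseteq[n_t]$, $\mathbb E\bigl[\prod_{t\in\mathcal T}(M^{t,j}(u)-W(u))^2\bigr]\le\bigl(\frac{(n+1)W(u)^2}{(1-\kappa)n_s}\bigr)^{|\mathcal T|}$, so the block errors behave as if independent for all joint moments. The paper dominates $n_s^2(M^{t,j}(u)-W(u))^2$ by a potential written in the normalized variables $\hat A^{r,j}(z,v)=\mathbf 1_{z\in\hat S^{r,j}(v)}/(\rho^j(v)w(z,b))$. In it, the random factor $1/p^j(v)$ at a convergence state is replaced, via the clipping, by the deterministic $W(v)/(1-\kappa)$; these are the correction variables $\hat C^{r,j}$. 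Conditioned on $\mathcal F^j_{n-k}$, the reductions at layer $k$ are independent across disjoint blocks, so the conditional expectation of the product over $t$ factors. The tower rule then transports it layer by layer to $q_F^n$, where each factor is deterministic and at most $n_s(n+1)W(u)^2/(1-\kappa)$ by Proposition~\ref{lem:weighted-splicing}. Markov on the product gives $\mathbb P[\bigcap_{t\in F}E_t]\le(1/6)^{|F|}$ for the block-deviation events, and Fact~\ref{fact:intersection-tail} (not Chernoff) gives $(4/6)^{n_t/2}\le e^{-n_t/8}$.

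The same machinery is what makes your Step~2 rigorous. Your joint inclusion probability treats $\rho^j$ and $p^j(v)$ as constants, and it drops the common-suffix weight. The correct heuristic value is $\rho^j(q,b)^2w(z,b)w(z',b)/(p^j(v)\,w(s,y_a))$, with $s$ the shared suffix from $v=(q_a,y_a)$. That factor $1/w(s,y_a)$ is exactly what Proposition~\ref{lem:weighted-splicing} needs to charge the pairs to $W(v)$.
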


\begin{restatable}{lemma}{sampleoverflow}
\label{lem:sample-overflow}
For \(j\in[n_u]\), \(\mathbb{P}[\mathcal B^j\cap(\mathcal A^j)^c]\leq1/16\).

\end{restatable}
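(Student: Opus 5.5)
We bound the expected total number of stored samples on the event \((\mathcal A^j)^c\) and then apply Markov's inequality against the threshold \(\theta\). The difficulty is that \((\mathcal A^j)^c\) is not a stopping event, so we cannot simply condition on it and keep the invariant intact. Instead we work with a truncated count. For each product state \((q,b)\) and each \(r\), we count \(|S^{r,j}(q,b)|\) only when all rates computed so far, up to and including the rate at \((q,b)\), lie in \((1\pm\kappa)W^{-1}\). Call this indicator \(G(q,b)\); it is determined by the backward processing order.

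\textbf{Step 1 (conditional expectation of one sample set).} Fix \((q,b)\in\mathcal V\) at layer \(\ell\). We condition on the history \(\mathcal F\) consisting of all randomness used to produce the rates \(p^j(\cdot)\) of states at layers \(>\ell\), together with the sample sets used to compute \(\hat W^j(q,b)\). The rate invariant of Section~\ref{subsec:precomputation} gives \(\mathbb P[z\in S^{r,j}(q,b)\mid p^j(q,b)]=p^j(q,b)\,w(z,b)\) for \(z\in Z_+(q,b)\), where this is formalized in Appendix~\ref{sec:weighted-probabilistic-environment}. The subtlety is that \(p^j(q,b)\) depends on \(\hat S^{r,j}(q,b)\) itself through the median. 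We handle this using the final independent \(\mathsf{reduce}\) at Line~\ref{line:precompute-final-reduce}, applied to \(\hat S^{r,j}\), whose conditional mean size is \(\rho^j(q,b)W(q,b)\). Even with the dependence, we then obtain
\[
\mathbb E\bigl[|S^{r,j}(q,b)|\,\mathbf 1_{G(q,b)}\bigr]\le \mathbb E\bigl[p^j(q,b)\,\mathbf 1_{G(q,b)}\bigr]\,W(q,b)\cdot c \le c(1+\kappa).
\]
Here \(c\) is a small constant that absorbs the selection effect. The cleanest route is to bound \(|\hat S^{r,j}|\) in expectation by \(\rho^j W\), and to use \(p/\rho\le 1\) together with \(p\le(1+\kappa)W^{-1}\) on \(G\).

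\textbf{Step 2 (summation).} States with \(Z_+(q,b)=\emptyset\) have empty sample sets, because they have no productive edges, so only \(\mathcal V\) and the terminal states contribute. The terminal states contribute \(h\,n_sn_t\) in total. Summing Step 1 over \(r\in[n_sn_t]\) and over at most \(h|Q^u|\) product states gives
\[
\mathbb E\Bigl[\textstyle\sum |S^{r,j}|\cdot\mathbf 1_{(\mathcal A^j)^c}\Bigr]\le (1+\kappa)\,n_sn_t\,h|Q^u|\,(1+o(1)).
\]
Here we use \(\mathbf 1_{(\mathcal A^j)^c}\le\mathbf 1_{G(q,b)}\) for every \((q,b)\).

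\textbf{Step 3 (Markov).} We have
\[
\mathbb P[\mathcal B^j\cap(\mathcal A^j)^c]\le \mathbb P\Bigl[\textstyle\sum|S|\,\mathbf 1_{(\mathcal A^j)^c}\ge\theta\Bigr]\le \frac{(1+\kappa)n_sn_th|Q^u|}{\theta}\le\frac1{16},
\]
by the choice \(\theta=\lceil16(1+\kappa)n_sn_th|Q^u|\rceil\). Any constant from Step 1 must be absorbed here; if it exceeds \(1\), the argument will instead use the unbiasedness before the final reduce, so that \(c=1\) exactly.

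The main obstacle is Step 1: it requires justifying that the expected size of \(S^{r,j}(q,b)\) equals \(p^j(q,b)W(q,b)\) even though \(p^j\) is computed from the same sample sets. I would resolve this by applying the formal invariant from the appendix, in which each atom's membership conditioned on the rate history is an independent Bernoulli event. On the good event, this bounds the expectation by \((1+\kappa)\) per set.
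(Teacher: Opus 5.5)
Your outline matches the paper's proof: bound the expected size of each sample set on the good event by \(1+\kappa\), sum over at most \(h|Q^u|\) product states and \(n_sn_t\) indices, and apply Markov against \(\theta\). The gap is in Step~1, the only nontrivial step, which as written does not go through.

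\textbf{The conditional invariant is false.} You invoke \(\mathbb P[z\in S^{r,j}(q,b)\mid p^j(q,b)]=p^j(q,b)w(z,b)\), and later describe the appendix invariant as ``membership conditioned on the rate history is an independent Bernoulli event.'' Neither holds in general. The rate \(p^j(q,b)\) is a function of the sets \(\hat S^{r',j}(q,b)\) through the median, so conditioning on it biases membership. For the same reason, the factorization \(\mathbb E[|S^{r,j}|\mathbf 1_G]\le c\,\mathbb E[p^j\mathbf 1_G]\,W\) has no justification.

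\textbf{The ``cleanest route'' loses more than a constant.} Once you use \(p^j/\rho^j\le1\), you are left with \(\mathbb E[|\hat S^{r,j}(q,b)|]\), which scales with \(\rho^j(q,b)\). On the good event, \(\rho^j(q,b)\) is only bounded by roughly \((1+\kappa)|\mathcal E_\ell(q,b)|/W(q,b)\), because \(W(q,b)\) can be spread evenly over many productive edges. So this route loses a factor up to the number of productive edges. Since \(\theta=\lceil16(1+\kappa)n_sn_th|Q^u|\rceil\) has no slack, any \(c>1\) or \((1+o(1))\) factor breaks the \(1/16\). Step~1 must give exactly \(1+\kappa\).

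\textbf{The missing idea.} Keep the random rate inside the expectation through the normalized variables, and use only a pointwise bound on the good event.
\begin{itemize}
\item Write \(|S^{r,j}(v)|=p^j(v)\sum_{z\in Z_+(v)}w(z,b)A^{r,j}(z,v)\), where \(A^{r,j}(z,v)=\mathbf 1_{z\in S^{r,j}(v)}/(p^j(v)w(z,b))\ge0\).
\item On \((\mathcal A^j)^c\), \(p^j(v)\le(1+\kappa)W(v)^{-1}\) holds pointwise. Hence \(|S^{r,j}(v)|\mathbf 1_{(\mathcal A^j)^c}\le(1+\kappa)W(v)^{-1}\sum_{z}w(z,b)A^{r,j}(z,v)\).
\item Drop the indicator, which is valid since \(A\ge0\). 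Then apply \(\mathbb E[A^{r,j}(z,v)]=1\) from Lemma~\ref{lem:weighted-unit-expectation}. That lemma holds in \(\mathcal N^j\) because it never uses the oracle correction. The result is exactly \(1+\kappa\).
\end{itemize}
This is the paper's argument. It needs neither your truncation \(G\) nor any conditioning on \((\mathcal A^j)^c\), so the ``not a stopping event'' obstacle never arises.

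Within your own \(G\)-framework the fix is one line, because \(G\) is fixed before the final reduction at \((q,b)\):
\(\mathbb E[|S^{r,j}|\mathbf 1_G]=\mathbb E[\mathbf 1_G\,p^j\,|\hat S^{r,j}|/\rho^j]\le(1+\kappa)W^{-1}\,\mathbb E[|\hat S^{r,j}|/\rho^j]=1+\kappa\).
Here \(p^j/\rho^j\le1\) plays no role, and the unbiasedness is \(\mathbb E[|\hat S^{r,j}|/\rho^j]=W\), not \(\mathbb E[|\hat S^{r,j}|]\le\rho^jW\).

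Finally, Step~2 needs no \((1+o(1))\) factor. The terminal states are among the at most \(h|Q^u|\) product states, and each contributes \(n_sn_t\le(1+\kappa)n_sn_t\).
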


After the precomputation, for \(\ell\in[n]\), \(j\in[n_u]\), and every prefix satisfying
\(P(x_{1:\ell}\mid\alpha)>0\), define
\begin{align*}
&\xi^j(x_{1:\ell})
=
\frac{\hat P_{\text{hmm}}^{\,j}(\alpha\mid x_{1:\ell})}
{P_{\text{hmm}}(\alpha\mid x_{1:\ell})}
-1 \\
&\Delta^j
=
\sum_{\ell=1}^{n}
\sum_{\substack{x_{1:\ell}\in\Sigma^\ell\\
P(x_{1:\ell}\mid\alpha)>0}}
P(x_{1:\ell}\mid\alpha)
\left(\xi^j(x_{1:\ell})\right)^2\\
&\mathcal I^j
=
\mathcal C^j
\cap
\left\{\Delta^j\leq2n\kappa^2\right\}
\end{align*}

Intuitively, \(\xi^j(x_{1:\ell})\) is the relative error of repetition \(j\) in the HMM completion score. \(\Delta^j\) accumulates the weighted squared error across generation steps, and \(\mathcal I^j\) is the event that repetition \(j\) remains within its error budget, assuming that the good precomputation event \(\mathcal C^j\) occurs. In fact, we have:

\begin{restatable}{lemma}{integratedonecore}
\label{lem:integrated-one-core}
For \(j\in[n_u]\),
\(\mathbb{P}[(\mathcal I^j)^c]\leq3/16\).
\end{restatable}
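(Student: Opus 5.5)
We prove the bound through the decomposition
\[
(\mathcal I^j)^c\subseteq \mathcal A^j\cup\bigl(\mathcal B^j\cap(\mathcal A^j)^c\bigr)\cup\bigl(\mathcal C^j\cap\{\Delta^j>2n\kappa^2\}\bigr).
\]
By Lemma~\ref{lem:state-estimates} the first event has probability at most \(1/16\). By Lemma~\ref{lem:sample-overflow} the second also has probability at most \(1/16\). It therefore suffices to show \(\mathbb P[\mathcal C^j\cap\{\Delta^j>2n\kappa^2\}]\leq 1/16\).

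For this, we apply Markov's inequality to \(\Delta^j\mathbf 1_{(\mathcal A^j)^c}\). The goal is the bound \(\mathbb E[\Delta^j\mathbf 1_{(\mathcal A^j)^c}]\leq n\kappa^2/8\), or any bound of the form \(c\,n\kappa^2\) with \(c\leq 1/8\). Since \(\Delta^j\) is a sum of nonnegative terms weighted by \(P(x_{1:\ell}\mid\alpha)\), and these weights are deterministic (they involve only exact HMM quantities), linearity reduces the problem to a per-prefix statement. For every prefix with \(P(x_{1:\ell}\mid\alpha)>0\), we need
\[
\mathbb E\bigl[\xi^j(x_{1:\ell})^2\mathbf 1_{(\mathcal A^j)^c}\bigr]\leq\kappa^2/8 .
\]
Summing over prefixes then gives \(\sum_{\ell}\sum_{x_{1:\ell}}P(x_{1:\ell}\mid\alpha)\,\kappa^2/8=n\kappa^2/8\), because for each fixed \(\ell\) the prefix marginals sum to one.

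The per-prefix bound is a second-moment computation for the estimator in \(\mathsf{compute}\). We first note that \(\mathcal C^j\) is determined by \(\mathcal N^j\), whose samples coincide with those of the actual run when \(\mathcal B^j\) fails. On \((\mathcal A^j)^c\), every rate satisfies \(p^j(q,b)\in(1\pm\kappa)W(q,b)^{-1}\). Consequently \(\rho_R^j(b)\) is within a constant factor of \(\min_q W(q,b)^{-1}\geq W_R(b)^{-1}\). By the invariant, \(|\widetilde S_R^{r,j}(b)|/\rho_R^j(b)\) is an unbiased estimator of \(W_R(b)\): the rate alignment makes every atom appear with probability \(\rho_R^j(b)w(z,b)\), and the canonical union via \(\mathsf{first}(R,x)\) counts each atom once.

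The variance is controlled through the canonical product runs. Two atoms \(z,z'\) are dependent only through their shared canonical suffix, which starts at the convergence state. Conditioned on reaching the convergence state \((q'',b'')\), the joint inclusion probability is roughly \(\rho\,w(z)w(z')/(p^j(q'',b'')\,w(\text{common suffix}))\). Summing over convergence states in each of the \(n\) layers, and using \(p^j(q'',b'')^{-1}\leq(1+\kappa)W(q'',b'')\), should give
\[
\mathbb E\bigl[|\widetilde S|^2\bigr]/\rho^2\leq W_R(b)^2\bigl(1+O(n/(\rho W_R))\bigr).
\]
With \(\rho W_R=\Theta(1)\), the relative variance of one sample is \(O(n)\). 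Averaging over \(n_s=\Theta(n/\kappa^2)\) independent samples \(r\) within a block yields a block relative variance of at most \(\kappa^2/16\).

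Two further steps complete the per-prefix bound. The posterior aggregation over \(b\) is a convex combination with weights \(P_{\text{hmm}}(y_\ell=b\mid x_{1:\ell})W_R(b)/P_{\text{hmm}}(\alpha\mid x_{1:\ell})\), so Jensen's inequality transfers the relative-variance bound to \(\xi^j\). Next, we must pass from block means to the median over \(n_t\) blocks. The median does not preserve second moments directly, so we either bound the median's squared deviation by the sum of the squared deviations of the block means, or argue on the mean of the blocks and bound \(|\mathrm{median}-W|\leq\max_t|M^t-W|\). The cleanest option is to use \(\mathbb E[(\mathrm{med}-W)^2]\leq\mathbb E[\max_t(M^t-W)^2]\leq n_t\max_t\mathrm{Var}(M^t)\), and absorb the factor \(n_t\) by noting that \(n_s\) has enough slack: with constant \(16\) and the factor \((1-\kappa)^{-1}\) this is tight. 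If that fails, we use a median-of-means second-moment lemma.

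The main obstacle is the covariance bound over convergence states. The correlations are created by reusing the sample sets \(S^{r,j}(q,b)\) across many parents, and the bound has to be taken on \((\mathcal A^j)^c\). Conditioning on \((\mathcal A^j)^c\) breaks the clean invariant, since the rates become random. We plan to handle this by conditioning layer by layer on the rates, which are \(\sigma\)-measurable with respect to the samples above them, and using the formal invariant from Appendix~\ref{sec:weighted-probabilistic-environment}. In this way only the rate bounds from \((\mathcal A^j)^c\) are used inside the variance sum.
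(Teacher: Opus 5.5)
There is a genuine gap. Your outer skeleton is the paper's: the same split \((\mathcal I^j)^c\subseteq(\mathcal C^j)^c\cup(\mathcal C^j\cap\{\Delta^j>2n\kappa^2\})\), Markov on an integrated squared error, a per-block relative MSE of \(\kappa^2/16\), a factor-\(2\) median step, and Jensen over \(b\). The gap is at the step you flag as the main obstacle, and your remedy does not close it.

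\textbf{The rate bounds cannot be imposed through the indicator.} The second-moment bound needs \(p^j(v)\geq(1-\kappa)W(v)^{-1}\) at every convergence state and \(\rho_R^j(b)\geq(1-\kappa)/W_R(b)\). These are used pointwise, to dominate \(\hat A\hat A-\hat A\) by a correction term whose expectation is then transported exactly, layer by layer, to \(q_F^n\).
\begin{itemize}
\item \(\mathbf 1_{(\mathcal A^j)^c}\) cannot be carried through that argument. Whether the rate at a layer-\(k\) state is good depends on \(\hat W^j(v)\), hence on the layer-\(k\) reduce variables integrated out when passing from layer \(k\) to \(k+1\). So the indicator is not fixed by the conditioning.
\item Conditioning on the rates instead conditions on functions of the very sample sets whose inclusion events you are averaging. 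Then the identities \(\mathbb E[\hat A\mid\cdot]=A(\text{child})\) fail.
\item You also cannot drop the indicator. Off the good event the dominating expression can be negative: the discrepancy is \((p^jW/(1-\kappa)-1)\hat A\hat A\) at the convergence state. Without the lower bound the second moment is uncontrolled.
\end{itemize}

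\textbf{The missing idea} is the paper's oracle-corrected execution \(\mathcal N^{*,j}\). Any tentative rate outside \((1\pm\kappa)W(v)^{-1}\) is replaced by \((1-\kappa)W(v)^{-1}\). This is still a legal reduction probability because \(\rho^j(v)\geq(1-\kappa)/W(v)\). In \(\mathcal N^{*,j}\) the lower bound holds deterministically, so the pointwise domination holds everywhere. The exact transport identities then give \(\mathbb E[\Delta_*^j]\leq(n-1)\kappa^2/8\) with no indicator at all. Coupling the three executions through shared uniforms gives the indicator domination \(\mathbf 1_{\mathcal C^j}\Delta^j=\mathbf 1_{\mathcal C^j}\Delta_*^j\leq\Delta_*^j\), to which Markov applies.

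Relatedly, your indicator must be \(\mathbf 1_{\mathcal C^j}\), not \(\mathbf 1_{(\mathcal A^j)^c}\). On \(\mathcal B^j\cap(\mathcal A^j)^c\) the budgeted repetition fails, every \(\hat P_{\text{hmm}}^{\,j}\) is \(0\), and \(\Delta^j=n-1\). So nothing prevents \(\mathbb E[\Delta^j\mathbf 1_{(\mathcal A^j)^c}]\) from exceeding \(n\kappa^2/8\).

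\textbf{The median step.} Your primary plan fails here too. Bounding \((\operatorname{median}_t M^t-W)^2\) by \(\max_t(M^t-W)^2\leq\sum_t(M^t-W)^2\) costs a factor \(n_t=\lceil8\log(16h|Q^u|)\rceil\). \(n_s\) has no slack for this: it is calibrated so that the per-block bound \((n+1)/((1-\kappa)n_s)\) is exactly at most \(\kappa^2/16\). What you need is Fact~\ref{fact:median-square}: a median \(m\) of \(z_1,\ldots,z_N\) satisfies \((m-z)^2\leq\frac2N\sum_i(z_i-z)^2\). This costs only a factor \(2\) and yields \(\kappa^2/8\), so your fallback should be exactly this.

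\textbf{The constants.} For the same reason, an \(O(n)\) per-sample relative variance is not enough; you need the sharp \((n+1)/(1-\kappa)\). The paper gets it as follows.
\begin{itemize}
\item Pairs are grouped by convergence state, giving at most \(n-\ell+1\) classes. Each contributes at most \(W^2\) by Proposition~\ref{lem:weighted-splicing}, including in the compute phase where the two canonical runs start at different states of \(R\).
\item The factor \(1/(1-\kappa)\) comes from \(p^{-1}\leq W/(1-\kappa)\), not \((1+\kappa)W\) as you wrote.
\item It works with the normalized variables \(\mathbf 1_{z\in S}/(p^jw)\) rather than raw inclusion probabilities, because the rates between layer \(\ell\) and the convergence state are themselves random.
\end{itemize}
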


Together, these key lemmas lead to the following guarantees for approximation error and time complexity:

\begin{restatable}{theorem}{prefixaccuracy}
\label{thm:prefix-accuracy}
For any constraint \(\alpha\), \(n\in\mathbb N^+\), \(0<\varepsilon,\delta\leq1\), and \(x_{1:n}\in\Sigma^n\)
satisfying \(P_{\mathrm{hmm}}(x_{1:n})>0\), with probability at least
\(1-\delta\), the following holds simultaneously for all \(\ell\in[n]\):
\[
\begin{aligned}
\hat P_{\mathrm{hmm}}(\alpha\mid x_{1:\ell})
&\in (1\pm\varepsilon) P_{\mathrm{hmm}}(\alpha\mid x_{1:\ell})
\end{aligned}
\]
\end{restatable}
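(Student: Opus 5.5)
The plan is a per-repetition accuracy statement followed by median amplification over the \(n_u\) independent cores. Fix \(x_{1:n}\) with \(P_{\mathrm{hmm}}(x_{1:n})>0\). Only prefixes of this fixed sequence matter, so for each core \(j\) I will show that with probability at least \(1-1/8\) the estimate \(\hat P^{\,j}_{\mathrm{hmm}}(\alpha\mid x_{1:\ell})\) lies in \((1\pm\varepsilon)P_{\mathrm{hmm}}(\alpha\mid x_{1:\ell})\) for all \(\ell\in[n]\) at once. The cores \(\mathcal N^j\) use independent randomness. A Chernoff bound then shows that, with probability at least \(1-e^{-n_u/8}\geq 1-\delta\) (using \(n_u=\lceil 8\log\delta^{-1}\rceil\), and possibly adjusting constants), more than half of the cores are good. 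When more than half the values lie in an interval, their median lies in that interval too, so the outer median at Line~\ref{line:generation-outer-median} is accurate for every \(\ell\) simultaneously.

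Degenerate cases are handled separately. If \(\ell=n\), the algorithm returns \(\mathbf 1_{q_F^n\in R}\), which equals \(P_{\mathrm{hmm}}(\alpha\mid x_{1:n})\) exactly. If \(R=\emptyset\), or \(R_b^+=\emptyset\) for every \(b\) in the posterior support, then the true value is \(0\) and so is the estimate. For the main case I condition on the good event \((\mathcal A^j)^c\cap(\mathcal B^j)^c=\mathcal C^j\); by Lemmas~\ref{lem:state-estimates} and~\ref{lem:sample-overflow}, it fails with probability at most \(1/8\). On \((\mathcal B^j)^c\) the test at Line~\ref{line:precompute-budget} never fires, so \(\mathsf{fail}^j=0\) and the run coincides with \(\mathcal N^j\). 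On \((\mathcal A^j)^c\) every rate satisfies \(p^j(q,b)\in(1\pm\kappa)W(q,b)^{-1}\). For each \(b\), the query step repeats one level of \(\mathsf{estimateAndSample}\): the canonical union with \(\mathsf{first}(R,\cdot)\) makes \(|\widetilde S^{r,j}_R(b)|/\rho^j_R(b)\) an unbiased estimator of \(W_R(b)\). The variance is controlled exactly as in the proof of Lemma~\ref{lem:state-estimates}, via the canonical-run and convergence-state dependence structure, giving a relative variance of \(O((n+1)/n_s)\). Chebyshev and the median of \(n_t\) block means then give \(\hat W^j_R(b)\in(1\pm\kappa)W_R(b)\), except with probability \(e^{-\Omega(n_t)}\leq 1/(16h|Q^u|)\).

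Next comes aggregation. Take a union bound over \(b\in H_\ell\) and over the at most \(n\) prefixes. The union over prefixes is absorbed by the \(|Q^u|\) factor in \(n_t\), since distinct layers \(\ell\) contribute disjoint state sets; if this is too tight I would instead charge the failure to the \(\Delta^j\) bound of Lemma~\ref{lem:integrated-one-core}, whose event \(\mathcal I^j\) already packages this per-core error with probability \(13/16\). The posterior identity then gives \(\hat P^{\,j}_{\mathrm{hmm}}\in(1\pm\kappa)P_{\mathrm{hmm}}\), a convex combination with nonnegative weights. Since \(\kappa=\varepsilon/(6+\varepsilon)<\varepsilon\), this yields the \((1\pm\varepsilon)\) bound.

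The main obstacle is the query-time concentration. The sets \(S^{r,j}(q,b)\) for different \(q\in R\) share randomness through common descendants, so the count \(|\widetilde S^{r,j}_R(b)|\) is a sum of dependent indicators. Bounding its second moment needs the convergence-state argument: two atoms are correlated only through the canonical-run suffix they share, and the precomputed rates, being \((1\pm\kappa)\)-accurate at each such state, bound the covariance. I would first try to reuse the precomputation variance bound verbatim, treating \(R\) as a virtual product state whose productive edges lead to \(R_b^+\) with weight one.
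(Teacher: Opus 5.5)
Your outer architecture matches the paper's. You bound the per-core event $\mathcal C^j\cap\bigcup_{\ell=1}^{n-1}E_\ell^j$, take a union bound over $\ell\in[n-1]$ and $b\in H_\ell$ paid for by the $\log(16h|Q^u|)$ in $n_t$, and finish with Hoeffding plus the outer median over the $n_u$ independent cores. Your per-core failure probability is really $1/8+1/16=3/16$ rather than $1/8$, but $e^{-2n_u(5/16)^2}\leq e^{-n_u/8}\leq\delta$ still holds.

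\textbf{Conditioning on $\mathcal C^j$.} The gap is in your main step. You cannot condition on $\mathcal C^j$ and then invoke unbiasedness of $|\widetilde S^{r,j}_R(b)|/\rho^j_R(b)$ and a convergence-state variance bound. Those facts are tower-rule identities along the layer-by-layer filtration and hold only for unconditional expectations. By contrast, $(\mathcal A^j)^c$ is determined by the whole precomputation: each rate is computed from block means of the very sample sets reused at query time, so conditioning on it biases the samples. Conversely, the variance bound genuinely needs accurate rates, namely $p^j(v)W(v)\geq1-\kappa$ at every convergence state $v$ and $\rho^j_R(b)\geq(1-\kappa)/W_R(b)$. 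An unconditional analysis of $\mathcal N^j$ does not supply these.

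The missing idea is the paper's analysis-only oracle-corrected execution $\mathcal N^{*,j}$. It overwrites any tentative rate outside $(1\pm\kappa)W(q,b)^{-1}$ by $(1-\kappa)W(q,b)^{-1}$. In $\mathcal N^{*,j}$ the rate bounds hold deterministically, so the unconditional moment computation goes through. Lemma~\ref{lem:core-success-coupling} then couples the executions so that on $\mathcal C^j$ the real and corrected estimates coincide for all prefixes. This gives $\mathbb P[\mathcal C^j\cap E_\ell^j]\leq\mathbb P_{\mathcal N^{*,j}}[\hat W^j_{R,*}(b)\notin(1\pm\kappa)^{-1}W_R(b)\text{ for some }b]$ without any conditioning. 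The same device is what makes Lemma~\ref{lem:state-estimates} provable, so appealing to ``the proof of Lemma~\ref{lem:state-estimates}'' already presupposes it.

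\textbf{Dependence across blocks.} ``Chebyshev and the median of $n_t$ block means'' tacitly assumes the blocks are independent, but they are not. All blocks at a state are thinned at the common rate $p^j(q,b)$, which is itself computed from a median over all blocks, and $\rho^j_R(b)$ inherits this dependence. The paper replaces independence by the joint moment bound of Lemma~\ref{lem:weighted-block-intersection}: $\mathbb E[\prod_{t\in\mathcal T}(M_R^{t,j}(b)-W_R(b))^2]\leq\bigl((n+1)W_R(b)^2/((1-\kappa)n_s)\bigr)^{|\mathcal T|}$ for every $\mathcal T\subseteq[n_t]$. This is proved by transporting product potentials layer by layer, using only conditional independence of the current reductions across blocks. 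The paper then applies Markov's inequality and Fact~\ref{fact:intersection-tail}.

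\textbf{Your fallback.} Charging the failure to $\Delta^j$ cannot work. $\Delta^j$ is a $P(\cdot\mid\alpha)$-weighted average of squared relative errors. It therefore gives no pointwise $(1\pm\kappa)$ control at the prefixes of a fixed $x_{1:n}$, which may carry negligible or even zero weight under $P(\cdot\mid\alpha)$.
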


\begin{restatable}{theorem}{distributionpreservation}
\label{thm:distribution-preservation}
For any constraint \(\alpha\), \(n\in\mathbb N^+\), and \(0<\varepsilon,\delta\leq1\), assume that for every \(\ell\in[n]\), \(\Gamma(x_{1:\ell-1})>0\) whenever \(P(x_{1:\ell-1}\mid\alpha)>0\). With probability at least
\(1-\delta\), the Total Variation (TV) distance is bounded as below:
\begin{equation*}
D_{\mathrm{TV}}\!\left(
\hat P(\cdot\mid\alpha),P(\cdot\mid\alpha)
\right)
\leq n\varepsilon
\label{eq:distribution-preservation}
\end{equation*}
\end{restatable}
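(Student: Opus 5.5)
The plan is to boost Lemma~\ref{lem:integrated-one-core} across the $n_u$ independent core repetitions, so that the outer median inherits a weighted squared-error bound. Then I will convert that bound into a total variation bound through the autoregressive factorization, step by step.

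\emph{Step 1 (majority of good cores).} The events $\mathcal I^j$ are independent across $j$, since the cores use independent randomness, and each fails with probability at most $3/16$. A Chernoff/Hoeffding bound with $n_u=\lceil 8\log(\delta^{-1})\rceil$ will show that, with probability at least $1-\delta$, the set $G=\{j:\mathcal I^j\text{ holds}\}$ satisfies $|G|\geq 3n_u/4$. The gap $1/4-3/16=1/16$ may need slightly more care with the constant. If $n_u$ is too small for the $1/4$ threshold, I will fall back to the majority threshold $|G|>n_u/2$ and adjust the constant below. From here on I work on this event.

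\emph{Step 2 (median error via good cores).} Let $\xi(x_{1:\ell})=\hat P_{\text{hmm}}(\alpha\mid x_{1:\ell})/P_{\text{hmm}}(\alpha\mid x_{1:\ell})-1$; this equals the median of the $\xi^j(x_{1:\ell})$, since the map is monotone. Suppose $\xi(x_{1:\ell})\geq0$. At least $n_u/2$ repetitions have $\xi^j\geq\xi$, and at most $n_u/4$ of them lie outside $G$. Hence at least $n_u/4$ good repetitions satisfy $(\xi^j)^2\geq\xi^2$, and the case $\xi<0$ is symmetric. This gives $\xi(x_{1:\ell})^2\leq \frac{4}{n_u}\sum_{j\in G}\xi^j(x_{1:\ell})^2$. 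Weighting by $P(x_{1:\ell}\mid\alpha)$ and summing, the analogue $\Delta$ for the median satisfies $\Delta\leq\frac4{n_u}\sum_{j\in G}\Delta^j\leq 8n\kappa^2$.

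\emph{Step 3 (per-step TV).} Fix a prefix $x_{1:\ell-1}$ with $P(x_{1:\ell-1}\mid\alpha)>0$, and write $\xi_a=\xi(x_{1:\ell-1}\cdot a)$ with $\bar\xi=\mathbb E_{a\sim P(\cdot\mid x_{1:\ell-1},\alpha)}[\xi_a]$. Then $\hat P(a\mid\cdot)=P(a\mid\cdot)(1+\xi_a)/(1+\bar\xi)$, because $\hat\Gamma/\Gamma=1+\bar\xi$. This step uses the assumption $\Gamma>0$, and tokens with $P_{\text{hmm}}(\alpha\mid\cdot)=0$ also get $\hat P_{\text{hmm}}=0$ by the early return in Algorithm~\ref{alg:hmm-weighted-prefix-estimate}.

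So the step-$\ell$ TV equals $\tfrac12\mathbb E|\xi_a-\bar\xi|/(1+\bar\xi)$. I claim this is at most $2\sqrt{\mathbb E\,\xi_a^2}$, by two cases.
\begin{itemize}[nosep]
\item If $\bar\xi\geq-\tfrac12$, the TV is at most $\mathbb E|\xi_a-\bar\xi|\leq\sqrt{\mathrm{Var}(\xi_a)}\leq\sqrt{\mathbb E\,\xi_a^2}$.
\item If $\bar\xi<-\tfrac12$ (including the degenerate case $\hat\Gamma=0$), then $\mathbb E\,\xi_a^2\geq\bar\xi^2>\tfrac14$, and the trivial bound $\mathrm{TV}\leq1$ suffices.
\end{itemize}

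\emph{Step 4 (chaining).} By the standard coupling/hybrid argument for autoregressive products, $D_{\mathrm{TV}}(\hat P(\cdot\mid\alpha),P(\cdot\mid\alpha))$ is at most $\sum_{\ell=1}^n\mathbb E_{x_{1:\ell-1}\sim P(\cdot\mid\alpha)}[\mathrm{TV}_\ell]$. Applying Step 3, then Jensen, then Cauchy--Schwarz, this is at most $2\sum_\ell\sqrt{\mathbb E_{x_{1:\ell}\sim P(\cdot\mid\alpha)}\xi^2}\leq2\sqrt{n\Delta}\leq2\sqrt8\,n\kappa$. Since $\kappa=\varepsilon/(6+\varepsilon)\leq\varepsilon/6$ and $2\sqrt8<6$, the bound is at most $n\varepsilon$.

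\emph{Main obstacle.} I expect the main obstacle to be Step 2. A naive median argument loses a factor $|G|$, and the counting argument above needs the bad fraction to be strictly below $1/2$ with room to spare. Reconciling the $3/16$ failure rate with $n_u=\lceil8\log\delta^{-1}\rceil$ may require a careful Hoeffding constant. Failing that, I would use the majority version, which gives $\xi^2\leq\frac{1}{|G|-n_u/2}\sum_{j\in G}(\xi^j)^2$; that version only works if the slack in $2\sqrt8$ versus $6$ absorbs the resulting constant.
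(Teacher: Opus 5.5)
Your overall architecture is the paper's: Hoeffding over the independent events $\mathcal I^j$, a median-under-contamination bound on $\Delta$, the hybrid argument, and Cauchy--Schwarz. But the proof does not close with the algorithm's parameters, and it fails exactly where you suspected.

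\textbf{Step~1 is false as stated.} With $n_u=\lceil 8\log(\delta^{-1})\rceil$ and per-core failure probability bounded only by $3/16$, a deviation of $1/16$ costs $e^{-2n_u/256}=e^{-n_u/128}$ under Hoeffding, i.e.\ roughly $\delta^{1/16}$. Even the exact binomial tail at failure rate $3/16$ is only of order $\delta^{0.1}$. What $n_u$ actually supports at confidence $1-\delta$ is a deviation of $1/4$, i.e.\ $|G|>9n_u/16$.

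\textbf{The fallback does not rescue it.} With $|G|>9n_u/16$, your fallback inequality gives $\xi^2\le\frac{16}{n_u}\sum_{j\in G}(\xi^j)^2$ (this is Fact~\ref{fact:median-under-contamination}), hence only $\Delta\le 32n\kappa^2$. Feeding this into your Step~3 gives $D_{\mathrm{TV}}\le 2\sqrt{32}\,n\kappa=8\sqrt2\,n\kappa$. Since $\kappa=\varepsilon/(6+\varepsilon)\ge\varepsilon/7$, this bound exceeds $n\varepsilon$; the slack between $2\sqrt8$ and $6$ cannot absorb a factor $4$ in $\Delta$. Tuning your case split does not help either. Balancing the threshold at $\bar\xi=-2/3$ improves the factor $2$ only to $3/2$. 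With $\Delta\le32n\kappa^2$ you need a per-step constant of at most $6/\sqrt{32}\approx1.06$, which a variance-based split cannot reach.

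\textbf{The missing idea} is a per-step bound with constant $1$ and no $1/(1+\bar\xi)$ denominator. Write $P(a),\hat P(a)$ for the step-$\ell$ conditionals and route the triangle inequality through the unnormalized weights $P(a)(1+\xi_a)$:
\[
\sum_a\bigl|\hat P(a)-P(a)\bigr|\le\sum_a P(a)(1+\xi_a)\frac{|\bar\xi|}{1+\bar\xi}+\sum_a P(a)|\xi_a|=|\bar\xi|+\mathbb E|\xi_a|\le2\,\mathbb E|\xi_a|.
\]
This gives $\mathrm{TV}_\ell\le\mathbb E|\xi_a|$, which is Lemma~\ref{lem:normalized-reweighting}. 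The case $\hat\Gamma=0$ is trivial, since then every $\xi_a=-1$ on the support and $\mathbb E|\xi_a|=1$. Combining this with Lemma~\ref{lem:sequential-tv} and weighted Cauchy--Schwarz gives $D_{\mathrm{TV}}\le\sqrt{n\Delta}\le4\sqrt2\,n\kappa<6n\kappa\le n\varepsilon$. That is the paper's argument.

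\textbf{A smaller point.} The implication $P_{\text{hmm}}(\alpha\mid x_{1:\ell})=0\Rightarrow\hat P_{\text{hmm}}(\alpha\mid x_{1:\ell})=0$ does not follow from the early return alone. When $P_{\text{hmm}}(x_{1:\ell})>0$ and $R\neq\emptyset$, you also need that $W_R(b)=0$ forces $R_b^+=\emptyset$ for every $b$ with positive posterior. Then every repetition, and hence the median, returns $0$.
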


\begin{restatable}{theorem}{timecomplexity}
\label{thm:timecomplexity}
Given any constraint \(\alpha\) represented by an NFA with \(m\) states, a maximum sequence length \(n\), an HMM with \(h\) hidden states, an alphabet of size \(|\Sigma|\), and parameters \((\varepsilon,\delta)\),
Algorithm~\ref{alg:overview} has time complexity \(O\!\left(
|\Sigma|n^2m^3h^2\varepsilon^{-2}
\log(nmh)\log(\delta^{-1})
\right)\).
\end{restatable}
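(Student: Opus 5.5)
The statement to prove is the time-complexity bound, Theorem~\ref{thm:timecomplexity}. I will bound the cost of $\mathsf{precompute}$ and of the $n|\Sigma|$ calls to $\mathsf{compute}$ separately, and show that generation dominates.

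Write $|Q^u|\le nm+1=O(nm)$. The parameters then satisfy
\[
n_t=O(\log(nmh)),\qquad n_u=O(\log\delta^{-1}),\qquad n_s=O(n\varepsilon^{-2}),
\]
since $\kappa=\Theta(\varepsilon)$ for $\varepsilon\le1$. The threshold is therefore $\theta=O(n_sn_th|Q^u|)=O(n^2mh\varepsilon^{-2}\log(nmh))$. The budget test on Line~\ref{line:precompute-budget-test} deterministically caps the total number of stored samples in any core at $\theta$, even in failed cores; this is why the test exists, and it lets me avoid probabilistic arguments here. The running sum can be maintained incrementally, so the test costs $O(1)$ amortized.

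For precomputation I charge the work in $\mathsf{estimateAndSample}(q,j,\ell)$ to edges and to samples.
\begin{itemize}
\item For each $b$, computing $\rho^j(q,b)$ touches at most $|\mathcal E_\ell(q,b)|\le m\,|\Sigma|\,h$ productive edges.
\item Each sample of $S^{r,j}(q_i,b')$ is expanded along edges $(a,q_i,b')$. A suffix sample fixes its own first token and hidden state, so it is reached from $(q,b)$ through at most $O(1)$ labels per predecessor NFA transition. The expansion cost is thus, over all $(q,b)$ in layer $\ell$, at most $m\cdot h$ (predecessor pairs) times the samples stored in layer $\ell+1$.
\item The $\mathsf{first}(q,a,x')$ test needs reachability of $x'$ from each candidate successor. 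I plan to precompute, for each stored sample, its bitset of states in the backward-reachable set per layer, in $O(m^2)$ per sample by a backward simulation. The canonical-union check is then $O(m)$.
\end{itemize}
Summing over layers, the per-core cost is $O(\theta\cdot m^2h)$ for sample work plus $O(n_sn_t\cdot nm\cdot h\cdot m|\Sigma|h)$ for edge work. With $n_u$ cores this gives $O(|\Sigma|n^2m^3h^2\varepsilon^{-2}\log(nmh)\log\delta^{-1})$. Getting the exponent of $m$ right requires counting carefully that each stored sample is re-expanded at most $m$ times and tested in $O(m)$.

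For generation, each call $\mathsf{compute}(x_{1:\ell})$ first updates $R(x_{1:\ell})$ from $R(x_{1:\ell-1})$ in $O(m^2)$. It updates the HMM forward posterior incrementally in $O(h^2)$. Then, for each non-failed $j$ and each $b$, it reduces and canonically unions the samples $S^{r,j}(q,b)$ for $q\in R_b^+$, each at cost $O(m)$ using the stored bitsets. The total number of such samples, summed over $q$, $b$ and $r$ within layer $\ell$, is at most $\theta$. One call therefore costs $O(n_u(\theta m + n_tn_s h m))$. There are $n|\Sigma|$ calls, giving $O(|\Sigma|n\cdot\log\delta^{-1}\cdot n^2mh\varepsilon^{-2}\log(nmh)\cdot m)$.

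\textbf{Main obstacle.} This generation bound has an extra factor $n$ and only $m^2h$, so it does not match the claimed $n^2m^3h^2$ as stated. The obstacle I expect is refining the per-layer sample count. I will argue that the relevant quantity is not the global $\theta$ but the layer-$\ell$ share, which is $O(n_sn_thm)$ because each $\mathbf E|S^{r,j}(q,b)|=p^jW\le1$. The deterministic cap, however, only controls the total. My first attempt is to add a per-layer budget test that is implied by the same analysis, or equivalently to note that the test on Line~\ref{line:precompute-budget-test} can be applied layerwise with threshold $\theta/n$ at no loss. With that, each call costs $O(n_u\cdot n_sn_thm\cdot m)$. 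The $n|\Sigma|$ calls then total $O(|\Sigma|n^2m^2h\varepsilon^{-2}\log(nmh)\log\delta^{-1})$ plus an $O(n|\Sigma|n_uh^2)$ posterior term. Both are within the bound, so precomputation dominates and the theorem follows.
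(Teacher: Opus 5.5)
Your precomputation accounting is sound. The gap is in the generation phase. Charging the global cap $\theta$ to every one of the $n|\Sigma|$ calls gives $O(|\Sigma|n^3m^2h\varepsilon^{-2}\log(nmh)\log\delta^{-1})$, which exceeds the claimed bound whenever $n>mh$. Your remedy does not close this.
A layerwise budget test with threshold $\theta/n$ changes Algorithm~\ref{alg:overview}, so it cannot prove the theorem as stated. It is also not ``at no loss.'' The Markov argument of Lemma~\ref{lem:sample-overflow} would then have to be union-bounded over $n$ layers, which multiplies the overflow probability by up to a factor $n$. If you instead scale the per-layer thresholds up to compensate, they sum to order $n\theta$, and the factor $n$ returns in the generation cost. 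Finally, $\mathbb E|S^{r,j}(q,b)|\le 1+\kappa$ holds only in expectation on $(\mathcal A^j)^c$, so it cannot yield a worst-case running time.

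The missing idea is an amortization across generation steps, which needs no new test:
\begin{itemize}
\item The call $\mathsf{compute}(x_{1:\ell})$ reads only the layer-$\ell$ sets $S^{r,j}(q,b)$ with $q\in Q^\ell$.
\item Algorithm~\ref{alg:overview} issues exactly $|\Sigma|$ calls on length-$\ell$ prefixes, all at step $\ell$.
\item Hence, in a non-failed repetition, each stored sample is reduced and tested against $\mathsf{first}(R,\cdot)$ at most $|\Sigma|$ times over the whole generation.
\item The per-layer sample counts sum to less than $\theta$ deterministically. So all calls for a fixed $j$ cost $O(|\Sigma|(m\theta+n_sn_th|Q^u|))=O(|\Sigma|m\theta)$, where the second term is loop overhead. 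Failed repetitions cost $O(1)$ per call.
\item Summing over $j$ gives $O(|\Sigma|n^2m^2h\varepsilon^{-2}\log(nmh)\log\delta^{-1})$, plus $O(n|\Sigma|(h^2+m^2))$ for posteriors and reachable sets. Precomputation dominates this, which is how the paper concludes.
\end{itemize}
A smaller correction in your precomputation count: a stored child sample is expanded up to $|\Sigma|mh$ times, once per incoming transition $(q,a,q_i)$ and predecessor hidden state $b$. The prepended token $a$ is not fixed by the child sample. Restoring this factor gives exactly the paper's dominant term $O(|\Sigma|\theta m^2h)$ per repetition, so your final precomputation bound survives.
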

We provide complete proofs in
Appendices~\ref{sec:whole_proof} and~\ref{sec:time_complexity}.

\section{Evaluation}
\label{sec:evaluation}

We design the evaluation to answer the following research questions (\textbf{RQs}): 

\begin{enumerate}[label=\textbf{RQ\arabic*.},leftmargin=*]
    \item \textbf{Scalability.} Can \framework{} efficiently scale to large Regex constraints $\alpha$?
    
    \item \textbf{Approximation Accuracy.}
    Can \framework{} accurately compute constrained probabilities?
    
    \item \textbf{Generation Quality.} Can \framework{} generate high-quality outputs?
\end{enumerate}

\paragraph{Dataset and Constraints}

\begin{table*}[ht!]
\centering
\small
\setlength{\tabcolsep}{3pt}
\begin{tabular}{lllr}
\toprule
Family & Constraint & NFA states & DFA states \\
\midrule
\texttt{kth\_last}
& The $k_1$-th token from the end is one of the tracked keywords.
& $\Theta(k_1)$ & $\Theta(2^{k_1})$ \\
\texttt{repeat\_after\_k}
& Some tracked keyword repeats $k_1$ tokens apart.
& $\Theta(k_1k_2)$ & $\Theta(k_2^{k_1})$ \\
\texttt{at\_least\_k}
& Some tracked keyword appears at least $k_1$ times.
& $\Theta(k_1k_2)$ & $\Theta(k_1^{k_2})$ \\
\texttt{exactly\_once}
& Some tracked keyword occurs exactly once.
& $\Theta(k_2)$ & $\Theta(3^{k_2})$ \\
\texttt{left\_not\_right}
& Some tracked keyword occurs on the left of the separator but not on the right.
& $\Theta(k_2)$ & $\Theta(2^{k_2})$ \\
\texttt{right\_not\_left}
& Some tracked keyword occurs on the right of the separator but not on the left.
& $\Theta(k_2)$ & $\Theta(2^{k_2})$ \\
\texttt{sets\_differ}
& Some tracked keyword occurs on exactly one side of the separator.
& $\Theta(k_2)$ & $\Theta(3^{k_2})$ \\
\texttt{both\_sides}
& Some tracked keyword occurs on both sides of the separator.
& $\Theta(k_2)$ & $\Theta(2^{k_2})$ \\
\texttt{even\_count}
& Some tracked keyword occurs a positive even number of times.
& $\Theta(k_2)$ & $\Theta(3^{k_2})$ \\
\texttt{odd\_count}
& Some tracked keyword occurs an odd number of times.
& $\Theta(k_2)$ & $\Theta(2^{k_2})$ \\
\bottomrule
\end{tabular}
\caption{The ten constraint families and their minimum numbers of NFA and DFA states.}
\label{tab:constraint-families}
\end{table*}

Following the experimental setups of prior constrained-generation work~\citep{tractablecontrol,ctrlg,dang2026mitigating}, we evaluate \framework{} on
CommonGen~\citep{commongen}, in which each instance includes three to five key concepts as input and the goal is to generate a sequence that incorporates these concepts. Table~\ref{tab:constraint-families} presents 10 Regex constraint families that are difficult for DFAs, with difficulty controlled by the parameters $k_1$ and $k_2$ (where $k_2$ is the number of tracked keywords). To increase the number of tracked keywords $k_2$, we augment CommonGen to include up to 20 tracked keywords by merging distinct raw instances. We randomly sample 500 instances from the constraint families, difficulty parameters, and tracked keywords, with the number of NFA states limited to 50. We also require each output sequence to contain between 1 and 64 tokens.

\paragraph{Models and Baselines.}
We evaluate the proprietary model GPT-5.6 Luna via the OpenAI API, as well as the open-source models Qwen3.5-2B and Gemma-4-E2B~\citep{qwen3.5,gemma4}, as LM baselines. We distill a 128-state HMM for each open-source model.
We compare our method against the distribution-agnostic framework XGrammar~\citep{xgrammar} and the distribution-aware method Ctrl-G~\citep{ctrlg}. For \framework{}, we set the theoretical parameters to $(\varepsilon,\delta)=(0.1,0.1)$. For Ctrl-G, we convert the Regex constraints to optimal-size DFAs. We conduct the experiments on one NVIDIA L40S (48 GB) GPU and 10 Intel Xeon Gold 6448Y CPU cores, with a timeout of 256 seconds.

\paragraph{Metrics.}
We use the following evaluation metrics. For \textbf{RQ1}, we report the average elapsed time and the number of successful constraint-satisfying generations completed without timing out or encountering errors. For \textbf{RQ2}, for each of the 500 instances, we attempt to compute the exact value of $P_{\text{hmm}}(\alpha\mid x_{1:\ell})$ at every prefix $1\leq \ell\leq n$ using a brute-force method. We then report the maximum relative error $\max_{1\leq \ell \leq n}\left|\frac{\hat{P}_{\text{hmm}}(\alpha\mid x_{1:\ell})}{P_{\text{hmm}}(\alpha\mid x_{1:\ell})}-1\right|$ to empirically validate the bound in Theorem~\ref{thm:prefix-accuracy}. For \textbf{RQ3}, we use GPT-5.6 Luna as a judge to grade the quality of the generated text on a scale from 1.0 to 4.0, following the prompt template and grading rubric in Appendix~\ref{sec:prompt-templates}. The overall results are shown in Table~\ref{tab:overall-results}.
\begin{table}[t!]
\centering
\small
\begin{tabular*}{\columnwidth}{@{\extracolsep{\fill}}lccc@{}}
\toprule
\textbf{Method} &
\textbf{Success} &
\textbf{\shortstack{Time (s)}} &
\textbf{Quality} \\
\midrule
\textbf{LLM-only} & & & \\
\enspace GPT-5.6 Luna & 306 (61.2\%) & 1.9 & 2.93 \\
\enspace Qwen3.5-2B & 190 (38.0\%) & 1.0 & 1.89 \\
\enspace Gemma-4-E2B & 227 (45.4\%) & 1.0 & 2.70 \\
\addlinespace
\textbf{XGrammar} & & & \\
\enspace Qwen3.5-2B & 500 (100.0\%) & 2.8 & 1.75 \\
\enspace Gemma-4-E2B & 500 (100.0\%) & 2.5 & 1.84 \\
\addlinespace
\textbf{Ctrl-G} & & & \\
\enspace Qwen3.5-2B & 226 (45.2\%) & 152.6 & 2.58 \\
\enspace Gemma-4-E2B & 217 (43.4\%) & 155.6 & 2.71 \\
\addlinespace
\framework{} \textbf{(Ours)} & & & \\
\enspace Qwen3.5-2B & 500 (100.0\%) & 27.7 & 2.44 \\
\enspace Gemma-4-E2B & 500 (100.0\%) & 28.5 & 2.48 \\
\bottomrule
\end{tabular*}
\caption{Overall results of successful constraint-satisfaction, average elapsed time, and quality. The quality score is computed over instances with constraint-satisfying outputs.}
\label{tab:overall-results}
\end{table}

\begin{figure*}[t!]
    \centering
    \includegraphics[width=0.9\textwidth]{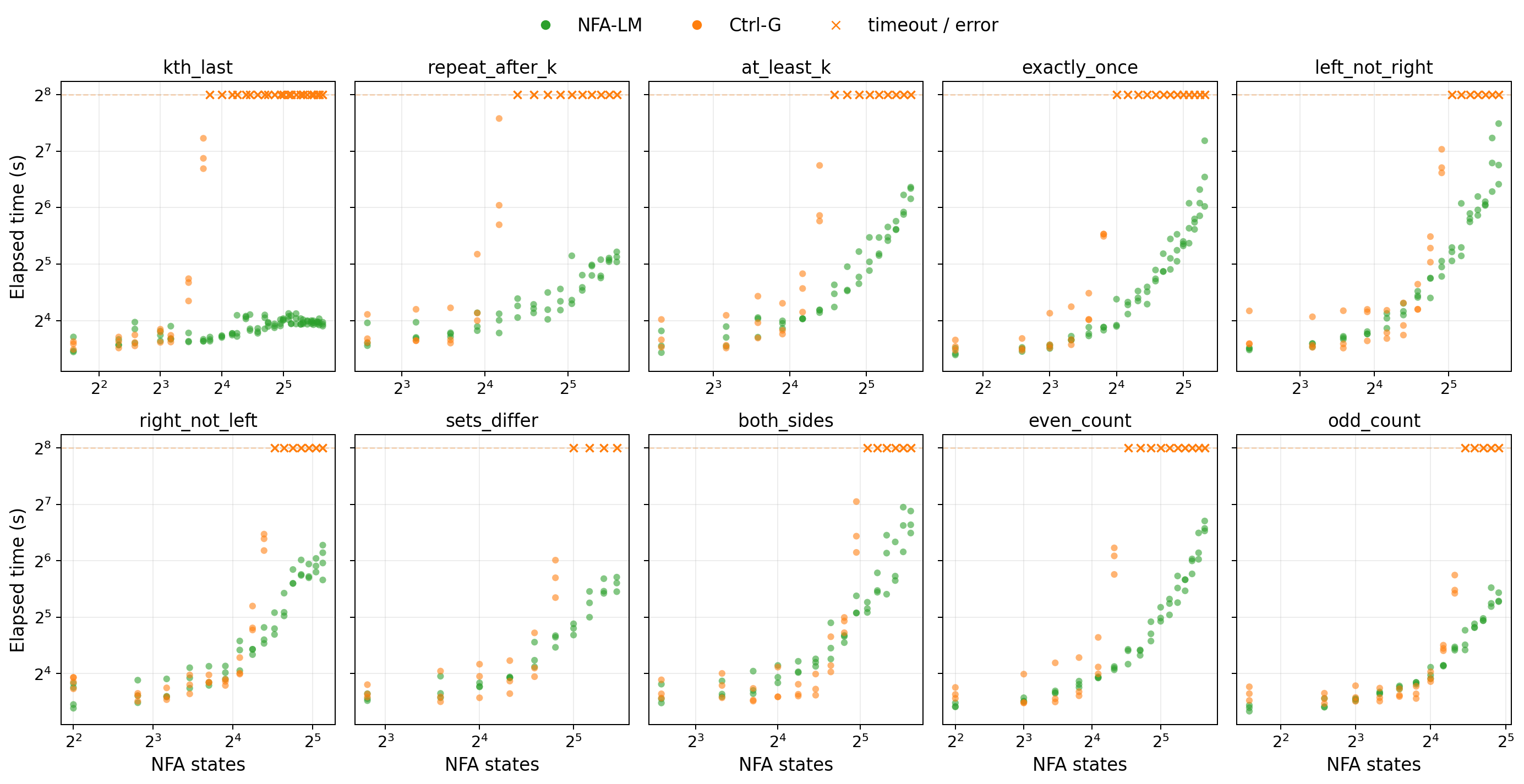}
    \caption{Elapsed time (s) of \framework{} and Ctrl-G versus the number of NFA states for each constraint family. Crosses denote instances that timed out or encountered out-of-memory errors.}
    \label{fig:precomputation-scaling}
\end{figure*}

\subsection{RQ1: Scalability}
\label{sec:rq-efficiency}
Overall, \framework{} efficiently scales to large Regex instances. 
As shown in Table~\ref{tab:overall-results}, \framework{} satisfies the constraints on all 500 instances, with an average runtime of 28.5 seconds for Gemma-4-E2B, whereas Ctrl-G succeeds on fewer than half of the instances under the timeout limit.
These compact NFAs can require exponentially larger DFAs (Table~\ref{tab:constraint-families}), and Figure~\ref{fig:precomputation-scaling} shows that Ctrl-G's failures are concentrated among larger constraints, while \framework{} remains tractable. The LLM-only baselines are the fastest but satisfy only 38.0--61.2\% of the constraints, highlighting the difficulty of the Regex constraints.

\subsection{RQ2: Approximation Accuracy}
\label{sec:rq-prefix-accuracy}
\begin{figure}[ht!]
    \centering
    \includegraphics[width=0.9\linewidth]{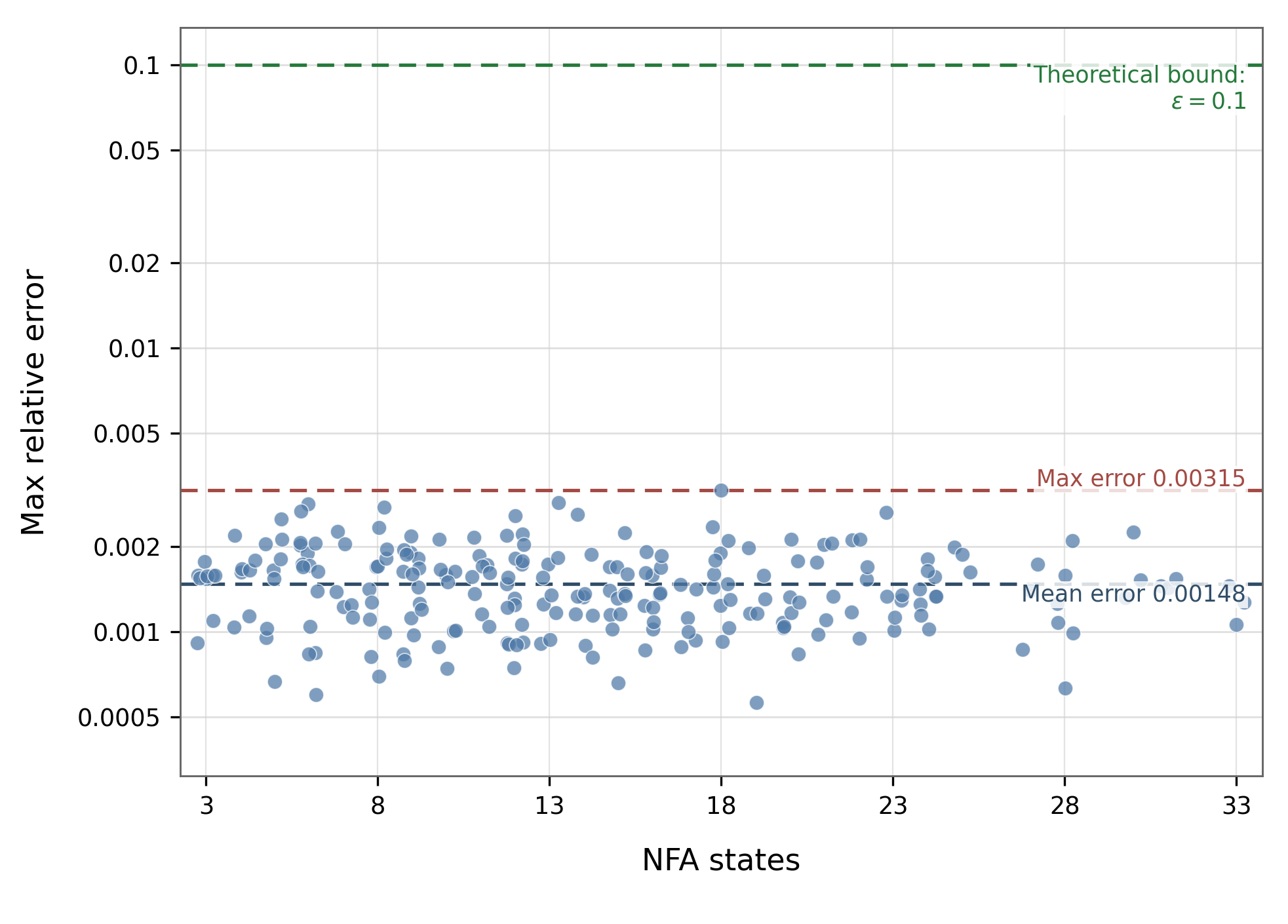}
    \caption{Maximum relative error of \framework{} against exact HMM constraint probabilities for Gemma-4-E2B.}
    \label{fig:prefix-accuracy-results}
\end{figure}

For Gemma-4-E2B, exact brute-force computation completed for 232/500 instances within the 256-second timeout, providing $P_{\text{hmm}}(\alpha\mid x_{1:\ell})$ at every prefix $\ell\in[n]$ for each of these instances. Figure~\ref{fig:prefix-accuracy-results} reports the maximum relative error $e_i=\max_{1\leq \ell\leq n}\left|\frac{\hat{P}_{\text{hmm}}(\alpha\mid x_{1:\ell})}{P_{\text{hmm}}(\alpha\mid x_{1:\ell})}
-1\right|$ for each instance $i$. The maximum relative errors for all of these instances are within the theoretical upper bound of $\varepsilon=0.1$. In fact, the worst observed error is approximately 0.00315, which is 3.15\% of the target tolerance. Overall, \framework{} computes accurate constrained-probability estimates within the theoretical error tolerance.

\subsection{RQ3: Generation Quality}
\label{sec:rq-quality}
We average quality scores over each method's constraint-satisfying outputs. Overall, Gemma-4-E2B yields higher average quality than Qwen3.5-2B for every method. On their respective successful subsets, Ctrl-G scores slightly higher than \framework{}. This is because Ctrl-G computes the exact value of ${P}_{\text{hmm}}(\alpha\mid\cdot)$, while \framework{} computes an approximate value. Both \framework{} and XGrammar succeed on all instances, but \framework{} achieves substantially higher average quality than XGrammar. The following CommonGen example uses the concepts (tracked keywords) \texttt{[hair, use, uses, iron, curl, actress]} with the \texttt{both\_sides} constraint. The word ``and'' is selected as the separator. XGrammar generates ``The actress uses an iron to curl her hair for a dramatic effect. end end ... and uses'' (with 46 repetitions of ``end'' omitted), while \framework{} generates ``The actress uses an iron to curl her hair for her role and uses the style in the film.''
Both outputs satisfy the constraint because ``uses'' appears on both sides of the separator ``and''. However, XGrammar repeats ``end'' 46 times and delays satisfying the constraint until near the length limit, earning a judge grade of 1.5. In contrast, \framework{} produces a shorter, more natural sentence with a judge grade of 3.0. Overall, \framework{} generates high-quality sequences and outperforms distribution-agnostic methods.

\section{Conclusion}
\label{sec:conclusion}
We introduced \framework{}, a polynomial-time approach to distribution-aware constrained generation for NFA constraints. It adapts the \#NFA FPRAS to estimate HMM-weighted accepting completions and uses these estimates to guide LM generation with theoretical guarantees. \framework{} efficiently generates high-quality outputs with provably bounded approximation error for NFA constraints at scale.

\section*{Limitations}

First, \framework{} focuses on Regex constraints represented by NFAs, motivated by the efficient FPRAS for \#NFA~\citep{meel2025towards}. Richer constraints such as context-free languages and nonregular Regex features such as backreferences still remain as challenges. Although an FPRAS for \#CFG was recently established~\citep{meel2026cfg}, its best-known running time is still $O(g^{14} n^{57}\varepsilon^{-4}\log(\delta^{-1}))$ for a Chomsky-normal-form grammar of size $g$, a sequence of length $n$, and error parameters $(\varepsilon,\delta)$, which remains impractical for constrained generation.

Second, \framework{} relies on a distilled HMM as a proxy for the LM's completion probabilities:
$P_{\text{hmm}}(\alpha\mid x_{1:\ell})\approx P_{\text{lm}}(\alpha\mid x_{1:\ell})$. Our guarantees bound the weighted-\#NFA approximation error relative to exact HMM-guided decoding, but they do not bound this HMM-to-LM approximation. Future work includes extending the guarantees to context-free constraints and
reducing or eliminating our reliance on the HMM.

\FloatBarrier
\bibliography{references}

@article{xgrammar,
  title={Xgrammar: Flexible and efficient structured generation engine for large language models},
  author={Dong, Yixin and Ruan, Charlie F and Cai, Yaxing and Xu, Ziyi and Zhao, Yilong and Lai, Ruihang and Chen, Tianqi},
  journal={Proceedings of Machine Learning and Systems},
  volume={7},
  year={2025}
}

@article{gorilla,
  title={Gorilla: Large language model connected with massive apis},
  author={Patil, Shishir G and Zhang, Tianjun and Wang, Xin and Gonzalez, Joseph E},
  journal={Advances in Neural Information Processing Systems},
  volume={37},
  pages={126544--126565},
  year={2024}
}

@inproceedings{bfcl,
  title={The berkeley function calling leaderboard (bfcl): From tool use to agentic evaluation of large language models},
  author={Patil, Shishir G and Mao, Huanzhi and Yan, Fanjia and Ji, Charlie Cheng-Jie and Suresh, Vishnu and Stoica, Ion and Gonzalez, Joseph E},
  booktitle={Forty-second International Conference on Machine Learning},
  year={2025}
}

@inproceedings{spider,
  title={Spider 2.0: Evaluating language models on real-world enterprise text-to-sql workflows},
  author={Lei, Fangyu and Chen, Jixuan and Ye, Yuxiao and Cao, Ruisheng and Shin, Dongchan and Su, Hongjin and Suo, Zhaoqing and Gao, Hongcheng and Hu, Wenjing and Yin, Pengcheng and others},
  booktitle={International Conference on Learning Representations},
  volume={2025},
  pages={28691--28735},
  year={2025}
}

@article{dang2026mitigating,
  title={Mitigating Bias in Locally Constrained Decoding via Tractable Proposals},
  author={Dang, Meihua and Song, Linxin and Zhang, Honghua and Zhao, Jieyu and Broeck, Guy Van den and Ermon, Stefano},
  journal={arXiv preprint arXiv:2606.01926},
  year={2026}
}

@inproceedings{meel2026cfg,
  title={\# CFG and\# DNNF admit FPRAS},
  author={Meel, Kuldeep S and de Colnet, Alexis},
  booktitle={Proceedings of the 2026 Annual ACM-SIAM Symposium on Discrete Algorithms (SODA)},
  pages={5978--6010},
  year={2026},
  organization={SIAM}
}

@inproceedings{commongen,
  title={CommonGen: A constrained text generation challenge for generative commonsense reasoning},
  author={Lin, Bill Yuchen and Zhou, Wangchunshu and Shen, Ming and Zhou, Pei and Bhagavatula, Chandra and Choi, Yejin and Ren, Xiang},
  booktitle={Findings of the Association for Computational Linguistics: EMNLP 2020},
  pages={1823--1840},
  year={2020}
}

@inproceedings{xgrammar2,
  title={XGrammar-2: Dynamic and Efficient Structured Generation Engine for Agentic LLMs},
  author={Li, Linzhang and Dong, Yixin and Wang, Guanjie and Xu, Ziyi and Jiang, Alexander and Chen, Tianqi},
  booktitle={Proceedings of the ACM Conference on AI and Agentic Systems},
  pages={1009--1022},
  year={2026}
}

@inproceedings{picard,
  title={PICARD: Parsing incrementally for constrained auto-regressive decoding from language models},
  author={Scholak, Torsten and Schucher, Nathan and Bahdanau, Dzmitry},
  booktitle={Proceedings of the 2021 conference on empirical methods in natural language processing},
  pages={9895--9901},
  year={2021}
}

@article{gad,
  title={Grammar-aligned decoding},
  author={Park, Kanghee and Wang, Jiayu and Berg-Kirkpatrick, Taylor and Polikarpova, Nadia and D'Antoni, Loris},
  journal={Advances in Neural Information Processing Systems},
  volume={37},
  pages={24547--24568},
  year={2024}
}

@article{approxaligned,
  title={Approximately aligned decoding},
  author={Melcer, Daniel and Gonugondla, Sujan Kumar and Perera, Pramuditha and Qian, Haifeng and Chiang, Wen-Hao and Wang, Yanjun and Jain, Nihal and Garg, Pranav and Ma, Xiaofei and Deoras, Anoop},
  journal={Advances in Neural Information Processing Systems},
  volume={38},
  pages={11445--11479},
  year={2026}
}

@article{jsonschemabench,
  title={Jsonschemabench: A rigorous benchmark of structured outputs for language models},
  author={Geng, Saibo and Cooper, Hudson and Moskal, Micha{\l} and Jenkins, Samuel and Berman, Julian and Ranchin, Nathan and West, Robert and Horvitz, Eric and Nori, Harsha},
  journal={arXiv preprint arXiv:2501.10868},
  year={2025}
}

@article{gemma4,
  title={Gemma 4 technical report},
  author={Team, Gemma and Abd, Sherif El and Aggarwal, Vaibhav and Algayres, Robin and Andreev, Alek and Bachem, Olivier and Ballantyne, Ian and Brick, Cormac and C{\u{a}}rbune, Victor and Casbon, Michelle and others},
  journal={arXiv preprint arXiv:2607.02770},
  year={2026}
}

@article{qwen3.5,
  title={Qwen3. 5-omni technical report},
  author={Team, Qwen},
  journal={arXiv preprint arXiv:2604.15804},
  year={2026}
}

@article{synchromesh,
  title={Synchromesh: Reliable code generation from pre-trained language models},
  author={Poesia, Gabriel and Polozov, Oleksandr and Le, Vu and Tiwari, Ashish and Soares, Gustavo and Meek, Christopher and Gulwani, Sumit},
  journal={arXiv preprint arXiv:2201.11227},
  year={2022}
}

@inproceedings{mucola,
  title={Gradient-based constrained sampling from language models},
  author={Kumar, Sachin and Paria, Biswajit and Tsvetkov, Yulia},
  booktitle={Proceedings of the 2022 Conference on Empirical Methods in Natural Language Processing},
  pages={2251--2277},
  year={2022}
}

@article{qin2022cold,
  title={Cold decoding: Energy-based constrained text generation with langevin dynamics},
  author={Qin, Lianhui and Welleck, Sean and Khashabi, Daniel and Choi, Yejin},
  journal={Advances in Neural Information Processing Systems},
  volume={35},
  pages={9538--9551},
  year={2022}
}

@book{kleene1956representation,
  title={Representation of events in nerve nets and finite automata},
  author={Kleene, Stephen Cole},
  volume={34},
  year={1956},
  publisher={Princeton University Press Princeton}
}

@article{amini2023structured,
  title={Structured voronoi sampling},
  author={Amini, Afra and Du, Li and Cotterell, Ryan},
  journal={Advances in Neural Information Processing Systems},
  volume={36},
  pages={31689--31716},
  year={2023}
}

@inproceedings{lew2023sequential,
  title={Sequential Monte Carlo Steering of Large Language Models using Probabilistic Programs},
  author={Lew, Alexander K and Zhi-Xuan, Tan and Grand, Gabriel and Mansinghka, Vikash},
  booktitle={ICML 2023 Workshop: Sampling and Optimization in Discrete Space},
  year={2023}
}

@inproceedings{mcmc_cfg,
  title={Constrained Sampling for Language Models Should Be Easy: An MCMC Perspective},
  author={Gonzalez, Emmanuel Anaya and Vaidya, Sairam and Park, Kanghee and Ji, Ruyi and Berg-Kirkpatrick, Taylor and D'Antoni, Loris},
  booktitle={The Thirty-ninth Annual Conference on Neural Information Processing Systems},
  year={2025}
}

@inproceedings{tractablecontrol,
  title={Tractable control for autoregressive language generation},
  author={Zhang, Honghua and Dang, Meihua and Peng, Nanyun and Van den Broeck, Guy},
  booktitle={International Conference on Machine Learning},
  pages={40932--40945},
  year={2023},
  organization={PMLR}
}

@article{ctrlg,
  title={Adaptable logical control for large language models},
  author={Zhang, Honghua and Kung, Po-Nien and Yoshida, Masahiro and Van den Broeck, Guy and Peng, Nanyun},
  journal={Advances in Neural Information Processing Systems},
  volume={37},
  pages={115563--115587},
  year={2024}
}

@misc{sun2023evaluatinglargelanguagemodels,
      title={Evaluating Large Language Models on Controlled Generation Tasks}, 
      author={Jiao Sun and Yufei Tian and Wangchunshu Zhou and Nan Xu and Qian Hu and Rahul Gupta and John Frederick Wieting and Nanyun Peng and Xuezhe Ma},
      year={2023},
      eprint={2310.14542},
      archivePrefix={arXiv},
      primaryClass={cs.CL},
      url={https://arxiv.org/abs/2310.14542}, 
}

@inproceedings{promptconstraints,
  title={Bounding the capabilities of large language models in open text generation with prompt constraints},
  author={Lu, Albert and Zhang, Hongxin and Zhang, Yanzhe and Wang, Xuezhi and Yang, Diyi},
  booktitle={Findings of the Association for Computational Linguistics: EACL 2023},
  pages={1982--2008},
  year={2023}
}

@misc{trace,
      title={TRACE Back from the Future: A Probabilistic Reasoning Approach to Controllable Language Generation}, 
      author={Gwen Yidou Weng and Benjie Wang and Guy Van den Broeck},
      year={2025},
      eprint={2504.18535},
      archivePrefix={arXiv},
      primaryClass={cs.CL},
      url={https://arxiv.org/abs/2504.18535}, 
}

@article{alvarez1993very,
  title={A very hard log-space counting class},
  author={{\`A}lvarez, Carme and Jenner, Birgit},
  journal={Theoretical Computer Science},
  volume={107},
  number={1},
  pages={3--30},
  year={1993},
  publisher={Elsevier}
}

@article{arenas2021nfa,
  title={{\#}NFA Admits an FPRAS: Efficient Enumeration, Counting, and Uniform Generation for Logspace Classes},
  author={Arenas, Marcelo and Croquevielle, Luis Alberto and Jayaram, Rajesh and Riveros, Cristian},
  journal={Journal of the ACM (JACM)},
  volume={68},
  number={6},
  pages={1--40},
  year={2021},
  publisher={ACM New York, NY}
}

@article{meel2025towards,
author = {Meel, Kuldeep S. and de Colnet, Alexis},
title = {Towards Practical FPRAS for {\#}NFA: Exploiting the Power of Dependence},
year = {2025},
issue_date = {May 2025},
publisher = {Association for Computing Machinery},
address = {New York, NY, USA},
volume = {3},
number = {2},
url = {https://doi.org/10.1145/3725253},
doi = {10.1145/3725253},
journal = {Proc. ACM Manag. Data},
month = jun,
articleno = {116},
numpages = {23}
}

@article{thompson_nfa,
  title={Programming techniques: Regular expression search algorithm},
  author={Thompson, Ken},
  journal={Communications of the ACM},
  volume={11},
  number={6},
  pages={419--422},
  year={1968},
  publisher={ACM New York, NY, USA}
}

@article{glushkov_nfa,
  title={The abstract theory of automata},
  author={Glushkov, Victor Mikhaylovich},
  journal={Russian Mathematical Surveys},
  volume={16},
  number={5},
  pages={1--53},
  year={1961}
}

@article{moore1971bounds,
  title={On the bounds for state-set size in the proofs of equivalence between deterministic, nondeterministic, and two-way finite automata},
  author={Moore, Frank R},
  journal={IEEE Transactions on computers},
  volume={100},
  number={10},
  pages={1211--1214},
  year={1971},
  publisher={IEEE}
}

@misc{openai_function_calling_2024,
  author       = {{OpenAI}},
  title        = {Function Calling},
  year         = {2024},
  howpublished = {\url{https://platform.openai.com/docs/guides/function-calling}},
  note         = {OpenAI API documentation}
}

@inproceedings{realtoxicityprompts,
  title={Realtoxicityprompts: Evaluating neural toxic degeneration in language models},
  author={Gehman, Samuel and Gururangan, Suchin and Sap, Maarten and Choi, Yejin and Smith, Noah A},
  booktitle={Findings of the association for computational linguistics: EMNLP 2020},
  pages={3356--3369},
  year={2020}
}

@inproceedings{zhao2024probabilistic,
  title = {Probabilistic Inference in Language Models via
           Twisted Sequential {Monte Carlo}},
  author = {Zhao, Stephen and Brekelmans, Rob and Makhzani, Alireza
            and Grosse, Roger Baker},
  booktitle = {Proceedings of the 41st International Conference
               on Machine Learning},
  year = {2024},
  volume = {235},
  series = {Proceedings of Machine Learning Research},
  pages = {60704--60748},
  publisher = {PMLR},
  url = {https://proceedings.mlr.press/v235/zhao24c.html}
}

@inproceedings{loula2025syntactic,
  title = {Syntactic and Semantic Control of Large Language Models
           via Sequential {Monte Carlo}},
  author = {Loula, Jo{\~a}o and LeBrun, Benjamin and Du, Li
            and Lipkin, Ben and Pasti, Clemente and Grand, Gabriel
            and Liu, Tianyu and Emara, Yahya and Freedman, Marjorie
            and Eisner, Jason and Cotterell, Ryan
            and Mansinghka, Vikash and Lew, Alexander K.
            and Vieira, Tim and O'Donnell, Timothy J.},
  booktitle = {Proceedings of the International Conference
               on Learning Representations},
  year = {2025},
  url = {https://proceedings.iclr.cc/paper_files/paper/2025/hash/a2d537e69a6c6638a3630eef835f07de-Abstract-Conference.html}
}

@inproceedings{lipkin2025fast,
  title = {Fast Controlled Generation from Language Models
           with Adaptive Weighted Rejection Sampling},
  author = {Lipkin, Benjamin and LeBrun, Benjamin
            and Vigly, Jacob Hoover and Loula, Jo{\~a}o
            and MacIver, David R. and Du, Li and Eisner, Jason
            and Cotterell, Ryan and Mansinghka, Vikash
            and O'Donnell, Timothy J. and Lew, Alexander K.
            and Vieira, Tim},
  booktitle = {Proceedings of the Conference on Language Modeling},
  year = {2025},
  url = {https://openreview.net/forum?id=yGQqTuSJPK}
}

\onecolumn
\arxivtrue
\appendix

\section{Cached Membership}
\label{sec:cache}
In Algorithm~\ref{alg:hmm-weighted-count-nfa}, the canonical union requires membership tests of the form
\(x'\in L(q',q_F^n)\). Naive membership checking takes
\(O(|Q^u|)\) time. We can instead maintain a per-layer cache incrementally, making
each check constant-time with respect to \(|Q^u|\).
We implement these tests using two Boolean matrices
\(\mathsf{cache}_\ell^j\) and \(\mathsf{cache}_\ell^{\prime j}\)
for every layer \(\ell\) and every independent repetition \(j\). Let
\(\mathcal S_\ell^j=\bigcup_{\substack{r\in[n_sn_t],\,q\in Q^\ell,\,b\in H_\ell}}
\{x_{\ell+1:n}\mid
(x_{\ell+1:n},y_{\ell+1:n})\in S^{r,j}(q,b)\}\). The completed cache
\(\mathsf{cache}_\ell^j\) is a Boolean matrix over \(\{0,1\}\) with
dimensions \(|\mathcal S_\ell^j|\times|Q^\ell|\), and it is correct when
\(\mathsf{cache}_\ell^j(x,q)=\mathbf{1}_{x\in L(q,q_F^n)}\) for every \(x\in\mathcal S_\ell^j\) and \(q\in Q^\ell\). \(\mathsf{computeCache}^j(\ell)\) is defined as follows.  At the terminal layer $\ell=n$,
\(\mathcal S_n^j=\{\lambda\}\) and \(\mathsf{cache}_n^j(\lambda,q_F^n)=1\). For \(0\leq\ell<n\), denote the candidate row set
\(\mathcal S_\ell^{\prime j}
=
\Sigma\cdot\mathcal S_{\ell+1}^j\). For \(a\in\Sigma\), let
\(\mathsf{transition}_{\ell+1,\ell}^a\) be the
\(|Q^{\ell+1}|\times|Q^\ell|\) Boolean matrix
\(\mathsf{transition}_{\ell+1,\ell}^a(q_i,q)
=
\mathbf{1}_{(q,a,q_i)\in T^u}\). Writing \(\Sigma=(a_1,\ldots,a_{|\Sigma|})\),
\(\mathsf{computeCache}^j(\ell)\) computes the following matrix and
normalizes each entry to either \(0\) or \(1\), setting entries greater than
\(1\) to \(1\):
\[\mathsf{cache}_\ell^{\prime j}=
\left(
\begin{array}{c}
\mathsf{cache}_{\ell+1}^j
\times\mathsf{transition}_{\ell+1,\ell}^{a_1}\\
\vdots\\
\mathsf{cache}_{\ell+1}^j
\times\mathsf{transition}_{\ell+1,\ell}^{a_{|\Sigma|}}
\end{array}
\right)
\]
Thus,
\(\mathsf{cache}_\ell^{\prime j}\) has dimension
\((|\Sigma|\cdot|\mathcal S_{\ell+1}^j|)\times|Q^\ell|
\) and satisfies
\(\mathsf{cache}_\ell^{\prime j}(a\cdot x',q)=\mathbf{1}_{a\cdot x'\in L(q,q_F^n)}\). 
Since
\(\mathcal S_\ell^j\subseteq\mathcal S_\ell^{\prime j}\),
\(\mathsf{updateCache}^j(\ell)\) obtains
\(\mathsf{cache}_\ell^j\) by keeping exactly the rows indexed by \(\mathcal S_\ell^j\). For the $a$-successors $(q_1,\ldots,q_k)$ of $q$, the canonical union includes
\((a,b')\cdot(x',y')\) from successor \(q_i\) iff
\(\mathsf{cache}_{\ell+1}^j(x',q_{i'})=0\) for every $i'<i$.  Accordingly, \(\mathsf{computeCache}^j(\ell)\) is called before
processing \(Q^\ell\), and \(\mathsf{updateCache}^j(\ell)\) afterward. Cache-augmented Algorithm~\ref{alg:hmm-weighted-count-nfa} is displayed in Algorithm~\ref{alg:hmm-weighted-count-nfa-with-cache}.

\ifarxiv\begin{algorithm}[!htbp]\else\begin{algorithm}[t]\fi
\DontPrintSemicolon
\SetAlgoNlRelativeSize{-1}
Unroll $A$ to \(A^u=(Q^u,\Sigma,T^u,q_0^0,q_F^n)\)

\mbox{\(\kappa,n_t,n_u\leftarrow \frac{\varepsilon}{6+\varepsilon},\left\lceil 8\log(16h|Q^u|)\right\rceil,\left\lceil 8\log(\delta^{-1})\right\rceil\)}

\mbox{
\(n_s,\theta
\leftarrow\left\lceil\frac{16(n+1)}{\kappa^2(1-\kappa)}\right\rceil,
\left\lceil 16(1+\kappa)n_s n_t h|Q^u|\right\rceil\)}

    \For{\(j\in[n_u]\)}{
    $\mathsf{fail}^j\leftarrow0$
    
    \For{$\ell=0,\ldots,n, q\in Q^\ell, b\in H_\ell, r\in[n_sn_t]$}{
        \mbox{$\hat{W}^j(q,b),p^j(q,b),S^{r,j}(q,b)\leftarrow0,1,\emptyset$}
    }

    \For{$b\in H_n, r\in[n_sn_t]$}{\mbox{$\hat{W}^j(q_F^n,b),p^j(q_F^n,b),S^{r,j}(q_F^n,b)\leftarrow1,1,\{(\lambda,\lambda)\}$}}

    \textcolor{red}{\(\mathsf{computeCache}^j(n)\)}

    \For{$\ell=n-1,\ldots,0$}{
        \textcolor{red}{\(\mathsf{computeCache}^j(\ell)\)}
        
        \For{$q\in Q^\ell$}{
            \(\mathsf{estimateAndSample}(q,j,\ell)\)

            {\advance\rightskip by -2\algoskipindent\If{\mbox{
            \(
            \overset{n_sn_t}{\underset{r=1}{\sum}}
            \overset{n}{\underset{k=0}\sum}
            \underset{q\in Q^k}{\sum}
            {\underset{b\in H_k}{\sum}}
            |S^{r,j}(q,b)|\geq \theta
            \)
            }}{$\mathsf{fail}^j\leftarrow1$, go to repetition \(j+1\)}}
        }
        \textcolor{red}{\(\mathsf{updateCache}^j(\ell)\)%
        }
    }
}
\Return{\(\hat W^j(q,b), p^j(q,b),S^{r,j}(q,b)\)}
\caption{\(\mathsf{precompute}(A,n,\varepsilon,\delta)\) \textcolor{red}{with cached membership}}
\label{alg:hmm-weighted-count-nfa-with-cache}
\SetNlSty{textbf}{c}{}
\end{algorithm}

\section{Theoretical Analysis}
\label{sec:whole_proof}
We first review the key definitions and basic probability facts in
Sections~\ref{subsec:canonical-product-runs} and~\ref{subsec:probability-tools}
before presenting the full proof.
\subsection{Canonical Product Runs}
\label{subsec:canonical-product-runs}

Runs in \(A^u\) are seen as paths from a state \(q\) to the accepting state
\(q_F^n\). A token suffix can have several accepting runs. For an atom
\(z\in Z(q)\), we map \((z,(q,b))\) to a unique accepting product run, called
the canonical product run of \(z\) for \((q,b)\).

For every \(j\in[n_u]\), all terminal product states \((q_F^n,b')\) have the
same deterministic sample set \(\{(\lambda,\lambda)\}\) and satisfy
\(p^j(q_F^n,b')=W(q_F^n,b')=1\). We therefore identify them with the single
terminal state \(q_F^n\) in the proof. The final hidden state \(b'\) remains
in the label of the edge entering \(q_F^n\).

\begin{definition}[Canonical product run]
Let \(z\in Z(q)\). The canonical product run of \(z\) for \((q,b)\), denoted
by \(\mathsf{run}(z,(q,b))\), is defined inductively as follows:
\begin{itemize}
    \item If \(q=q_F^n\), then \(z=(\lambda,\lambda)\) and
    \(\mathsf{run}((\lambda,\lambda),(q_F^n,b))=q_F^n\).
    \item If \(q\neq q_F^n\), write \(z=(a,b')\cdot z'\), 
    where \(z'=(x',y')\), and let
    \(q'=\mathsf{first}(q,a,x')\). Then
    \(\mathsf{run}(z,(q,b))=(q,b)\xrightarrow{(a,b')}
    \mathsf{run}(z',(q',b'))\). We say that
    \((z',(q',b'))\) is the canonical child of \((z,(q,b))\).
\end{itemize}
When \(q'=q_F^n\), the product state \((q',b')\) is treated as the terminal
state \(q_F^n\).
\end{definition}

Fix \(v=(q_\ell,y_\ell)\), where \(q_\ell\in Q^\ell\), and let
\(z=(x_{\ell+1:n},y_{\ell+1:n})\in Z(q_\ell)\). We write its canonical product run as
\(\mathsf{run}(z,v)=(q_\ell,y_\ell)
\xrightarrow{(x_{\ell+1},y_{\ell+1})}\cdots
\xrightarrow{(x_n,y_n)}q_F^n\).

\begin{definition}[Convergence state]
Let \(z_1,z_2\in Z_+(q,b)\). The longest common suffix of
\(\mathsf{run}(z_1,(q,b))\) and \(\mathsf{run}(z_2,(q,b))\) is the longest sub-run having the same states and edge labels to the terminal state. The convergence state,
denoted by \(v^{z_1,z_2}\), is the first state of this common suffix. When
\(z_1=z_2\), we define \(v^{z_1,z_2}=(q,b)\).
\end{definition}
Let \(z=(x_{\ell+1:n},y_{\ell+1:n})\in Z_+(q_\ell,y_\ell)\). For \(\ell\leq k<n\), define
\(D(z,(q_\ell,y_\ell),k)
=\{z'\in Z_+(q_\ell,y_\ell):v^{z,z'}=(q_k,y_k)\}\).
Thus, \(D(z,(q_\ell,y_\ell),k)\) contains the atoms whose canonical product
runs have the same suffix as the run of \(z\) from \((q_k,y_k)\), but not
from the preceding state. 

\begin{restatable}{proposition}{weightedsplicing}
\label{lem:weighted-splicing}
Fix \(v=(q_\ell,y_\ell)\) and
\(z=(x_{\ell+1:n},y_{\ell+1:n})\in Z_+(q_\ell,y_\ell)\). Let
\(D(z,v,k)\) be the set of atoms whose convergence state with \(z\) is
\((q_k,y_k)\) for \(\ell\leq k<n\). Then,
\[
\frac{\sum_{z'\in D(z,v,k)}w(z',y_\ell)}
{w((x_{k+1:n},y_{k+1:n}),y_k)}
\leq \frac{W(q_\ell,y_\ell)}{W(q_k,y_k)}
\]
\end{restatable}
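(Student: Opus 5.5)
The plan is to build an injective, weight-scaling map from \(D(z,v,k)\) into the atoms \(Z(q_k,y_k)\) of the convergence state, or rather into prefixes. The natural decomposition is this. Every \(z'\in D(z,v,k)\) has a canonical product run from \(v\) whose suffix from layer \(k\) coincides, in states and edge labels, with the suffix of \(\mathsf{run}(z,v)\) from \((q_k,y_k)\). Hence \(z'=u'\cdot(x_{k+1:n},y_{k+1:n})\), where \(u'\) is a product-sequence of length \(k-\ell\) labelling a path from \((q_\ell,y_\ell)\) to \((q_k,y_k)\) in the product graph. Since the weight \(w\) factors multiplicatively along edges, with the factor at each edge depending only on the previous hidden state, I will write
\[
w(z',y_\ell)=w_{\ell\to k}(u',y_\ell)\cdot w((x_{k+1:n},y_{k+1:n}),y_k),
\]
where \(w_{\ell\to k}(u',y_\ell)=\prod_{i=\ell+1}^{k}\psi(x'_i,y'_{i-1},y'_i)\) and \(y'_k=y_k\). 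Dividing by \(w((x_{k+1:n},y_{k+1:n}),y_k)\), which is positive since \(z\in Z_+\), the left-hand side becomes \(\sum_{u'\in U}w_{\ell\to k}(u',y_\ell)\). Here \(U\) is the set of such prefixes. The map \(z'\mapsto u'\) is injective, since the suffix is fixed.

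Next I bound \(\sum_{u'\in U}w_{\ell\to k}(u',y_\ell)\) by \(W(q_\ell,y_\ell)/W(q_k,y_k)\), i.e.\ I show \(\sum_{u'\in U}w_{\ell\to k}(u',y_\ell)\,W(q_k,y_k)\le W(q_\ell,y_\ell)\). Expanding \(W(q_k,y_k)=\sum_{z''\in Z(q_k)}w(z'',y_k)\), the left side equals
\[
\sum_{u'\in U}\sum_{z''\in Z(q_k)} w(u'\cdot z'',y_\ell).
\]
Each concatenation \(u'\cdot z''\) lies in \(Z(q_\ell)\), because \(u'\) labels an NFA path \(q_\ell\to q_k\) and \(z''\) labels a path \(q_k\to q_F^n\), with hidden states chained through \(y_k\). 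So it suffices to show that the map \((u',z'')\mapsto u'\cdot z''\) is injective on \(U\times Z(q_k)\). This holds trivially because all \(u'\) have the same length \(k-\ell\), so the split point is determined. The sum is then at most \(\sum_{z'''\in Z(q_\ell)}w(z''',y_\ell)=W(q_\ell,y_\ell)\), using nonnegativity of weights.

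The only delicate point is confirming that each \(u'\) actually ends at the NFA state \(q_k\) with hidden state \(y_k\). This is exactly what "convergence state equals \((q_k,y_k)\)" encodes: the common suffix starts at \((q_k,y_k)\). Notably, canonicity is not even needed for the inequality, only for well-definedness of \(D\). One should also check the edge case \(z'=z\), where \(v^{z,z}=(q_\ell,y_\ell)\), so \(z\) only contributes when \(k=\ell\). For \(k=\ell\), \(U=\{(\lambda,\lambda)\}\) and the bound reads \(1\le1\). If \(W(q_k,y_k)=0\), the right side must be interpreted as \(+\infty\); but \(W(q_k,y_k)\ge w((x_{k+1:n},y_{k+1:n}),y_k)>0\) anyway, so no issue arises. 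I expect the main obstacle to be this bookkeeping on the product-run decomposition. In particular, I must verify that the edge labels into \((q_k,y_k)\) carry \(y_k\), so the split weight factor \(\psi(x_{k+1},y_k,y_{k+1})\) belongs to the suffix.
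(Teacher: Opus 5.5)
Your proposal is correct and follows essentially the same route as the paper. Both arguments write each \(z'\in D(z,v,k)\) as a distinct length-\((k-\ell)\) prefix followed by the common suffix \((x_{k+1:n},y_{k+1:n})\), splice each prefix onto every atom at \((q_k,y_k)\), and conclude via fixed-cut injectivity, nonnegativity, and HMM factorization at \(y_k\). The only cosmetic difference is that you sum over \(Z(q_k)\) and \(Z(q_\ell)\) rather than the positive-weight sets \(Z_+\), which changes nothing because zero-weight atoms contribute zero.
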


\begin{proof}
Let \(D(z,(q_\ell,y_\ell),k)=\{z_1,z_2,\ldots\}\) and let
\(z^{\text{suffix}}=(x_{k+1:n},y_{k+1:n})\). By definition, every \(z_i\) is of the
form \(z_i^{\text{prefix}}\cdot z^{\text{suffix}}\) for some product prefix \(z_i^{\text{prefix}}\), and the
prefixes \(z_i^{\text{prefix}}\) are pairwise unequal. If
\(z^{\text{suffix}'}\in Z_+(q_k,y_k)\), then
\(z_i^{\text{prefix}}\cdot z^{\text{suffix}'}\in Z_+(q_\ell,y_\ell)\). Therefore, all atoms in
\(\{z_i^{\text{prefix}}\cdot z^{\text{suffix}'}:z_i\in D(z,(q_\ell,y_\ell),k),
z^{\text{suffix}'}\in Z_+(q_k,y_k)\}\) are distinct atoms in
\(Z_+(q_\ell,y_\ell)\).

Write \(D_k=D(z,(q_\ell,y_\ell),k)\) and
\(Z_k=Z_+(q_k,y_k)\). By the HMM factorization at \(y_k\),
\ifarxiv
\begin{equation*}
\begin{aligned}
&W(q_\ell,y_\ell)
\geq
\sum_{z_i\in D_k}\sum_{z'\in Z_k}
w(z_i^{\text{prefix}}\cdot z',y_\ell)
=
\frac{\sum_{z_i\in D_k}w(z_i,y_\ell)}
     {w(z^{\text{suffix}},y_k)}
\sum_{z'\in Z_k}w(z',y_k)\\
&=
\frac{W(q_k,y_k)}
     {w(z^{\text{suffix}},y_k)}
\sum_{z_i\in D_k}
w(z_i,y_\ell)
\end{aligned}
\end{equation*}
\else
\begin{align*}
W(q_\ell,y_\ell)
&\geq
\sum_{z_i\in D_k}\sum_{z'\in Z_k}
w(z_i^{\text{prefix}}\cdot z',y_\ell)\\
&=
\frac{\sum_{z_i\in D_k}w(z_i,y_\ell)}
     {w(z^{\text{suffix}},y_k)}
\sum_{z'\in Z_k}w(z',y_k)\\
&=
\frac{W(q_k,y_k)}
     {w(z^{\text{suffix}},y_k)}
\sum_{z_i\in D_k}
w(z_i,y_\ell)
\end{align*}
\fi
The result follows.
\end{proof}

\subsection{Probability Basics}
\label{subsec:probability-tools}
Let \(E\subseteq\Omega\) be an event, and let \(X\) be a random variable on
\(\Omega\). The conditional expectation \(\mathbb E[X\mid E]\) is defined as
\[
 \mathbb E[X\mid E]
 =
 \begin{cases}
  \mathbb E[\mathbf{1}_{E} X]/\mathbb{P}[E],&\text{if } \mathbb{P}[E]>0\\
  0,&\text{otherwise}
 \end{cases}
\]
Consider a sequence of random variables
\(\mathcal F=(Y_1,\ldots,Y_k)\), and, for every execution
\(\omega\in\Omega\), let
\(\mathcal F(\omega)=(Y_1(\omega),\ldots,Y_k(\omega))\). Then
\(\mathbb{E}[X\mid Y_1,\ldots,Y_k]\) is a random variable from \(\Omega\) to
\(\mathbb{R}\), defined as follows. Let
\(\Omega_1\sqcup\cdots\sqcup\Omega_{k'}=\Omega\) be a partition such that
two executions \(\omega_1,\omega_2\) belong to the same \(\Omega_i\) if and
only if \(\mathcal F(\omega_1)=\mathcal F(\omega_2)\). Let \(\Omega_\omega\)
be the part of the partition containing \(\omega\). Then
\(\mathbb E[X\mid Y_1,\ldots,Y_k](\omega)=\mathbb{E}[X\mid\Omega_\omega]\).
We say that \(X\) is fixed by \(\mathcal F\) (i.e., \(X\) is
deterministically computed by \(\mathcal F\)) if, for every
\(\omega_1,\omega_2\in\Omega\) satisfying
\(\mathcal F(\omega_1)=\mathcal F(\omega_2)\), we have
\(X(\omega_1)=X(\omega_2)\).

We will use the following well-known facts.

\begin{fact}
Given random variables \(X\) and \(Y\) and a sequence of random variables
\(\mathcal F\), if \(X\) is fixed by \(\mathcal F\), then
\(\mathbb{E}[XY\mid\mathcal F]=X\mathbb E[Y\mid\mathcal F]\) and
\(\mathbb{E}[X\mid\mathcal F]=X\).
\end{fact}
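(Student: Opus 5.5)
The plan is to verify both identities pointwise, one execution \(\omega\in\Omega\) at a time, using the partition-based definition of \(\mathbb E[\cdot\mid\mathcal F]\) given above. Fix \(\omega\) and let \(\Omega_\omega\) be its part of the partition induced by \(\mathcal F\). Every execution of the algorithms makes only finitely many random choices (the \(\mathsf{reduce}\) coin flips), so I treat \(\Omega\) as finite or countable and the partition as well defined. Because \(X\) is fixed by \(\mathcal F\), any two executions in \(\Omega_\omega\) agree on \(\mathcal F\) and hence on \(X\). So \(X\) is constant on \(\Omega_\omega\), with value \(c=X(\omega)\), and therefore \(\mathbf 1_{\Omega_\omega}X=c\,\mathbf 1_{\Omega_\omega}\) identically on \(\Omega\).

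For the first identity I split into two cases according to \(\mathbb P[\Omega_\omega]\).
\begin{itemize}
\item If \(\mathbb P[\Omega_\omega]>0\), I pull the constant out of the expectation by linearity:
\[
\mathbb E[XY\mid\mathcal F](\omega)=\frac{\mathbb E[\mathbf 1_{\Omega_\omega}XY]}{\mathbb P[\Omega_\omega]}=c\,\frac{\mathbb E[\mathbf 1_{\Omega_\omega}Y]}{\mathbb P[\Omega_\omega]}=X(\omega)\,\mathbb E[Y\mid\mathcal F](\omega).
\]
\item If \(\mathbb P[\Omega_\omega]=0\), the convention \(\mathbb E[\cdot\mid E]=0\) makes both sides \(0\).
\end{itemize}
This proves \(\mathbb E[XY\mid\mathcal F]=X\,\mathbb E[Y\mid\mathcal F]\) at every \(\omega\).

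For the second identity I take \(Y\equiv1\) in the first. When \(\mathbb P[\Omega_\omega]>0\) we have \(\mathbb E[1\mid\Omega_\omega]=1\), which gives \(\mathbb E[X\mid\mathcal F](\omega)=X(\omega)\).

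The only delicate point, and the main obstacle, is the null parts. There the convention forces \(\mathbb E[X\mid\mathcal F](\omega)=0\) even when \(X(\omega)\neq0\), so the second identity can fail pointwise on a probability-zero set of executions. I would state it as an identity holding on every execution of positive probability, i.e.\ almost surely. That is all the later arguments need, since they only take further expectations, and those ignore null sets. If one insists on a literal pointwise identity, the alternative is to discard zero-probability executions from \(\Omega\) at the outset.
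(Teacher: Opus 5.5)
Your proof is correct. The paper gives no proof of this statement; it lists it among the ``well-known facts''. Your pointwise argument on the parts \(\Omega_\omega\) (where \(X\) is constant, so it factors out of \(\mathbb E[\mathbf 1_{\Omega_\omega}XY]\)) is exactly the standard justification.

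Your caveat is a real one. Under the paper's convention \(\mathbb E[X\mid E]=0\) for \(\mathbb P[E]=0\), taking \(Y\equiv1\) gives \(\mathbb E[X\mid\mathcal F](\omega)=X(\omega)\,\mathbf 1_{\{\mathbb P[\Omega_\omega]>0\}}\). So the first identity holds at every \(\omega\), but the second holds only off the null parts.

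That weaker version is enough. The uses of this rule in the paper (e.g.\ Lemmas~\ref{lem:weighted-one-atom-transfer}, \ref{lem:weighted-two-atom-transfer}, \ref{lem:corrected-pair-transport} and Proposition~\ref{lem:weighted-potential-transport}) are all eventually integrated via the tower rule, and there null parts are invisible.

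One refinement: to identify ``off the null parts'' with ``almost surely'', what you need is that the partition induced by \(\mathcal F\) is at most countable, so that the union of null parts is null. You do not need \(\Omega\) itself to be countable. The paper's finite partition \(\Omega_1\sqcup\cdots\sqcup\Omega_{k'}\) presupposes exactly this, so that is the assumption to cite, rather than the claim that the \(\mathsf{reduce}\) randomness is discrete.
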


\begin{fact}
    (Tower rule) Given random variable $X$ and sequences of random variables $\mathcal F_1,\mathcal{F}_2$, if $\mathcal F_1\subseteq \mathcal F_2$, there is $\mathbb E[\mathbb E[X\mid \mathcal F_2]\mid \mathcal F_1]=\mathbb E[X\mid \mathcal F_1]$ and $\mathbb E[\mathbb E[X\mid \mathcal F_2]]=\mathbb E[X]$. 
\end{fact}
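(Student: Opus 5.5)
We prove the Tower rule directly from the partition-based definition of conditional expectation given above, working on the finite probability space of algorithm executions. First we make the hypothesis \(\mathcal F_1\subseteq\mathcal F_2\) concrete as a refinement statement. Let \(\{\Omega_i\}\) be the partition induced by \(\mathcal F_2\) and \(\{\Omega'_k\}\) the partition induced by \(\mathcal F_1\). If \(\mathcal F_2(\omega_1)=\mathcal F_2(\omega_2)\), then in particular the \(\mathcal F_1\)-coordinates agree. Hence every \(\Omega_i\) lies inside a single \(\Omega'_k\), and each \(\Omega'_k\) is the disjoint union of the \(\Omega_i\) it contains.

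Next, fix \(\omega\) and let \(\Omega'=\Omega'_\omega\). The random variable \(Y=\mathbb E[X\mid\mathcal F_2]\) is constant on each \(\Omega_i\), with value \(\mathbb E[X\mid\Omega_i]\). We split into two cases.
\begin{itemize}
\item If \(\mathbb P[\Omega']=0\), then by the convention both \(\mathbb E[Y\mid\Omega']\) and \(\mathbb E[X\mid\Omega']\) equal \(0\).
\item Otherwise, we compute
\[
\mathbb E[Y\mid\Omega']=\frac{1}{\mathbb P[\Omega']}\sum_{\Omega_i\subseteq\Omega'}\mathbb P[\Omega_i]\,\mathbb E[X\mid\Omega_i].
\]
For each \(\Omega_i\) with \(\mathbb P[\Omega_i]>0\), the summand equals \(\mathbb E[\mathbf 1_{\Omega_i}X]\). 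For \(\mathbb P[\Omega_i]=0\), the summand is \(0\), and so is \(\mathbb E[\mathbf 1_{\Omega_i}X]\), since on a finite space \(X\) is bounded. Summing over the disjoint cover gives \(\mathbb E[\mathbf 1_{\Omega'}X]/\mathbb P[\Omega']=\mathbb E[X\mid\Omega']\).
\end{itemize}
In both cases \(\mathbb E[\mathbb E[X\mid\mathcal F_2]\mid\mathcal F_1](\omega)=\mathbb E[X\mid\mathcal F_1](\omega)\), which is the first identity.

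For the second identity, take \(\mathcal F_1\) to be the empty sequence. Its induced partition is the single part \(\Omega\), so \(\mathbb E[\,\cdot\mid\mathcal F_1]\) reduces to \(\mathbb E[\,\cdot\,]\) (note \(\mathbb P[\Omega]=1\)). Applying the first identity then yields \(\mathbb E[\mathbb E[X\mid\mathcal F_2]]=\mathbb E[X]\).

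The only delicate point is the zero-probability convention. It requires checking that null cells contribute nothing on either side, which is why we argue on a finite execution space where \(X\) is bounded. The rest is bookkeeping with the refinement structure.
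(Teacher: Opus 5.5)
Your proof is correct. The paper gives no proof of this statement: it lists the tower rule among ``well-known facts'' and uses it without argument, implicitly appealing to the standard measure-theoretic result.

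You instead verify it directly inside the paper's own partition-based framework. You refine the $\mathcal F_1$-partition by the $\mathcal F_2$-partition and sum cell by cell. Compared with a textbook citation, this buys two things. First, it checks that the identity survives the paper's convention $\mathbb E[X\mid E]=0$ for null $E$. Second, it shows the equality holds for every execution $\omega$, not merely almost surely, which is how the paper's pointwise definition reads the statement.

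One small remark: finiteness of $\Omega$ and boundedness of $X$ are more than your argument needs. The paper's definition only requires the induced partitions to be finite. Its analysis implements each $\mathsf{reduce}$ call with a uniform random variable, so $\Omega$ itself need not be finite. However, $\mathbb E[\mathbf 1_{\Omega_i}X]=0$ for a null cell $\Omega_i$ holds for any integrable $X$, so your argument goes through unchanged in that setting.
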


\begin{fact}[Intersection tail bound]
\label{fact:intersection-tail}
Let $B\geq 1$ and $0\leq p\leq 1/2$.  Suppose that events
$E_1,\ldots,E_B$ satisfy
\(
   \mathbb{P}\!\left[\bigcap_{t\in F}E_t\right]
   \leq p^{|F|}
\)
for every $F\subseteq[B]$. Then
\(
   \mathbb{P}\!\left[\sum_{t=1}^{B}\mathbf{1}_{E_t}\geq B/2\right]
   \leq (4p)^{B/2}
\).
\end{fact}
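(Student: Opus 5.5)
We prove Fact~\ref{fact:intersection-tail} by a union bound over large index sets, so no independence among the \(E_t\) is needed. Set \(k=\lceil B/2\rceil\). If \(\sum_{t=1}^B\mathbf 1_{E_t}\geq B/2\), then at least \(k\) of the events occur. Hence some subset \(F\subseteq[B]\) with \(|F|=k\) satisfies \(\bigcap_{t\in F}E_t\). We fix the cardinality at exactly \(k\) rather than summing over all \(F\) of size at least \(B/2\), because larger sets are already covered by their size-\(k\) subsets.

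The union bound then gives
\[
\mathbb P\!\left[\sum_{t=1}^{B}\mathbf 1_{E_t}\geq B/2\right]
\leq\sum_{F\subseteq[B],\,|F|=k}\mathbb P\!\left[\bigcap_{t\in F}E_t\right]
\leq\binom{B}{k}p^{k}.
\]
We bound the binomial coefficient crudely by \(\binom{B}{k}\leq 2^B\).

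It remains to compare \(2^Bp^k\) with \((4p)^{B/2}=2^Bp^{B/2}\). Since \(k\geq B/2\) and \(0\leq p\leq 1\), we have \(p^k\leq p^{B/2}\). When \(p=0\) both sides vanish, with the convention \(0^{B/2}=0\) for \(B\geq1\). This yields
\[
\mathbb P\!\left[\sum_{t=1}^{B}\mathbf 1_{E_t}\geq B/2\right]\leq 2^Bp^{B/2}=(4p)^{B/2}.
\]

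The only delicate point is the case of odd \(B\), where \(k=(B+1)/2\) exceeds \(B/2\). Monotonicity of \(p^x\) in \(x\) for \(p\leq1\) handles this case. The hypothesis \(p\leq 1/2\) is not needed for the inequality itself; it only makes the bound nontrivial, since then \(4p\leq 2\), and the bound is useful when \(p<1/4\). There is no real obstacle here: the entire argument is the union bound plus the estimate \(\binom{B}{k}\leq 2^B\).
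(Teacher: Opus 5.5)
Your proof is correct and is essentially the paper's argument. The paper applies Markov's inequality to $\binom{X}{k}$ with $k=\lceil B/2\rceil$. Since $\binom{X}{k}=\sum_{|F|=k}\mathbf{1}_{\bigcap_{t\in F}E_t}$, that step is exactly your union bound over size-$k$ subsets, and both proofs then finish with $\binom{B}{k}\le 2^B$ and $p^k\le p^{B/2}$.
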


\begin{proof}
Put $X=\sum_{t=1}^{B}\mathbf{1}_{E_t}$ and $k=\lceil B/2\rceil$.
On the event $X\geq B/2$ we have $\binom{X}{k}\geq1$. By Markov's inequality, we have
\ifarxiv
\begin{equation*}
\mathbb{P}[X\geq B/2] \leq \mathbb E\!\left[\binom{X}{k}\right]=\sum_{\substack{F\subseteq[B]\\|F|=k}}
       \mathbb{P}\!\left[\bigcap_{t\in F}E_t\right]
\leq \binom{B}{k}p^k\leq 2^Bp^{B/2}=(4p)^{B/2}
\end{equation*}
\else
\begin{align*}
&\mathbb{P}[X\geq B/2] \leq \mathbb E\!\left[\binom{X}{k}\right]=\sum_{\substack{F\subseteq[B]\\|F|=k}}
       \mathbb{P}\!\left[\bigcap_{t\in F}E_t\right]\\
&\leq \binom{B}{k}p^k\leq 2^Bp^{B/2}=(4p)^{B/2}
\end{align*}
\fi
\end{proof}

\begin{definition}[Total variation distance]
For probability distributions \(P\) and \(Q\) over a finite domain \(\mathcal X\),
their total variation (TV) distance is
\(D_{\mathrm{TV}}(P,Q)
=\frac12\sum_{x\in\mathcal X}|P(x)-Q(x)|\).
It satisfies \(0\leq D_{\mathrm{TV}}(P,Q)\leq1\) and the triangle
inequality.
\end{definition}

\begin{fact}[Markov's inequality]
For any random variable $X\geq 0$ with $a>0$, 
\(\mathbb{P}[X\geq a]\leq\mathbb E[X]/a\).
\end{fact}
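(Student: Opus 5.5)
The plan is the standard one-line argument: compare \(X\) pointwise with a scaled indicator and then take expectations. Fix \(a>0\) and let \(E=\{X\geq a\}\). I will first establish the pointwise inequality
\[
a\,\mathbf{1}_{E}\leq X
\]
on every execution \(\omega\in\Omega\), by splitting into two cases.
\begin{itemize}
\item If \(\omega\in E\), then the left-hand side equals \(a\) and \(X(\omega)\geq a\) by definition of \(E\).
\item If \(\omega\notin E\), then the left-hand side is \(0\), and \(X(\omega)\geq0\) by the nonnegativity hypothesis.
\end{itemize}
This case split is the only place where \(X\geq0\) is used.

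Next I take expectations of both sides. Expectation is monotone and linear, and \(\mathbb E[\mathbf{1}_E]=\mathbb P[E]\), so
\[
a\,\mathbb P[X\geq a]\leq \mathbb E[X].
\]
Since \(a>0\), dividing by \(a\) gives the claim \(\mathbb P[X\geq a]\leq\mathbb E[X]/a\).

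If \(\mathbb E[X]=\infty\), the bound holds trivially. In the settings used later in the paper (finite executions, finitely many sample sets), all expectations are finite sums, so no integrability issue arises. No step here is a genuine obstacle; the only care needed is to state the pointwise comparison before taking expectations, and to use positivity of \(a\) for the final division.
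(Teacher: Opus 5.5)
Your proof is correct: the pointwise bound \(a\mathbf{1}_{\{X\geq a\}}\leq X\), followed by monotonicity of expectation and division by \(a>0\), is the standard argument. The paper states this as a well-known fact without proof, so there is no different route to compare. Your remark that only finite sums arise also matches how the paper applies the fact.
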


\begin{fact}[Hoeffding's inequality]
If \(X_1,\ldots,X_N\) are independent random variables taking values in
\([0,1]\), then, for every \(t>0\),
\(\mathbb{P}\!\left[
\sum_{i=1}^N X_i-\mathbb E\!\left[\sum_{i=1}^N X_i\right]\geq tN
\right]
\leq e^{-2Nt^2}\).

\end{fact}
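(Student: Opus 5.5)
The plan is the standard exponential-moment (Chernoff) argument, which is the same route that gives Markov-type tail bounds elsewhere in this appendix. Write \(S=\sum_{i=1}^N X_i\) and \(\mu_i=\mathbb E[X_i]\). For any \(s>0\), the map \(u\mapsto e^{su}\) is increasing and nonnegative, so Markov's inequality applied to \(e^{s(S-\mathbb E[S])}\) gives
\[
\mathbb P\!\left[S-\mathbb E[S]\geq tN\right]\leq e^{-stN}\,\mathbb E\!\left[e^{s(S-\mathbb E[S])}\right].
\]
By independence of \(X_1,\ldots,X_N\), the expectation factorizes as \(\prod_{i=1}^N\mathbb E[e^{s(X_i-\mu_i)}]\). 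The whole proof then reduces to a per-variable bound.

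The key step, and the only nontrivial one, is Hoeffding's lemma in the following form: if \(X\in[0,1]\) has mean \(\mu\), then \(\mathbb E[e^{s(X-\mu)}]\leq e^{s^2/8}\) for every real \(s\). I would prove it in four moves.
\begin{enumerate}
\item \textbf{Linear upper bound.} By convexity of \(u\mapsto e^{su}\) on \([0,1]\), we have \(e^{sX}\leq (1-X)+Xe^{s}\). Taking expectations gives \(\mathbb E[e^{s(X-\mu)}]\leq e^{-s\mu}(1-\mu+\mu e^{s})=e^{\varphi(s)}\), where \(\varphi(u)=-\mu u+\log(1-\mu+\mu e^{u})\).
\item \textbf{Initial values.} Direct computation gives \(\varphi(0)=0\) and \(\varphi'(0)=0\).
\item \textbf{Second derivative.} Put \(\pi_u=\mu e^u/(1-\mu+\mu e^u)\in[0,1]\). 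Then \(\varphi''(u)=\pi_u(1-\pi_u)\leq 1/4\).
\item \textbf{Taylor's theorem.} With Lagrange remainder, \(\varphi(s)=\tfrac12\varphi''(\xi)s^2\leq s^2/8\) for some \(\xi\) between \(0\) and \(s\).
\end{enumerate}
The degenerate cases \(\mu\in\{0,1\}\) need a separate remark. There \(X\) is almost surely constant, so \(\mathbb E[e^{s(X-\mu)}]=1\) and the bound is immediate.

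Combining the two parts gives \(\mathbb P[S-\mathbb E[S]\geq tN]\leq \exp(-stN+Ns^2/8)\) for every \(s>0\). I would then optimize the quadratic exponent in \(s\). Its minimizer is \(s=4t>0\), which yields exponent \(-4t^2N+2t^2N=-2Nt^2\). This is exactly the claimed bound \(e^{-2Nt^2}\).

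I expect the main obstacle to be Hoeffding's lemma, specifically the computation \(\varphi''(u)=\pi_u(1-\pi_u)\) and the care needed in the boundary cases. Everything else is routine: Markov's inequality, factorization by independence, and a one-variable optimization.
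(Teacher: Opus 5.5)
Your proof is correct. It is the standard argument: Markov's inequality applied to $e^{s(S-\mathbb E[S])}$, factorization by independence, Hoeffding's lemma via $\varphi''(u)=\pi_u(1-\pi_u)\le 1/4$ and Taylor's theorem, and finally the choice $s=4t$.

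The paper gives no proof of this fact; it lists it among well-known tools. It uses the bound only to amplify over the $n_u$ independent repetitions, with deviation at least $n_u/4$, which yields $e^{-n_u/8}\le\delta$. Your write-up is therefore the standard proof that the citation relies on.

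One minor point: the separate treatment of $\mu\in\{0,1\}$ is unnecessary. In those cases $\varphi\equiv 0$, and $1-\mu+\mu e^u>0$ always holds, so your general computation already covers them.
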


\begin{fact}[Jensen's inequality]
For \(a_1,\ldots,a_N\in\mathbb R\) and nonnegative
\(b_1,\ldots,b_N\) satisfying \(\sum_{i=1}^Nb_i=1\), \(\left(\sum_{i=1}^Nb_i a_i\right)^2
\leq\sum_{i=1}^Nb_i a_i^2\).
\end{fact}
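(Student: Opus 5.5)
The plan is to deduce the inequality from the nonnegativity of a variance, which avoids any appeal to convexity theory and uses only the hypotheses \(b_i\geq 0\) and \(\sum_{i=1}^N b_i=1\).

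\textbf{Step 1: define the weighted mean.} Let \(\mu=\sum_{i=1}^N b_i a_i\), the mean of a random variable that takes the value \(a_i\) with probability \(b_i\). The normalization \(\sum_{i=1}^N b_i=1\) is exactly what makes this a legitimate probability distribution.

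\textbf{Step 2: expand a nonnegative sum.} Consider \(\sum_{i=1}^N b_i(a_i-\mu)^2\). Each term is nonnegative because \(b_i\geq0\), so the sum is at least \(0\). Expanding the square gives
\[
\sum_{i=1}^N b_i a_i^2-2\mu\sum_{i=1}^N b_i a_i+\mu^2\sum_{i=1}^N b_i .
\]
Using \(\sum_{i=1}^N b_i a_i=\mu\) in the middle term and \(\sum_{i=1}^N b_i=1\) in the last term, this equals \(\sum_{i=1}^N b_i a_i^2-\mu^2\).

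\textbf{Step 3: conclude.} Combining the two steps yields \(0\leq\sum_{i=1}^N b_i a_i^2-\mu^2\), that is, \(\mu^2\leq\sum_{i=1}^N b_i a_i^2\), which is the claimed inequality.

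An alternative route is Cauchy--Schwarz applied to the vectors \((\sqrt{b_i})_i\) and \((\sqrt{b_i}\,a_i)_i\):
\[
\Bigl(\sum_{i=1}^N b_i a_i\Bigr)^2\leq\Bigl(\sum_{i=1}^N b_i\Bigr)\Bigl(\sum_{i=1}^N b_i a_i^2\Bigr)=\sum_{i=1}^N b_i a_i^2 .
\]
I would use the variance argument as the primary proof, since it is fully self-contained.

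There is no substantive obstacle. The only points needing care are that the \(b_i\) must be nonnegative, so that each summand \(b_i(a_i-\mu)^2\) is nonnegative, and that the normalization \(\sum_{i=1}^N b_i=1\) is invoked when the \(\mu^2\) term collapses. Zero weights cause no difficulty, because the corresponding terms simply vanish from every sum.
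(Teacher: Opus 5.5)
Your proof is correct. Expanding \(\sum_i b_i(a_i-\mu)^2=\sum_i b_i a_i^2-\mu^2\) uses exactly the two hypotheses: nonnegativity of the \(b_i\) gives the sign, and normalization collapses the \(\mu^2\) term. Nothing more is needed.

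The paper gives no proof of this statement. It lists it among ``well-known facts'' in the probability-basics subsection. It invokes it in the proof of Lemma~\ref{lem:integrated-one-core}, with weights \(\beta_{\ell,b}\), to pass from the per-hidden-state mean-squared-error bound to the bound on \(\mathbb E[(\xi_*^j(x_{1:\ell}))^2]\).

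Your alternative route is essentially the paper's next listed fact, weighted Cauchy--Schwarz, with \(w_i=b_i\) and \(\sum_i b_i=1\). That fact is stated with \(|a_i|\), so you would add \(|\sum_i b_i a_i|\leq\sum_i b_i|a_i|\). Either argument works. The variance argument has the advantage of not resting on another unproved fact.
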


\begin{fact}[Weighted Cauchy--Schwarz inequality]
For \(a_1,\ldots,a_N\in\mathbb R\) and nonnegative
\(w_1,\ldots,w_N\),
\(\left(\sum_{i=1}^Nw_i|a_i|\right)^2
\leq
\left(\sum_{i=1}^Nw_i\right)
\left(\sum_{i=1}^Nw_i a_i^2\right)\).
\end{fact}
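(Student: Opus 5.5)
The plan is to reduce the weighted Cauchy--Schwarz inequality to the Jensen inequality stated just above, by normalizing the weights into a probability vector. The degenerate case is handled first. If \(\sum_{i=1}^N w_i=0\), then every \(w_i=0\) because the weights are nonnegative, so both sides equal \(0\) and the inequality holds trivially.

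Otherwise set \(S=\sum_{i=1}^N w_i>0\) and \(b_i=w_i/S\). These are nonnegative and sum to \(1\). We apply Jensen's inequality to the reals \(|a_i|\) with weights \(b_i\):
\[
\left(\sum_{i=1}^N b_i|a_i|\right)^2\leq\sum_{i=1}^N b_i a_i^2 .
\]
Here we use \(|a_i|^2=a_i^2\). Substituting \(b_i=w_i/S\), the left-hand side becomes \(S^{-2}\bigl(\sum_i w_i|a_i|\bigr)^2\) and the right-hand side becomes \(S^{-1}\sum_i w_i a_i^2\). Multiplying both sides by \(S^2>0\) yields exactly the claimed bound.

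As an independent check, one can instead apply the ordinary Cauchy--Schwarz inequality to the vectors \((\sqrt{w_i})_i\) and \((\sqrt{w_i}\,|a_i|)_i\). Their inner product is \(\sum_i w_i|a_i|\), and their squared norms are \(\sum_i w_i\) and \(\sum_i w_i a_i^2\). This gives the same inequality in one line.

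There is no genuine obstacle here. The only point requiring care is the zero-total-weight case, where the normalization is undefined and must be treated separately as above.
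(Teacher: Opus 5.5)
Your proof is correct. The normalization $b_i=w_i/S$ turns the claim into the paper's Jensen fact applied to the reals $|a_i|$, and treating $S=0$ separately closes the only gap in that reduction. Your alternative one-line argument via ordinary Cauchy--Schwarz on $(\sqrt{w_i})_i$ and $(\sqrt{w_i}\,|a_i|)_i$ is also valid.

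The paper itself gives no proof of this statement. It is listed among the ``well-known facts'' in the probability-basics subsection, alongside the Jensen inequality you invoke, so there is no competing argument to compare against.

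Of your two routes, the direct Cauchy--Schwarz one is the more self-contained, since it needs no case split. The Jensen reduction has the mild advantage of resting only on another fact the paper already states, though the two are really the same inequality up to rescaling the weights.

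The degenerate case never arises in the paper's only use of this fact. In the proof of Theorem~\ref{thm:distribution-preservation}, the weights are the prefix marginals $P(x_{1:\ell}\mid\alpha)$ and $a_i=\xi(x_{1:\ell})$. Those weights total $n>0$, so the case $S=0$ you handle does not occur there.
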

We call \(m\) a median of \(z_1,\ldots,z_N\) if at least \(N/2\) of the
values are at most \(m\) and at least \(N/2\) are at least \(m\). This
convention applies for both odd and even \(N\).
\begin{fact}
\label{fact:median-square}
Let \(z_1,\ldots,z_N,z\in\mathbb R\), and let \(m\) be a median of
\(z_1,\ldots,z_N\). Then
 \((m-z)^2\leq\frac2N\sum_{i=1}^N(z_i-z)^2\).

\end{fact}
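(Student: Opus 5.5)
The plan is to use the median characterization directly. Let \(m\) be a median of \(z_1,\ldots,z_N\), and split into two cases according to whether \(m\geq z\) or \(m< z\). The two cases are symmetric: replace every \(z_i\) by \(-z_i\) and \(z\) by \(-z\), which turns a median of the \(z_i\) into a median of the \(-z_i\). So I will handle only the case \(m\geq z\).

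In that case, the convention on medians says at least \(N/2\) indices satisfy \(z_i\geq m\). For each such index, \(z_i-z\geq m-z\geq 0\), and squaring a nonnegative inequality gives \((z_i-z)^2\geq (m-z)^2\). All remaining terms \((z_i-z)^2\) are nonnegative, so
\[
\sum_{i=1}^N (z_i-z)^2 \;\geq\; \sum_{i:\,z_i\geq m}(z_i-z)^2 \;\geq\; \frac N2\,(m-z)^2 .
\]
Rearranging gives \((m-z)^2\leq\frac2N\sum_{i=1}^N(z_i-z)^2\), as claimed.

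The only point needing care is that the median convention applies for even \(N\) as well. This is exactly why the definition uses ``at least \(N/2\)'' rather than a strict majority, and the counting step above relies only on that bound. Nothing here depends on earlier results in the paper; the argument is a one-line counting argument, and I expect no genuine obstacle.
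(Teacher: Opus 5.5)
Your proof is correct and matches the paper's argument essentially verbatim. In the case \(z\leq m\), you restrict the sum to the at least \(N/2\) indices with \(z_i\geq m\), each contributing at least \((m-z)^2\). Your reduction of the other case via \(z_i\mapsto -z_i\) is just a tidier way of saying, as the paper does, that the same argument applies to the indices with \(z_i\leq m\).
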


\begin{proof}
By the definition of a median,
\(\left|\{j:z_j\geq m\}\right|\geq \frac{N}{2}\) and
\(\left|\{j:z_j\leq m\}\right|\geq \frac{N}{2}\).
Suppose first that \(z\leq m\). For every \(j\) satisfying \(z_j\geq m\),
\(|z_j-z|\geq m-z=|m-z|\). Thus,
\ifarxiv
\begin{equation*}
\sum_{j=1}^N (z_j-z)^2
\geq
\sum_{\{j:z_j\geq m\}}(z_j-z)^2
\geq
\frac{N}{2}(m-z)^2
\end{equation*}
\else
\begin{align*}
\sum_{j=1}^N (z_j-z)^2
&\geq
\sum_{\{j:z_j\geq m\}}(z_j-z)^2
\geq
\frac{N}{2}(m-z)^2
\end{align*}
\fi
Rearranging gives
\((m-z)^2\leq\frac{2}{N}\sum_{j=1}^N(z_j-z)^2\).
If \(z\geq m\), the same argument applies to the indices satisfying
\(z_j\leq m\), completing the proof.
\end{proof}

\begin{fact}
\label{fact:median-under-contamination}
Let \(z_1,\ldots,z_N,z\in\mathbb R\), let \(m\) be a median of
\(z_1,\ldots,z_N\), and suppose that \(\mathcal J\subseteq[N]\) satisfies
\(|\mathcal J|>9N/16\). Then
\((m-z)^2
\leq\frac{16}{N}\sum_{j\in\mathcal J}(z_j-z)^2\).
\end{fact}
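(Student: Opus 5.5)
The plan is to reduce to Fact~\ref{fact:median-square}, applied to the subfamily $(z_j)_{j\in\mathcal J}$, after first locating $m$ relative to the $\mathcal J$-values. The key observation is that a contamination set of size less than $7N/16$ cannot move the median of the full family outside a controlled quantile range of the $\mathcal J$-values. Write $J=|\mathcal J|>9N/16$ and $K=N-J<7N/16$.

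By symmetry, assume $z\le m$; the case $z\ge m$ is handled identically by reversing inequalities. By the definition of a median, at least $N/2$ indices satisfy $z_j\ge m$. At most $K$ of these lie outside $\mathcal J$, so
\[
|\{j\in\mathcal J: z_j\ge m\}|\ \ge\ N/2-K\ >\ N/2-7N/16\ =\ N/16 .
\]
For each such $j$ we have $z_j-z\ge m-z\ge 0$, and hence $(z_j-z)^2\ge (m-z)^2$. Summing only over these indices gives
\[
\sum_{j\in\mathcal J}(z_j-z)^2\ \ge\ \frac{N}{16}(m-z)^2 .
\]
Rearranging yields $(m-z)^2\le \frac{16}{N}\sum_{j\in\mathcal J}(z_j-z)^2$, which is the claim.

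The only step needing care is the counting. We must make sure the bound $N/2-K>N/16$ uses the median convention stated earlier, namely that at least $N/2$ values are at least $m$, with no rounding. We must also allow for ties at $m$, and these are harmless because the inequality used is non-strict. The strict inequality $|\mathcal J|>9N/16$ in fact gives slightly more than $N/16$ indices, so the constant $16$ holds with room to spare. The argument is routine and I do not expect any real obstacle. It mirrors the proof of Fact~\ref{fact:median-square}, with $N/2$ replaced by $N/2-K$.
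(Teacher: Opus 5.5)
Your argument is correct and is essentially the paper's proof: assume $z\le m$, observe that more than $N/2-7N/16=N/16$ of the at least $N/2$ indices with $z_j\ge m$ lie in $\mathcal J$, bound each such $(z_j-z)^2$ below by $(m-z)^2$, and rearrange, with the case $z\ge m$ symmetric. Your opening plan to apply Fact~\ref{fact:median-square} to the subfamily is not what you actually carry out, and it is not needed, since the direct counting argument you execute is the entire proof.
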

\begin{proof}
Assume first that \(z\leq m\). Since \(m\) is a median, at least \(N/2\)
indices satisfy \(z_j\geq m\). Because
\(|\mathcal J|>9N/16\), more than
\(\frac{N}{2}+\frac{9N}{16}-N=\frac{N}{16}\)
of these indices belong to \(\mathcal J\). For every such index,
\((z_j-z)^2\geq(m-z)^2\). Hence
\(\sum_{j\in\mathcal J}(z_j-z)^2
\geq\frac{N}{16}(m-z)^2\).
Rearranging proves the claim. If \(z\geq m\), the same argument applies to
the indices satisfying \(z_j\leq m\).
\end{proof}

\subsection{Correctness}
\label{subsec:core-correctness}

We first fix the random variables used by completion-probability queries. For
every \(j\in[n_u]\), \(\ell\in[n-1]\), and
\(x_{1:\ell}\in\Sigma^\ell\), let
\(\mathcal G^j(x_{1:\ell})\) be the collection of mutually independent
uniform random variables used by all calls to \(\mathsf{reduce}\) when
Algorithm~\ref{alg:hmm-weighted-prefix-estimate} evaluates the prefix
\(x_{1:\ell}\) in repetition \(j\).  Write \(\mathcal G^j
=
\bigcup_{\ell=1}^{n-1}
\bigcup_{x_{1:\ell}\in\Sigma^\ell}
\mathcal G^j(x_{1:\ell})\).
We use the following zero-likelihood convention. For
\(\ell\in[n-1]\), if \(P_{\text{hmm}}(x_{1:\ell})=0\), then
\(P_{\text{hmm}}(\alpha\mid x_{1:\ell})\),
\(\hat P_{\text{hmm}}^{\,j}(\alpha\mid x_{1:\ell})\), and
\(\hat P_{\text{hmm}}(\alpha\mid x_{1:\ell})\) are all defined to be zero. At \(\ell=n\),
for every \(j\in[n_u]\), we define
\(P_{\text{hmm}}(\alpha\mid x_{1:n})
=\hat P_{\text{hmm}}^{\,j}(\alpha\mid x_{1:n})
=\hat P_{\text{hmm}}(\alpha\mid x_{1:n})
=\mathbf{1}_{x_{1:n}\in L_n(A)}\).
Given a prefix \(x_{1:\ell-1}\), define the normalization constants
\(\Gamma(x_{1:\ell-1})=
\sum_{x_\ell\in\Sigma}
P_{\text{lm}}(x_\ell\mid x_{1:\ell-1})
P_{\text{hmm}}(\alpha\mid x_{1:\ell})\) and
\(\hat\Gamma(x_{1:\ell-1})=
\sum_{x_\ell\in\Sigma}
P_{\text{lm}}(x_\ell\mid x_{1:\ell-1})
\hat P_{\text{hmm}}(\alpha\mid x_{1:\ell})\).
When the corresponding normalizing constant is positive, define
\(P(x_\ell\mid x_{1:\ell-1},\alpha)=
\frac{
P_{\text{lm}}(x_\ell\mid x_{1:\ell-1})
P_{\text{hmm}}(\alpha\mid x_{1:\ell})
}{
\Gamma(x_{1:\ell-1})
}\) and
\(\hat P(x_\ell\mid x_{1:\ell-1},\alpha)=
\frac{
P_{\text{lm}}(x_\ell\mid x_{1:\ell-1})
\hat P_{\text{hmm}}(\alpha\mid x_{1:\ell})
}{
\hat\Gamma(x_{1:\ell-1})
}\).
If a normalizing constant is zero, define the corresponding conditional
distribution according to an arbitrary fixed rule. Finally, let
\(P(x_{1:n}\mid\alpha)=\prod_{\ell=1}^n
P(x_\ell\mid x_{1:\ell-1},\alpha)\) and
\(\hat P(x_{1:n}\mid\alpha)=\prod_{\ell=1}^n
\hat P(x_\ell\mid x_{1:\ell-1},\alpha)\).
For \(1\leq\ell\leq n\), we use
\(P(x_{1:\ell}\mid\alpha)\) for the corresponding exact prefix marginal. We assume that, for every
\(\ell\in[n]\) and every prefix satisfying
\(P(x_{1:\ell-1}\mid\alpha)>0\), \(\Gamma(x_{1:\ell-1})>0\).

Let \(\mathcal V
=\{(q,b):q\in Q^k,\ b\in H_k,\ 0\leq k<n,\ Z_+(q,b)\neq\emptyset\}\). We call a
product state \(v=(q,b)\) productive when \(Z_+(q,b)\neq\emptyset\), and use
the shorthand \(Z_+(v)=Z_+(q,b)\), \(W(v)=W(q,b)\), \(p^j(v)=p^j(q,b)\),
\(\bar p^j(v)=\bar p^j(q,b)\), \(\rho^j(v)=\rho^j(q,b)\),
\(S^{r,j}(v)=S^{r,j}(q,b)\), \(\hat S^{r,j}(v)=\hat S^{r,j}(q,b)\),
\(M^{t,j}(v)=M^{t,j}(q,b)\), and \(\hat W^j(v)=\hat W^j(q,b)\).

For \(j\in[n_u]\), let \(\mathcal N^{\theta,j}\) be the \(j\)-th of the
\(n_u\) outer repetitions of \(\mathsf{precompute}\), including the test
against \(\theta\), and let
\(\mathcal N^j\) be the same repetition without this test. Thus
\(\mathcal N^j\) always completes. In \(\mathcal N^j\), define
\ifarxiv
\begin{equation*}
\begin{aligned}
&\mathcal A^j
=\bigcup_{(q,b)\in\mathcal V}
\left\{p^j(q,b)\notin(1\pm\kappa)W(q,b)^{-1}\right\}\\
&\mathcal B^j
=\left\{
\sum_{r=1}^{n_sn_t}\sum_{k=0}^{n}
\sum_{q\in Q^k}\sum_{b\in H_k}|S^{r,j}(q,b)|\geq\theta
\right\}\\
&\mathcal C^j
=(\mathcal A^j)^c\cap(\mathcal B^j)^c
\end{aligned}
\end{equation*}
\else
\begin{align*}
\mathcal A^j
&=\bigcup_{(q,b)\in\mathcal V}
\left\{p^j(q,b)\notin(1\pm\kappa)W(q,b)^{-1}\right\}\\
\mathcal B^j
&=\left\{
\sum_{r=1}^{n_sn_t}\sum_{k=0}^{n}
\sum_{q\in Q^k}\sum_{b\in H_k}|S^{r,j}(q,b)|\geq\theta
\right\}\\
\mathcal C^j
&=(\mathcal A^j)^c\cap(\mathcal B^j)^c
\end{align*}
\fi
These represent, respectively, approximation failure, sample-overflow
failure, and core success (i.e., neither approximation failure nor sample
overflow occurs).

By autoregressive factorization and the standing assumption, every
\(x_{1:\ell}\) with \(P(x_{1:\ell}\mid\alpha)>0\), \(\ell\in[n]\),
satisfies \(P_{\text{hmm}}(\alpha\mid x_{1:\ell})>0\). Hence the following
relative errors are well defined.

For \(\ell\in[n]\) and every prefix satisfying
\(P(x_{1:\ell}\mid\alpha)>0\), define the repetition-level relative errors
for \(j\in[n_u]\) and the final relative error, respectively, by
\(\xi^j(x_{1:\ell})
=
\frac{
\hat P_{\text{hmm}}^{\,j}(\alpha\mid x_{1:\ell})
}{
P_{\text{hmm}}(\alpha\mid x_{1:\ell})
}
-1\) and
\(\xi(x_{1:\ell})
=
\frac{
\hat P_{\text{hmm}}(\alpha\mid x_{1:\ell})
}{
P_{\text{hmm}}(\alpha\mid x_{1:\ell})
}
-1\).
Because \(\hat P_{\text{hmm}}\) is the median of the repetition-level estimates and
the denominator is positive and independent of \(j\),
\(\xi(x_{1:\ell})
=
\median_{j\in[n_u]}\xi^j(x_{1:\ell})\).

Define the corresponding integrated squared errors by
\ifarxiv
\begin{equation*}
\Delta^j
=
\sum_{\ell=1}^{n}
\sum_{\substack{x_{1:\ell}\in\Sigma^\ell\\
P(x_{1:\ell}\mid\alpha)>0}}
P(x_{1:\ell}\mid\alpha)
\left(\xi^j(x_{1:\ell})\right)^2\qquad
\Delta
=
\sum_{\ell=1}^{n}
\sum_{\substack{x_{1:\ell}\in\Sigma^\ell\\
P(x_{1:\ell}\mid\alpha)>0}}
P(x_{1:\ell}\mid\alpha)
\left(\xi(x_{1:\ell})\right)^2
\end{equation*}
\else
\begin{align*}
\Delta^j
&=
\sum_{\ell=1}^{n}
\sum_{\substack{x_{1:\ell}\in\Sigma^\ell\\
P(x_{1:\ell}\mid\alpha)>0}}
P(x_{1:\ell}\mid\alpha)
\left(\xi^j(x_{1:\ell})\right)^2\\
\Delta
&=
\sum_{\ell=1}^{n}
\sum_{\substack{x_{1:\ell}\in\Sigma^\ell\\
P(x_{1:\ell}\mid\alpha)>0}}
P(x_{1:\ell}\mid\alpha)
\left(\xi(x_{1:\ell})\right)^2
\end{align*}
\fi
and let
\(\mathcal I^j
=
\mathcal C^j
\cap
\left\{\Delta^j\leq2n\kappa^2\right\}\).
For every \(x_{1:n}\) in these sums, the positivity statement above and the
terminal definition give
\(\xi^j(x_{1:n})=\xi(x_{1:n})=0\). Thus, the \(\ell=n\) terms in both
integrated errors are zero. In particular, for \(n=1\), both integrated
errors equal zero.

We first state the estimates used in the correctness proof. Their proofs are
given in the remainder of this section.

\stateestimates*

\sampleoverflow*

\integratedonecore*
For \(j\in[n_u]\) and \(\ell\in[n]\), write \(E_\ell^j\) for the event
\[
E_\ell^j=
\left\{
\hat P_{\mathrm{hmm}}^{\,j}(\alpha\mid x_{1:\ell})
\notin (1\pm\kappa)^{-1}
P_{\mathrm{hmm}}(\alpha\mid x_{1:\ell})
\right\}
\]
We then have the following lemma.
\begin{lemma}
\label{lem:fixed-sequence-one-core}
Fix \(x_{1:n}\in\Sigma^n\) satisfying
\(P_{\mathrm{hmm}}(x_{1:n})>0\). For every \(j\in[n_u]\), the following
bound holds.
\[
\mathbb P\!\left[
\mathcal C^j\cap\bigcup_{\ell=1}^{n-1}E_\ell^j
\right]\leq\frac1{16}
\]
\end{lemma}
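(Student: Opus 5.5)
The plan is to work inside the untruncated repetition \(\mathcal N^j\), union-bound over the \(n-1\) prefixes of the fixed sequence and over hidden states, and for each \((\ell,b)\) reduce the event \(\mathcal C^j\cap E_\ell^j\) to a median-of-means failure for the query estimator \(\hat W_R^j(b)\). First I reduce to per-hidden-state accuracy. Fix \(\ell\in[n-1]\) and let \(R=R(x_{1:\ell})\). If \(P_{\text{hmm}}(\alpha\mid x_{1:\ell})=0\) or \(R=\emptyset\), then Algorithm~\ref{alg:hmm-weighted-prefix-estimate} returns the exact value \(0\). Likewise, every \(b\) with \(R_b^+=\emptyset\) has \(W_R(b)=0=\hat W_R^j(b)\). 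Note that on \(\mathcal C^j\) the repetition does not fail, so \(\hat P_{\text{hmm}}^{\,j}\) is given by Line~\ref{line:generation-posterior-aggregation}. By the posterior identity, \(\hat P^{\,j}_{\text{hmm}}\) and \(P_{\text{hmm}}\) are the same nonnegative combination \(\sum_b P_{\text{hmm}}(y_\ell=b\mid x_{1:\ell})(\cdot)\) of \(\hat W_R^j(b)\) and \(W_R(b)\). Hence it suffices that \(\hat W_R^j(b)\in(1\pm\kappa)^{-1}W_R(b)\) for every \(b\) with \(R_b^+\neq\emptyset\). The target becomes
\[
\sum_{\ell=1}^{n-1}\sum_{b\in H_\ell}\mathbb P\!\left[(\mathcal A^j)^c\cap\left\{\hat W_R^j(b)\notin(1\pm\kappa)^{-1}W_R(b)\right\}\right]\leq\frac1{16}.
\]
There are at most \(nh\leq h|Q^u|\) terms, so each must be at most \(1/(16h|Q^u|)\).

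Second, unbiasedness and variance for one sample index \(r\). Set \(X_r=|\widetilde S_R^{r,j}(b)|/\rho_R^j(b)\). I use the weighted sampling invariant for \(S^{r,j}(q,b)\), formalized in the appendix's probabilistic environment. The query-time \(\mathsf{reduce}\) at rate \(\rho_R^j(b)/p^j(q,b)\leq1\) then gives each atom rate \(\rho_R^j(b)w(z,b)\). The \(\mathsf{first}(R,x)\) filter counts every atom of \(Z_R\) exactly once, so \(\mathbb E[X_r]=W_R(b)\) on the event that the rates are fixed as they are. For the second moment, I expand \(\mathbb E[X_r^2]\) over pairs of atoms \((z,z')\). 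The correlation of a pair arises only through the shared canonical suffix starting at their convergence state \(v^{z,z'}=(q_k,y_k)\). This gives \(\mathbb P[z,z'\in\widetilde S]\lesssim\rho^2 w(z)w(z')/(p^j(q_k,y_k)\,w(z^{\text{suffix}},y_k))\). Grouping the pairs by \(k\) and applying Proposition~\ref{lem:weighted-splicing}, the contribution of layer \(k\) is at most \(W\cdot W(q_k,y_k)^{-1}p^j(q_k,y_k)^{-1}\sum_z w(z)\). On \((\mathcal A^j)^c\) this is at most \(W_R(b)^2/(1-\kappa)\) per layer. Summing over the \(n+1\) layers, and including the diagonal and the \(R\)-level alignment step, gives \(\operatorname{Var}[X_r]\leq\frac{n+1}{1-\kappa}W_R(b)^2\).

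Third, concentration. I would apply Chebyshev to the block mean \(M_R^{t,j}(b)\) of \(n_s=\lceil 16(n+1)/(\kappa^2(1-\kappa))\rceil\) terms. This shows that each block deviates by more than \(\kappa W_R(b)\) with probability at most \(1/16\). Such a deviation covers falling outside \((1\pm\kappa)^{-1}W_R(b)\), after tightening constants slightly. Blocks use disjoint sample indices, and the \(r\)-chains of \(\mathsf{reduce}\) randomness are independent across \(r\). So every subfamily \(F\) of block failures has probability at most \((1/16)^{|F|}\), once the rates are frozen. Fact~\ref{fact:intersection-tail} with \(p=1/16\) then bounds the median failure by \((1/4)^{n_t/2}=2^{-n_t}\leq 1/(16h|Q^u|)\), since \(n_t=\lceil 8\log(16h|Q^u|)\rceil\). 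Summing over \((\ell,b)\) completes the bound.

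The main obstacle is that the rates \(p^j\) are random and are shared between precomputation and all queries, so ``conditioning on \((\mathcal A^j)^c\)'' is not a clean independence statement. My first attempt would be a truncation (coupling) argument. Define modified rates that are replaced by the exact values \(W^{-1}\) whenever they leave \((1\pm\kappa)W^{-1}\). The modified process coincides with \(\mathcal N^j\) on \((\mathcal A^j)^c\), and for it the invariant and the covariance bound hold layer by layer via the tower rule, conditioning on the rates of earlier layers. The Chebyshev and intersection bounds are then proved for the modified process and transferred to the intersection with \((\mathcal A^j)^c\).
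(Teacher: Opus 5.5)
Your outer structure matches the paper's. You reduce \(E_\ell^j\) to per-hidden-state accuracy of \(\hat W_R^j(b)\), union-bound over the at most \((n-1)h\le h|Q^u|\) pairs \((\ell,b)\), control the median of block means with Fact~\ref{fact:intersection-tail}, and pass to an oracle-corrected process that coincides with \(\mathcal N^j\) on \((\mathcal A^j)^c\). The gap is in the step that supplies the hypothesis of Fact~\ref{fact:intersection-tail}.

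You assert that any family \(F\) of block failures has probability at most \((1/16)^{|F|}\), because blocks use disjoint indices and the \(\mathsf{reduce}\) randomness is independent across \(r\), ``once the rates are frozen.'' The rates cannot be frozen. At every state, \(p^j(q,b)=\min(\rho^j(q,b),\hat W^j(q,b)^{-1})\) is a median over the sets \(\hat S^{r,j}(q,b)\) for \emph{all} \(r\in[n_sn_t]\). This single random value then sets the final reduction probability of every \(S^{r,j}(q,b)\), and through \(\rho^j\) every reduction at the ancestors. Hence the samples of block \(t\) at a descendant state influence the probabilities used by block \(t'\), and the block means \(M_R^{t,j}(b)\) are genuinely dependent. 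The oracle correction only confines the rates to an interval; it does not remove this dependence.

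Conditioning on the realized rates does not rescue the argument either. Given \(p^j(q,b)\), the sets \(\hat S^{r,j}(q,b)\) are constrained through their median, so neither the sampling invariant nor cross-index independence holds conditionally. The same remark applies to your claim that \(\mathbb E[X_r]=W_R(b)\) ``on the event that the rates are fixed.'' Only the unconditional identity, obtained by the tower rule as in Lemma~\ref{lem:weighted-unit-expectation}, is valid.

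The missing idea is the product-moment bound of Lemma~\ref{lem:weighted-block-intersection}. In \(\mathcal N^{*,j}\), for every \(\mathcal T\subseteq[n_t]\), \(\mathbb E[\prod_{t\in\mathcal T}(M_R^{t,j}(b)-W_R(b))^2]\le\bigl((n+1)W_R(b)^2/((1-\kappa)n_s)\bigr)^{|\mathcal T|}\). With Markov's inequality this gives the intersection hypothesis directly, with no independence needed. The paper proves it as follows:
\begin{itemize}
\item It dominates \(n_s^2(M-W)^2\) by a potential that is multilinear in the per-index normalized variables. Each same-index off-diagonal pair term is inflated by \(p^j(v^{z_1,z_2})W(v^{z_1,z_2})/(1-\kappa)\ge1\); this is where the oracle correction enters.
\item This factor is already fixed when the pair is transported toward its convergence state. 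There it exactly cancels the random \(1/p^j\) produced by idempotence, so every later step is an exact conditional-expectation identity.
\item Conditional on \(\mathcal F^j_{n-k}\) (respectively \(\hat{\mathcal F}^j_{n-k}\)), the calls to \(\mathsf{reduce}\) at a given depth are independent across indices. So the product over blocks factors one layer at a time (Proposition~\ref{lem:weighted-potential-transport}).
\item The product is carried to the terminal layer, where it is deterministic and bounded via Proposition~\ref{lem:weighted-splicing}.
\end{itemize}

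Your single-block Chebyshev step can be made rigorous this way. The tower rule with Lemma~\ref{lem:weighted-two-atom-transfer} shows the \(X_r\) are uncorrelated and gives your variance bound. But that is only the case \(|\mathcal T|=1\).

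A smaller issue: replacing an out-of-range rate by \(W^{-1}\) can make \(p^j/\rho^j>1\), since only \(\rho^j\ge(1-\kappa)/W\) is guaranteed. The paper uses \((1-\kappa)W^{-1}\) for this reason.
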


\prefixaccuracy*
\begin{proof}
When \(n=1\), the result follows from the exact terminal clause of
Algorithm~\ref{alg:hmm-weighted-prefix-estimate}. Assume \(n\geq2\).
For every \(j\in[n_u]\), Lemmas~\ref{lem:sample-overflow} and
\ref{lem:fixed-sequence-one-core} give
\ifarxiv
\begin{equation*}
\mathbb P\!\left[\bigcup_{\ell=1}^{n-1}E_\ell^j\right]
\leq
\mathbb P[(\mathcal C^j)^c]+\frac1{16}
\leq\frac3{16}
\end{equation*}
\else
\begin{align*}
\mathbb P\!\left[\bigcup_{\ell=1}^{n-1}E_\ell^j\right]
&\leq
\mathbb P[(\mathcal C^j)^c]+\frac1{16}
\leq\frac3{16}
\end{align*}
\fi
Let \(X_j=\bigcup_{\ell=1}^{n}E_\ell^j\).
These events are independent across \(j\). Hence Hoeffding's inequality and
the definition of \(n_u\) give the following bound.
\[
\mathbb P\!\left[
\sum_{j=1}^{n_u}
\mathbf{1}_{X_j}
\geq\frac{n_u}{2}
\right]
\leq e^{-n_u/8}
\leq\delta
\]
Outside this event, for every \(\ell\in[n]\), more than half of the
repetition-level estimates lie in
\((1\pm\kappa)^{-1}
P_{\mathrm{hmm}}(\alpha\mid x_{1:\ell})\). Their median therefore lies in
the same interval. Since \(\kappa=\varepsilon/(6+\varepsilon)\),
\ifarxiv
\begin{equation*}
\frac{1}{1+\kappa}
=1-\frac{\varepsilon}{6+2\varepsilon}
\geq 1-\frac{\varepsilon}{6}\geq 1-\varepsilon
\frac{1}{1-\kappa}
=1+\frac{\varepsilon}{6}\leq 1+\varepsilon
\end{equation*}
\else
\begin{align*}
\frac{1}{1+\kappa}
&=1-\frac{\varepsilon}{6+2\varepsilon}
\geq 1-\frac{\varepsilon}{6}\geq 1-\varepsilon\\
\frac{1}{1-\kappa}
&=1+\frac{\varepsilon}{6}\leq 1+\varepsilon
\end{align*}
\fi
Consequently, the following inclusion holds.
\[
(1\pm\kappa)^{-1}
P_{\mathrm{hmm}}(\alpha\mid x_{1:\ell})
\subseteq
(1\pm\varepsilon)
P_{\mathrm{hmm}}(\alpha\mid x_{1:\ell})
\]
This proves the result.
\end{proof}

We next give two lemmas that convert completion-probability
error into total variation distance.

\begin{lemma}
\label{lem:normalized-reweighting}
Fix \(\ell\in[n]\) and a prefix \(x_{1:\ell-1}\) satisfying
\(P(x_{1:\ell-1}\mid\alpha)>0\). Then 

\begin{align*}
D_{\mathrm{TV}}\!\left(
\hat P(\cdot\mid x_{1:\ell-1},\alpha),
P(\cdot\mid x_{1:\ell-1},\alpha)\right)\leq \sum_{\substack{x_\ell\in\Sigma\\
P(x_\ell\mid x_{1:\ell-1},\alpha)>0}}
P(x_\ell\mid x_{1:\ell-1},\alpha)\cdot\left|
\frac{\hat P_{\text{hmm}}(\alpha\mid x_{1:\ell})}
{P_{\text{hmm}}(\alpha\mid x_{1:\ell})}-1
\right|\\
\end{align*}

\end{lemma}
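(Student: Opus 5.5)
The plan is to write both conditional distributions as normalized reweightings of the same base distribution \(P(\cdot\mid x_{1:\ell-1},\alpha)\), then bound the total variation by comparing an unnormalized intermediate measure with each of them. Write \(p_a=P(a\mid x_{1:\ell-1},\alpha)\). For \(a\) with \(p_a>0\), let \(\xi_a=\hat P_{\text{hmm}}(\alpha\mid x_{1:\ell-1}\cdot a)/P_{\text{hmm}}(\alpha\mid x_{1:\ell-1}\cdot a)-1\). By the standing assumption \(\Gamma(x_{1:\ell-1})>0\), so \(p_a\) is well defined.

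\textbf{Step 1: support containment.} I will show that \(p_a=0\) forces \(P_{\text{lm}}(a\mid x_{1:\ell-1})\hat P_{\text{hmm}}(\alpha\mid x_{1:\ell-1}\cdot a)=0\). If \(P_{\text{lm}}(a\mid x_{1:\ell-1})=0\) this is immediate. Otherwise \(P_{\text{hmm}}(\alpha\mid x_{1:\ell})=0\), and there are three sub-cases.
\begin{itemize}
\item At \(\ell=n\) both quantities are the same indicator.
\item If \(P_{\text{hmm}}(x_{1:\ell})=0\), both are zero by the convention.
\item Otherwise the posterior identity forces \(W_R(b)=0\) for every \(b\) with positive posterior. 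Since a nonempty \(\mathcal E_\ell(q,b)\) yields a positive-weight path and hence \(W(q,b)>0\), every such \(b\) has \(R_b^+=\emptyset\). Algorithm~\ref{alg:hmm-weighted-prefix-estimate} then sets \(\hat W^j_R(b)=0\), so every \(\hat P^{\,j}_{\text{hmm}}\) and their median vanish.
\end{itemize}
Hence \(\hat\Gamma(x_{1:\ell-1})/\Gamma(x_{1:\ell-1})=\sum_{a:p_a>0}p_a(1+\xi_a)=1+\bar\xi\), where \(\bar\xi=\sum_a p_a\xi_a\).

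\textbf{Step 2: the degenerate case \(\hat\Gamma=0\).} Here \(\hat P\) follows an arbitrary rule. But \(\hat\Gamma=0\) with \(p_a>0\) means \(\xi_a=-1\) for every \(a\) in the support. The right-hand side then equals \(\sum_a p_a=1\), which dominates any TV distance.

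\textbf{Step 3: main estimate for \(\hat\Gamma>0\).} Set \(u_a=p_a(1+\xi_a)\geq0\). This measure has total mass \(s=1+\bar\xi>0\), and \(\hat P(a\mid x_{1:\ell-1},\alpha)=u_a/s\). By the triangle inequality,
\[
\sum_a\left|\tfrac{u_a}{s}-p_a\right|\leq\sum_a\left|\tfrac{u_a}{s}-u_a\right|+\sum_a|u_a-p_a| .
\]
The first sum equals \(s\,|1/s-1|=|1-s|=|\bar\xi|\leq\sum_a p_a|\xi_a|\). The second sum equals \(\sum_a p_a|\xi_a|\) directly. Halving gives \(D_{\mathrm{TV}}\leq\sum_a p_a|\xi_a|\), which is the claim.

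The only delicate point is Step 1: checking that the estimator never puts mass where the exact reweighting has none. This is what allows the sum to be restricted to \(P(x_\ell\mid x_{1:\ell-1},\alpha)>0\). It relies on the productive-edge test making \(\hat W_R^j(b)\) vanish exactly when \(W_R(b)=0\), and I would verify it carefully from the definitions of \(\mathcal E_\ell\) and \(R_b^+\). The remaining steps are elementary.
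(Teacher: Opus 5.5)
Your proposal is correct and follows essentially the same route as the paper. You first show that \(P_{\text{hmm}}(\alpha\mid x_{1:\ell})=0\) forces the estimate to vanish, via the terminal clause, the zero-likelihood convention, and \(R_b^+=\emptyset\); you then bound \(2D_{\mathrm{TV}}\) by \(|1-\hat\Gamma/\Gamma|+\sum_a p_a|\xi_a|\) through the unnormalized weights, and dispose of \(\hat\Gamma=0\) by noting that the right-hand side then equals one. The only details the paper states that you leave implicit are that failed repetitions and the \(R=\emptyset\) branch already return zero.
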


\begin{proof}
We first show that \(P_{\text{hmm}}(\alpha\mid x_{1:\ell})=0\) implies \(\hat P_{\text{hmm}}(\alpha\mid x_{1:\ell})=0\). If \(\ell<n\) and \(P_{\text{hmm}}(x_{1:\ell})=0\), the implication follows from the zero-likelihood convention. If \(\ell=n\), it follows from the terminal definition. It remains to consider \(\ell<n\) and \(P_{\text{hmm}}(x_{1:\ell})>0\). Assume \(P_{\text{hmm}}(\alpha\mid x_{1:\ell})=0\) and write \(R=R(x_{1:\ell})\). If \(R=\emptyset\), Algorithm~\ref{alg:hmm-weighted-prefix-estimate} returns zero. Otherwise, since
\(
P_{\text{hmm}}(\alpha\mid x_{1:\ell})
=\sum_{b\in H_\ell}
P_{\text{hmm}}(y_\ell=b\mid x_{1:\ell})W_R(b)
\),
nonnegativity implies that
\(W_R(b)=0\) for every \(b\in H_\ell\) satisfying
\(P_{\text{hmm}}(y_\ell=b\mid x_{1:\ell})>0\). For every such \(b\),
\(R_b^+=\emptyset\), because any \(q\in R_b^+\) would have a positive-weight completion. Thus,
each nonfailed repetition of Algorithm~\ref{alg:hmm-weighted-prefix-estimate} sets
\(\hat W_R^j(b)=0\), while each failed repetition is already assigned a zero estimate. Every \(b\in H_\ell\) satisfying
\(P_{\text{hmm}}(y_\ell=b\mid x_{1:\ell})=0\) contributes zero to
\(\hat P_{\text{hmm}}^{\,j}(\alpha\mid x_{1:\ell})\). Hence every
repetition-level estimate is zero, as is the median. Consequently, for every
\(x_\ell\in\Sigma\) satisfying
\(P(x_\ell\mid x_{1:\ell-1},\alpha)=0\),
\(P_{\text{lm}}(x_\ell\mid x_{1:\ell-1})
\hat P_{\text{hmm}}(\alpha\mid x_{1:\ell})=0\). Suppose first that
\(\hat\Gamma(x_{1:\ell-1})>0\). For every
\(x_\ell\in\Sigma\) satisfying
\(P(x_\ell\mid x_{1:\ell-1},\alpha)>0\), the definitions give
\ifarxiv
\begin{equation*}
\frac{\hat P(x_\ell\mid x_{1:\ell-1},\alpha)}{P(x_\ell\mid x_{1:\ell-1},\alpha)}=\frac{\Gamma(x_{1:\ell-1})}{\hat\Gamma(x_{1:\ell-1})}\frac{\hat P_{\text{hmm}}(\alpha\mid x_{1:\ell})}
{P_{\text{hmm}}(\alpha\mid x_{1:\ell})}
\end{equation*}
\else
\begin{align*}
\frac{\hat P(x_\ell\mid x_{1:\ell-1},\alpha)}{P(x_\ell\mid x_{1:\ell-1},\alpha)}=\frac{\Gamma(x_{1:\ell-1})}{\hat\Gamma(x_{1:\ell-1})}\frac{\hat P_{\text{hmm}}(\alpha\mid x_{1:\ell})}
{P_{\text{hmm}}(\alpha\mid x_{1:\ell})}
\end{align*}
\fi

Moreover,
\ifarxiv
\begin{equation*}
\sum_{\substack{x_\ell\in\Sigma\\
P(x_\ell\mid x_{1:\ell-1},\alpha)>0}}
P(x_\ell\mid x_{1:\ell-1},\alpha)\frac{\hat P_{\text{hmm}}(\alpha\mid x_{1:\ell})}
{P_{\text{hmm}}(\alpha\mid x_{1:\ell})}
=\frac{\hat\Gamma(x_{1:\ell-1})}{\Gamma(x_{1:\ell-1})}
\end{equation*}
\else
\begin{align*}
&\sum_{\substack{x_\ell\in\Sigma\\
P(x_\ell\mid x_{1:\ell-1},\alpha)>0}}
P(x_\ell\mid x_{1:\ell-1},\alpha)\frac{\hat P_{\text{hmm}}(\alpha\mid x_{1:\ell})}
{P_{\text{hmm}}(\alpha\mid x_{1:\ell})}\\
&=\frac{\hat\Gamma(x_{1:\ell-1})}{\Gamma(x_{1:\ell-1})}
\end{align*}
\fi
Applying the triangle inequality through these unnormalized weights gives
\ifarxiv
\begin{equation*}
\begin{aligned}
&2D_{\mathrm{TV}}\!\left(
\hat P(\cdot\mid x_{1:\ell-1},\alpha),
P(\cdot\mid x_{1:\ell-1},\alpha)
\right)\\
 \leq &\left|1-\frac{\hat\Gamma(x_{1:\ell-1})}
{\Gamma(x_{1:\ell-1})}\right|+\sum_{\substack{x_\ell\in\Sigma\\
P(x_\ell\mid x_{1:\ell-1},\alpha)>0}}\bigl(
P(x_\ell\mid x_{1:\ell-1},\alpha)\left|
\frac{\hat P_{\text{hmm}}(\alpha\mid x_{1:\ell})}
{P_{\text{hmm}}(\alpha\mid x_{1:\ell})}-1
\right|\bigr)\\
\leq &2\sum_{\substack{x_\ell\in\Sigma\\
P(x_\ell\mid x_{1:\ell-1},\alpha)>0}}\bigl(P(x_\ell\mid x_{1:\ell-1},\alpha)\left|
\frac{\hat P_{\text{hmm}}(\alpha\mid x_{1:\ell})}
{P_{\text{hmm}}(\alpha\mid x_{1:\ell})}-1
\right|\bigr)
\end{aligned}
\end{equation*}
\else
\begin{align*}
&2D_{\mathrm{TV}}\!\left(
\hat P(\cdot\mid x_{1:\ell-1},\alpha),
P(\cdot\mid x_{1:\ell-1},\alpha)
\right) \\
&\leq \left|1-\frac{\hat\Gamma(x_{1:\ell-1})}
{\Gamma(x_{1:\ell-1})}\right|+\sum_{\substack{x_\ell\in\Sigma\\
P(x_\ell\mid x_{1:\ell-1},\alpha)>0}}\bigl(\\
&\quad
P(x_\ell\mid x_{1:\ell-1},\alpha)\left|
\frac{\hat P_{\text{hmm}}(\alpha\mid x_{1:\ell})}
{P_{\text{hmm}}(\alpha\mid x_{1:\ell})}-1
\right|\bigr)\\
&\leq 2\sum_{\substack{x_\ell\in\Sigma\\
P(x_\ell\mid x_{1:\ell-1},\alpha)>0}}\bigl(\\
&P(x_\ell\mid x_{1:\ell-1},\alpha)\left|
\frac{\hat P_{\text{hmm}}(\alpha\mid x_{1:\ell})}
{P_{\text{hmm}}(\alpha\mid x_{1:\ell})}-1
\right|\bigr)
\end{align*}
\fi

If \(\hat\Gamma(x_{1:\ell-1})=0\), nonnegativity gives
\(\hat P_{\text{hmm}}(\alpha\mid x_{1:\ell})=0\) for every
\(x_\ell\in\Sigma\) satisfying
\(P(x_\ell\mid x_{1:\ell-1},\alpha)>0\), and the claim trivially holds.
\end{proof}

\begin{lemma}[Sequential total variation]
\label{lem:sequential-tv}
For any two autoregressive distributions \(P(\cdot\mid\alpha)\) and
\(\hat P(\cdot\mid\alpha)\) on \(\Sigma^n\),
\ifarxiv
\begin{equation*}
\begin{aligned}
&D_{\mathrm{TV}}\!\left(\hat P(\cdot\mid\alpha),P(\cdot\mid\alpha)\right)
\leq \sum_{\ell=1}^n
\sum_{x_{1:\ell-1}\in\Sigma^{\ell-1}}
P(x_{1:\ell-1}\mid\alpha)D_{\mathrm{TV}}\!\left(\hat P(\cdot\mid x_{1:\ell-1},\alpha),
P(\cdot\mid x_{1:\ell-1},\alpha)
\right)
\end{aligned}
\end{equation*}
\else
\begin{align*}
&D_{\mathrm{TV}}\!\left(\hat P(\cdot\mid\alpha),P(\cdot\mid\alpha)\right)\\
&\quad\leq \sum_{\ell=1}^n
\sum_{x_{1:\ell-1}\in\Sigma^{\ell-1}}
P(x_{1:\ell-1}\mid\alpha)\\
&\qquad\qquad\cdot D_{\mathrm{TV}}\!\left(\hat P(\cdot\mid x_{1:\ell-1},\alpha),
P(\cdot\mid x_{1:\ell-1},\alpha)
\right)
\end{align*}
\fi

\end{lemma}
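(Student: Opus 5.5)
We prove the sequential total variation bound (Lemma~\ref{lem:sequential-tv}) by a hybrid (telescoping) argument over the $n$ generation steps. For $0\le k\le n$, define the hybrid autoregressive distribution $H_k$ on $\Sigma^n$ that draws the first $k$ tokens from the exact conditionals $P(\cdot\mid x_{1:\ell-1},\alpha)$ and the remaining $n-k$ tokens from $\hat P(\cdot\mid x_{1:\ell-1},\alpha)$:
\[
H_k(x_{1:n})=\prod_{\ell=1}^{k}P(x_\ell\mid x_{1:\ell-1},\alpha)\prod_{\ell=k+1}^{n}\hat P(x_\ell\mid x_{1:\ell-1},\alpha).
\]
Then $H_0=\hat P(\cdot\mid\alpha)$ and $H_n=P(\cdot\mid\alpha)$. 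Both are genuine probability distributions, since every conditional, including those fixed by the arbitrary rule when a normalizer vanishes, is a distribution on $\Sigma$. The triangle inequality for $D_{\mathrm{TV}}$ then gives
\[
D_{\mathrm{TV}}(\hat P(\cdot\mid\alpha),P(\cdot\mid\alpha))\le\sum_{\ell=1}^{n}D_{\mathrm{TV}}(H_{\ell-1},H_\ell).
\]

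It remains to show that each term equals the $\ell$-th summand of the claimed bound. The distributions $H_{\ell-1}$ and $H_\ell$ share the prefix factor $\prod_{i<\ell}P(x_i\mid x_{1:i-1},\alpha)=P(x_{1:\ell-1}\mid\alpha)$ and the suffix kernel $\prod_{i>\ell}\hat P(x_i\mid x_{1:i-1},\alpha)$. They differ only in the factor at step $\ell$: $\hat P(x_\ell\mid x_{1:\ell-1},\alpha)$ in $H_{\ell-1}$ versus $P(x_\ell\mid x_{1:\ell-1},\alpha)$ in $H_\ell$. Hence
\[
2D_{\mathrm{TV}}(H_{\ell-1},H_\ell)=\sum_{x_{1:\ell-1}}P(x_{1:\ell-1}\mid\alpha)\sum_{x_\ell}\bigl|\hat P(x_\ell\mid x_{1:\ell-1},\alpha)-P(x_\ell\mid x_{1:\ell-1},\alpha)\bigr|\sum_{x_{\ell+1:n}}\prod_{i>\ell}\hat P(x_i\mid x_{1:i-1},\alpha).
\]
The innermost sum is $1$, because it sums an autoregressive distribution over the remaining tokens; we evaluate it by summing out $x_n,x_{n-1},\dots,x_{\ell+1}$ in turn. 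What remains is exactly $2\sum_{x_{1:\ell-1}}P(x_{1:\ell-1}\mid\alpha)\,D_{\mathrm{TV}}(\hat P(\cdot\mid x_{1:\ell-1},\alpha),P(\cdot\mid x_{1:\ell-1},\alpha))$.

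The only step that needs care is this marginalization of the suffix kernel to $1$. It relies on each conditional being normalized, which the fixed-rule convention guarantees. No other step is nontrivial, and the lemma holds for any two autoregressive distributions without using the specific form of $P$ or $\hat P$.
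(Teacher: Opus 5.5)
Your proof is correct and follows essentially the same route as the paper: the paper defines the same interpolating distributions \(P^{[k]}\), uses the shared exact prefix marginal and the normalized approximate continuation to compute \(D_{\mathrm{TV}}(P^{[\ell-1]},P^{[\ell]})\), and concludes by the triangle inequality. Your explicit remark that the fixed-rule convention keeps every conditional normalized makes precise a step the paper only states in passing.
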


\begin{proof}
For \(0\leq k\leq n\), define the interpolating distribution between
\(\hat P(\cdot\mid\alpha)\) and \(P(\cdot\mid\alpha)\) by
\ifarxiv
\begin{equation*}
P^{[k]}(x_{1:n}\mid\alpha)
=\prod_{\ell=1}^{k}
P(x_\ell\mid x_{1:\ell-1},\alpha) \cdot\prod_{\ell=k+1}^{n}
\hat P(x_\ell\mid x_{1:\ell-1},\alpha)
\end{equation*}
\else
\begin{align*}
& P^{[k]}(x_{1:n}\mid\alpha)\\
&=\prod_{\ell=1}^{k}
P(x_\ell\mid x_{1:\ell-1},\alpha) \cdot\prod_{\ell=k+1}^{n}
\hat P(x_\ell\mid x_{1:\ell-1},\alpha)
\end{align*}
\fi

Thus
\(P^{[0]}=\hat P(\cdot\mid\alpha)\) and
\(P^{[n]}=P(\cdot\mid\alpha)\). When comparing
\(P^{[\ell-1]}\) and \(P^{[\ell]}\), the prefix \(x_{1:\ell-1}\) has exact
marginal \(P(x_{1:\ell-1}\mid\alpha)\), and the common approximate
continuation after layer \(\ell\) sums to one. Hence
\ifarxiv
\begin{equation*}
D_{\mathrm{TV}}(P^{[\ell-1]},P^{[\ell]})
= \sum_{x_{1:\ell-1}\in\Sigma^{\ell-1}}
P(x_{1:\ell-1}\mid\alpha)\cdot
D_{\mathrm{TV}}\!\left(\hat P(\cdot\mid x_{1:\ell-1},\alpha),P(\cdot\mid x_{1:\ell-1},\alpha) \right)
\end{equation*}
\else
\begin{align*}
    & D_{\mathrm{TV}}(P^{[\ell-1]},P^{[\ell]}) \\
    &= \sum_{x_{1:\ell-1}\in\Sigma^{\ell-1}}
P(x_{1:\ell-1}\mid\alpha)\cdot \\ & D_{\mathrm{TV}}\!\left(\hat P(\cdot\mid x_{1:\ell-1},\alpha),P(\cdot\mid x_{1:\ell-1},\alpha) \right) \\
\end{align*}
\fi

The triangle inequality for total variation distance yields the claimed
inequality.
\end{proof}

\distributionpreservation*

\begin{proof}
When \(n=1\), the completion probability used for every candidate token is
the exact terminal membership value, so \(\hat P=P\) and their total
variation distance is zero. Assume
\(n\geq2\). For \(j\in[n_u]\), let \(Y_j=\mathbf{1}_{(\mathcal I^j)^c}\).
Lemma~\ref{lem:integrated-one-core} gives
\(\mathbb E[Y_j]\leq3/16\). These variables are independent because each
\(\mathcal I^j\) depends only on outer repetition \(j\) and
\(\mathcal G^j\). Hoeffding's inequality gives
\ifarxiv
\begin{equation*}
\mathbb{P}\!\left[
\sum_{j=1}^{n_u}Y_j\geq\frac{7n_u}{16}
\right]
\leq
\mathbb{P}\!\left[
\sum_{j=1}^{n_u}Y_j
-\mathbb E\!\left[\sum_{i=1}^{n_u}Y_i\right]
\geq\frac{n_u}{4}
\right]
\leq e^{-n_u/8}\leq\delta
\end{equation*}
\else
\begin{align*}
\mathbb{P}\!\left[
\sum_{j=1}^{n_u}Y_j\geq\frac{7n_u}{16}
\right]
&\leq
\mathbb{P}\!\left[
\sum_{j=1}^{n_u}Y_j
-\mathbb E\!\left[\sum_{i=1}^{n_u}Y_i\right]
\geq\frac{n_u}{4}
\right]\\
&\leq e^{-n_u/8}\leq\delta
\end{align*}
\fi
Outside this event, let
\(\mathcal J_{\mathrm{good}}
=\{j\in[n_u]:\mathcal I^j\text{ occurs}\}\).
Then \(|\mathcal J_{\mathrm{good}}|>9n_u/16\). For every
\(\ell\in[n]\) and every prefix satisfying
\(P(x_{1:\ell}\mid\alpha)>0\), the median identity above and
Fact~\ref{fact:median-under-contamination}, applied with
\(z_j=\xi^j(x_{1:\ell})\) and \(z=0\), give
\(\left(\xi(x_{1:\ell})\right)^2
\leq
\frac{16}{n_u}
\sum_{j\in\mathcal J_{\mathrm{good}}}
\left(\xi^j(x_{1:\ell})\right)^2\).
For every \(j\in\mathcal J_{\mathrm{good}}\), the definition of
\(\mathcal I^j\) gives
\(\Delta^j\leq2n\kappa^2\). Summing against the exact prefix marginals
therefore gives
\ifarxiv
\begin{equation}
\Delta
\leq
\frac{16}{n_u}
\sum_{j\in\mathcal J_{\mathrm{good}}}\Delta^j
\leq32n\kappa^2
\label{eq:amplified-integrated-error}
\end{equation}
\else
\begin{align}
\Delta
&\leq
\frac{16}{n_u}
\sum_{j\in\mathcal J_{\mathrm{good}}}\Delta^j
\leq32n\kappa^2
\label{eq:amplified-integrated-error}
\end{align}
\fi

The conditional distributions at layer \(n\) agree because the terminal
completion probability is exact. Lemmas~\ref{lem:normalized-reweighting}
and~\ref{lem:sequential-tv} give
\ifarxiv
\begin{equation*}
D_{\mathrm{TV}}\!\left(
\hat P(\cdot\mid\alpha),P(\cdot\mid\alpha)
\right)
\leq
\sum_{\ell=1}^{n}
\sum_{\substack{x_{1:\ell}\in\Sigma^\ell\\
P(x_{1:\ell}\mid\alpha)>0}}
P(x_{1:\ell}\mid\alpha)
\cdot
|\xi(x_{1:\ell})|
\end{equation*}
\else
\begin{align*}
&D_{\mathrm{TV}}\!\left(
\hat P(\cdot\mid\alpha),P(\cdot\mid\alpha)
\right)
\leq\\
&
\sum_{\ell=1}^{n}
\sum_{\substack{x_{1:\ell}\in\Sigma^\ell\\
P(x_{1:\ell}\mid\alpha)>0}}
P(x_{1:\ell}\mid\alpha)
\cdot
|\xi(x_{1:\ell})|
\end{align*}
\fi
The prefix marginals satisfy
\ifarxiv
\begin{equation*}
\sum_{\ell=1}^{n}
\sum_{x_{1:\ell}\in\Sigma^\ell}
P(x_{1:\ell}\mid\alpha)
=n
\end{equation*}
\else
\begin{align*}
\sum_{\ell=1}^{n}
&
\sum_{x_{1:\ell}\in\Sigma^\ell}
P(x_{1:\ell}\mid\alpha)
=n
\end{align*}
\fi
By the weighted Cauchy--Schwarz inequality,
\ifarxiv
\begin{equation*}
\left(D_{\mathrm{TV}}\!\left(
\hat P(\cdot\mid\alpha),P(\cdot\mid\alpha)
\right)\right)^2
\leq n\Delta
\leq32n^2\kappa^2
\end{equation*}
\else
\begin{align*}
\left(D_{\mathrm{TV}}\!\left(
\hat P(\cdot\mid\alpha),P(\cdot\mid\alpha)
\right)\right)^2
&\leq n\Delta\\
&\leq32n^2\kappa^2
\end{align*}
\fi
Taking square roots on both sides and substituting
\(\kappa=\varepsilon/(6+\varepsilon)\), we have
\ifarxiv
\begin{equation*}
D_{\mathrm{TV}}\!\left(\hat P(\cdot\mid\alpha),P(\cdot\mid\alpha)\right)
\leq 4\sqrt{2}n\kappa
<6n\frac{\varepsilon}{6+\varepsilon}
\leq n\varepsilon
\end{equation*}
\else
\begin{align*}
D_{\mathrm{TV}}\!\left(\hat P(\cdot\mid\alpha),P(\cdot\mid\alpha)\right)
&\leq 4\sqrt{2}n\kappa\\
&<6n\frac{\varepsilon}{6+\varepsilon}\\
&\leq n\varepsilon
\end{align*}
\fi
\end{proof}
It remains to prove Lemmas~\ref{lem:state-estimates},
\ref{lem:sample-overflow}, \ref{lem:integrated-one-core},
and~\ref{lem:fixed-sequence-one-core}.

\subsection{Oracle-Corrected Auxiliary Process}
\label{subsec:oracle-corrected-analysis}

We return to the fixed \(j\in[n_u]\) and simplify the analysis by considering
a modified algorithm. The algorithm
\(\mathcal N^{*,j}\) follows \(\mathcal N^j\), but applies an analysis-only
oracle correction. When a state \(v=(q,b)\in\mathcal V\) is
processed, let \(\bar p^j(q,b)\) be the value computed before any
correction. If
\(\bar p^j(q,b)\notin(1\pm\kappa)W(q,b)^{-1}\), the oracle correction sets
\(p^j(q,b)=(1-\kappa)W(q,b)^{-1}\). Otherwise it sets
\(p^j(q,b)=\bar p^j(q,b)\).

\begin{algorithm}[t]
\DontPrintSemicolon
\SetAlgoNlRelativeSize{-1}
\(\rho^j(q,b)\gets
\min_{(a,q_i,b')\in\mathcal E_\ell(q,b)}
\frac{p^j(q_i,b')}{\psi(a,b,b')}\)\;
\For{\(r\in[n_sn_t]\)}{
  \For{\((a,q_i,b')\in\mathcal E_\ell(q,b)\)}{
    \(\bar S^{r,j}(a,q,q_i,b')\gets
    \mathsf{reduce}\left(
    (a,b')\cdot S^{r,j}(q_i,b'),
    \frac{\rho^j(q,b)\psi(a,b,b')}{p^j(q_i,b')}
    \right)\)\;
  }
  \(\hat S^{r,j}(q,b)\gets
  \bigcup_{(a,q_i,b')\in\mathcal E_\ell(q,b)}
  \{(a,b')\cdot(x',y')\in\bar S^{r,j}(a,q,q_i,b'):
  \mathsf{first}(q,a,x')=q_i\}\)\;
}
\For{\(t\in[n_t]\)}{
  \(M^{t,j}(q,b)\gets
  (n_s\rho^j(q,b))^{-1}
  \sum_{r=n_s(t-1)+1}^{n_st}|\hat S^{r,j}(q,b)|\)\;
}
\(\hat W^j(q,b)\gets\median_{t\in[n_t]}M^{t,j}(q,b)\)\;
\(\bar p^j(q,b)\gets\min\{\rho^j(q,b),\hat W^j(q,b)^{-1}\}\)\;
\AlgoHighlightStart{oracle-correction}%
\eIf{\(\bar p^j(q,b)\notin(1\pm\kappa)W(q,b)^{-1}\)}{
  \(p^j(q,b)\gets(1-\kappa)W(q,b)^{-1}\)%
  \AlgoHighlightEnd{oracle-correction}\tcp*{Oracle correction}
}{
  \(p^j(q,b)\gets\bar p^j(q,b)\)\;
}
\For{\(r\in[n_sn_t]\)}{
  \(S^{r,j}(q,b)\gets
  \mathsf{reduce}\left(
  \hat S^{r,j}(q,b),
  \frac{p^j(q,b)}{\rho^j(q,b)}
  \right)\)\;
}
\caption{\(\mathsf{estimateAndSample}(q,j,\ell)\) (oracle-corrected \(\mathcal N^{*,j}\))}
\label{alg:oracle-corrected-weighted}
\end{algorithm}

The oracle correction guarantees
\(p^j(q,b)\in(1\pm\kappa)W(q,b)^{-1}\) at every productive product state. We
first check that the correction does not make a reduction probability
larger than one. If \((a,q',b')\in\mathcal E_\ell(q,b)\), prefixing every
positive atom at \((q',b')\) by \((a,b')\) gives distinct positive atoms at
\((q,b)\). Therefore,
\(W(q,b)\geq\psi(a,b,b')W(q',b')\).
All productive children have already been processed. Hence
\ifarxiv
\begin{equation*}
\rho^j(q,b)
=\min_{(a,q',b')\in\mathcal E_\ell(q,b)}
\frac{p^j(q',b')}{\psi(a,b,b')}
\geq\min_{(a,q',b')\in\mathcal E_\ell(q,b)}
\frac{1-\kappa}{\psi(a,b,b')W(q',b')}
\geq\frac{1-\kappa}{W(q,b)}
\end{equation*}
\else
\begin{align*}
\rho^j(q,b)
&=\min_{(a,q',b')\in\mathcal E_\ell(q,b)}
\frac{p^j(q',b')}{\psi(a,b,b')}\\
&\geq\min_{(a,q',b')\in\mathcal E_\ell(q,b)}
\frac{1-\kappa}{\psi(a,b,b')W(q',b')}\\
&\geq\frac{1-\kappa}{W(q,b)}
\end{align*}
\fi
Thus the value set by the oracle correction is at most \(\rho^j(q,b)\). The edge
reductions are legal by the definition of \(\rho^j(q,b)\), and the final
reduction is legal because \(p^j(q,b)\leq\rho^j(q,b)\).

We now relate \(\mathcal N^j\) and \(\mathcal N^{*,j}\). Implement each call
to \(\mathsf{reduce}\) using an independent uniform random variable. Use the
same variables in the two executions until the oracle correction is first
applied. Consider the
states in \(\mathcal V\) in their processing order. If \(\mathcal A^j\)
occurs in \(\mathcal N^j\), there is an earliest state
\(v\) for which \(p^j(v)\) is outside \((1\pm\kappa)W(v)^{-1}\). A
\(v\)-prefix execution assigns all variables exposed through the computation
of the tentative value at \(v\). Before the first such state,
\(\mathcal N^j\) and
\(\mathcal N^{*,j}\) perform the same reductions with the same
probabilities. The identity on the corresponding uniform-variable outcomes is therefore a probability-
preserving bijection between the \(v\)-prefix executions where \(v\) is the
first state outside the target interval in \(\mathcal N^j\) and those where
the oracle correction is first applied at \(v\). Summing over the possible first such
states gives
\begin{equation}
\mathbb{P}_{\mathcal N^j}[\mathcal A^j]
\leq
\sum_{v\in\mathcal V}
\mathbb{P}_{\mathcal N^{*,j}}\!\left[
\bar p^j(v)\notin(1\pm\kappa)W(v)^{-1}
\right]
\label{eq:first-bad-union-bound}
\end{equation}
For a prefix \(x_{1:\ell}\), let
\(\hat P_{\text{hmm},*}^{\,j}(\alpha\mid x_{1:\ell})\) denote the estimate obtained
from the samples of \(\mathcal N^{*,j}\) using the variables in
\(\mathcal G^j(x_{1:\ell})\).

\begin{lemma}
\label{lem:core-success-coupling}
The executions \(\mathcal N^{\theta,j}\), \(\mathcal N^j\), and
\(\mathcal N^{*,j}\) can be coupled so that, on \(\mathcal C^j\), the
budgeted execution does not set \(\mathsf{fail}^j=1\), all three executions
have the same stored samples and rates, and, simultaneously for every
\(\ell\in[n-1]\) and \(x_{1:\ell}\in\Sigma^\ell\),
\(\hat P_{\text{hmm}}^{\,j}(\alpha\mid x_{1:\ell})
=\hat P_{\text{hmm},*}^{\,j}(\alpha\mid x_{1:\ell})\).
\end{lemma}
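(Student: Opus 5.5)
We construct the coupling by driving all three executions with a single family of independent uniform random variables. Each call to \(\mathsf{reduce}\) in precomputation is indexed by its position: repetition \(j\), the product state \(v\) being processed, the sample index \(r\), and either the productive edge or the final reduction. Inside a call, each candidate element \(s\) gets its own uniform \(U_s\), and \(s\) is kept iff \(U_s\le p\), where \(p\) is the reduction probability. Because candidate sets may differ across executions, we index the uniforms by (call position, candidate atom) rather than by draw order. A variable is then well defined even when a candidate is absent from some execution. Queries in Algorithm~\ref{alg:hmm-weighted-prefix-estimate} use the variables in \(\mathcal G^j\), which are shared by all three executions. Under this coupling each execution has the correct marginal law, since each reduction keeps each candidate independently with the stated probability.

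\textbf{Step 1: \(\mathcal N^j\) and \(\mathcal N^{*,j}\) agree on \((\mathcal A^j)^c\).} We induct over the backward processing order of product states. Suppose all previously processed states have identical rates \(p^j\) and sample sets \(S^{r,j}\) in both executions. Then at the current state \(v\), the common rate \(\rho^j(v)\), the edge reductions, the canonical union, \(\hat W^j(v)\) and \(\bar p^j(v)\) all coincide, because they are deterministic functions of the inherited data and the shared uniforms. On \((\mathcal A^j)^c\), for \(v\in\mathcal V\), we have \(\bar p^j(v)=p^j(v)\in(1\pm\kappa)W(v)^{-1}\) in \(\mathcal N^j\), so the oracle does not fire. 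States outside \(\mathcal V\) are never corrected. So the final reduction is the same and the invariant propagates. The terminal initialization is deterministic and identical, which gives the base case. Hence on \((\mathcal A^j)^c\) all stored samples and rates agree.

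\textbf{Step 2: \(\mathcal N^{\theta,j}\) and \(\mathcal N^j\) agree on \((\mathcal B^j)^c\).} The budgeted execution performs exactly the steps of \(\mathcal N^j\) until the test on Line~\ref{line:precompute-budget-test} first succeeds. Stored sample sets are only ever assigned, once each, from \(\emptyset\). So the running total checked after each call is at most the final total of \(\mathcal N^j\) for the same uniforms. Here we need to check that initialized-but-unprocessed sets are empty, so that partial sums are dominated by the final sum. On \((\mathcal B^j)^c\) the final total is \(<\theta\), so the test never triggers, \(\mathsf{fail}^j=0\), and \(\mathcal N^{\theta,j}\) completes identically to \(\mathcal N^j\).

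\textbf{Step 3: the query estimates agree.} On \(\mathcal C^j\), the three executions share \(p^j(q,b)\) and \(S^{r,j}(q,b)\) for all \((q,b)\), and \(\mathsf{fail}^j=0\). For every \(\ell\in[n-1]\) and \(x_{1:\ell}\), Algorithm~\ref{alg:hmm-weighted-prefix-estimate} computes \(R\), \(R_b^+\), \(\rho_R^j(b)\), the reductions using \(\mathcal G^j(x_{1:\ell})\), the canonical union, and the medians. All of these are deterministic in the shared data, so \(\hat P^{\,j}_{\text{hmm}}=\hat P^{\,j}_{\text{hmm},*}\) holds simultaneously for all prefixes. The zero-likelihood and \(R=\emptyset\) branches are identical trivially.

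The main obstacle is making the coupling well defined when the executions diverge. Off \(\mathcal C^j\) the candidate sets differ, and the same reduction call may see different inputs and probabilities. Indexing the uniforms by (call position, atom) resolves this and keeps each marginal exact. We only claim equality on \(\mathcal C^j\), so divergence elsewhere is harmless. This is also consistent with the first-bad-state bijection used for \eqref{eq:first-bad-union-bound}.
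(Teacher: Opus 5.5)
Your proposal is correct and follows the paper's proof. Both arguments share the \(\mathsf{reduce}\) uniforms across executions, note that on \((\mathcal A^j)^c\) the oracle correction never fires so \(\mathcal N^j=\mathcal N^{*,j}\), use monotonicity of the stored-sample count to get \(\mathcal N^{\theta,j}=\mathcal N^j\) with \(\mathsf{fail}^j=0\) on \((\mathcal B^j)^c\), and reuse the variables in \(\mathcal G^j\) for the query estimates. Your indexing of uniforms by (call position, candidate atom) and your explicit induction over the processing order just make the paper's three-sentence argument more precise.
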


\begin{proof}
Use the same uniform random variable for every corresponding call to
\(\mathsf{reduce}\) in the three preprocessing executions. On
\((\mathcal A^j)^c\), every tentative value in \(\mathcal N^j\) belongs to
its target interval, so the oracle correction is never applied and
\(\mathcal N^j=\mathcal N^{*,j}\). The total number of stored samples is
nondecreasing. Hence, on \((\mathcal B^j)^c\), it never reaches \(\theta\),
so \(\mathcal N^{\theta,j}=\mathcal N^j\) and
\(\mathsf{fail}^j=0\). Using the same query variables in
\(\mathcal G^j\) then gives the final assertion.
\end{proof}

\subsubsection{Probabilistic Environment}
\label{sec:weighted-probabilistic-environment}

We now work with \(\mathcal N^{*,j}\). The transition lemmas below also
hold in \(\mathcal N^j\), since their proofs do not use the correction
rule. For \(q\in Q^k\), define the suffix depth of \(q\) and of every
product state \((q,b)\), to be \(n-k\). For \(0<i\leq n+1\), let
\(\mathcal F_i^j\) contain all variables belonging to product states of suffix
depth smaller than \(i\), including their final sample sets. For
\(0<i\leq n\), let \(\hat{\mathcal F}_i^j\) extend
\(\mathcal F_i^j\) with all
variables constructed at suffix depth \(i\) through the corrected sampling
probability \(p^j(q,b)\), but not the uniform random variables used by the final
calls to \(\mathsf{reduce}\) producing
\(S^{r,j}(q,b)\). We include the deterministic terminal initialization in every
\(\mathcal F_i^j\). The variables can be exposed so that
\(\mathcal F_i^j\subseteq\hat{\mathcal F}_i^j\subseteq\mathcal F_{i+1}^j\).

Indeed, every edge from a product state of suffix depth \(i\) goes to a state
of suffix depth \(i-1\). Operations at the same depth use only completed
child sets and mutually independent variables used by \(\mathsf{reduce}\), so
they may be
exposed together without changing their joint distribution.

In Section~\ref{subsec:precomputation}, we used the intuitive invariant
\(\mathbb{P}[z\in S^{r,j}(q,b)]=p^j(q,b)w(z,b)\) to explain the algorithm.
This statement is not rigorous enough because its left-hand side is a fixed
value, whereas its right-hand side is a random variable. The invariant can
be formalized as follows. For a productive product state \(v=(q,b)\), an
atom \(z\in Z_+(q,b)\), and
\(r\in[n_sn_t]\), define
\ifarxiv
\begin{equation*}
A^{r,j}(z,v)=\frac{\mathbf{1}_{z\in S^{r,j}(v)}}{p^j(v)w(z,b)}\qquad
\hat A^{r,j}(z,v)=\frac{\mathbf{1}_{z\in\hat S^{r,j}(v)}}{\rho^j(v)w(z,b)}
\end{equation*}
\else
\begin{align*}
A^{r,j}(z,v)&=\frac{\mathbf{1}_{z\in S^{r,j}(v)}}{p^j(v)w(z,b)}\\
\hat A^{r,j}(z,v)&=\frac{\mathbf{1}_{z\in\hat S^{r,j}(v)}}{\rho^j(v)w(z,b)}
\end{align*}
\fi
At the identified terminal state \(q_F^n\), both normalized variables are
defined to be one.
If \(v\) has suffix depth
\(i\), the exact final-reduction identity is
\(\mathbb E[\mathbf{1}_{z\in S^{r,j}(v)}\mid\hat{\mathcal F}_i^j]
=\mathbf{1}_{z\in\hat S^{r,j}(v)}p^j(v)/\rho^j(v)\).
After normalization, this gives the first identity in
Lemma~\ref{lem:weighted-one-atom-transfer}, while the second identity
transports the normalized variable to its canonical child. Iterating these
identities and applying the tower rule gives
\(\mathbb E[A^{r,j}(z,v)]=\mathbb E[\hat A^{r,j}(z,v)]=1\), as proved in
Lemma~\ref{lem:weighted-unit-expectation}.

\begin{lemma}
\label{lem:weighted-one-atom-transfer}
Let \(v=(q,b)\) have positive suffix depth and let
\((z',(q',b'))\) be the canonical child of \((z,v)\). If the suffix depth of
\(v\) is \(i\), then
\ifarxiv
\begin{equation*}
\mathbb E[A^{r,j}(z,v)\mid\hat{\mathcal F}_i^j]
=\hat A^{r,j}(z,v)\qquad
\mathbb E[\hat A^{r,j}(z,v)\mid\mathcal F_i^j]
=A^{r,j}(z',(q',b'))
\end{equation*}
\else
\begin{align*}
\mathbb E[A^{r,j}(z,v)\mid\hat{\mathcal F}_i^j]
&=\hat A^{r,j}(z,v)\\
\mathbb E[\hat A^{r,j}(z,v)\mid\mathcal F_i^j]
&=A^{r,j}(z',(q',b'))
\end{align*}
\fi
\end{lemma}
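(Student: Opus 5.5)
The proof verifies the two identities in turn, directly from the definitions of the reductions in Algorithm~\ref{alg:oracle-corrected-weighted}. Throughout, I use that $p^j(v)$, $\rho^j(v)$ and $\hat S^{r,j}(v)$ are fixed by $\hat{\mathcal F}_i^j$, and that $\rho^j(v)$ and the child quantities are fixed by $\mathcal F_i^j$.

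\emph{First identity.} Condition on $\hat{\mathcal F}_i^j$. This fixes $\hat S^{r,j}(v)$, $\rho^j(v)$ and the corrected rate $p^j(v)$, but not the uniform variable used by the final $\mathsf{reduce}$ for the membership of $z$. That variable is independent of $\hat{\mathcal F}_i^j$. Hence $\mathbb P[z\in S^{r,j}(v)\mid\hat{\mathcal F}_i^j]=\mathbf 1_{z\in\hat S^{r,j}(v)}\,p^j(v)/\rho^j(v)$. The oracle-correction discussion shows $p^j(v)\le\rho^j(v)$, so this reduction probability is legal. Dividing by the $\hat{\mathcal F}_i^j$-measurable quantity $p^j(v)w(z,b)$ and using the first Fact gives $\hat A^{r,j}(z,v)$.

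\emph{Second identity.} Write $z=(a,b')\cdot z'$ with $z'=(x',y')$, and let $q'=\mathsf{first}(q,a,x')$. Since $w(z,b)=\psi(a,b,b')w(z',b')>0$, we have $\psi(a,b,b')>0$ and $z'\in Z_+(q',b')$. Thus $\mathcal E_{\ell+1}(q',b')\neq\emptyset$ (or $q'=q_F^n$), so $(a,q',b')\in\mathcal E_\ell(q,b)$. Now examine the canonical union. The atom $z$ can enter $\hat S^{r,j}(v)$ only through the branch $(a,q_i,b')$ with $q_i=\mathsf{first}(q,a,x')=q'$, because every other branch is filtered out by the $\mathsf{first}$ test. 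Through that branch it enters exactly when $z'\in S^{r,j}(q',b')$ and the edge reduction keeps it. Conditioning on $\mathcal F_i^j$ fixes $S^{r,j}(q',b')$, $p^j(q',b')$ and $\rho^j(v)$, since these are computed from completed child data. Only the edge-reduction uniform variable for this copy remains, and it is independent of $\mathcal F_i^j$. Therefore
\[
\mathbb P[z\in\hat S^{r,j}(v)\mid\mathcal F_i^j]=\mathbf 1_{z'\in S^{r,j}(q',b')}\frac{\rho^j(v)\psi(a,b,b')}{p^j(q',b')}.
\]
Dividing by $\rho^j(v)w(z,b)=\rho^j(v)\psi(a,b,b')w(z',b')$ yields $\mathbf 1_{z'\in S^{r,j}(q',b')}/(p^j(q',b')w(z',b'))=A^{r,j}(z',(q',b'))$. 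When $q'=q_F^n$, the sample set is $\{(\lambda,\lambda)\}$ with rate $1$, which matches the convention that the terminal variable equals one.

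The main obstacle is bookkeeping rather than any hard estimate. One must check that the exposure order really makes the relevant uniform variables independent of the conditioning $\sigma$-field. One must also check that the rates used in the divisions are measurable there, which is the $\mathcal F_i^j\subseteq\hat{\mathcal F}_i^j\subseteq\mathcal F_{i+1}^j$ structure. Finally, one must confirm that distinct branches of the union cannot both contribute $z$. Canonicity handles this last point, since exactly one $q_i$ passes the $\mathsf{first}$ filter for a given $x'$.
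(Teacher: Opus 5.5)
Your proposal is correct and follows essentially the same route as the paper. You condition on $\hat{\mathcal F}_i^j$ so that only the final $\mathsf{reduce}$ with probability $p^j(v)/\rho^j(v)$ remains, and on $\mathcal F_i^j$ so that $z$ enters $\hat S^{r,j}(v)$ only through its canonical branch with probability $\rho^j(v)\psi(a,b,b')/p^j(q',b')$; you then cancel using $w(z,b)=\psi(a,b,b')w(z',b')$. Your added checks (that $(a,q',b')\in\mathcal E_\ell(q,b)$, that no other branch can contribute $z$, and the terminal case $q'=q_F^n$) are details the paper leaves implicit.
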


\begin{proof}
Write \(z=(a,b')\cdot z'\). Conditioned on \(\mathcal F_i^j\), the atom
\(z\) belongs to \(\hat S^{r,j}(v)\) if and only if \(z'\in S^{r,j}(q',b')\) and the
corresponding call to \(\mathsf{reduce}\) includes its canonical copy. Its
conditional inclusion probability is
\(\frac{\rho^j(v)\psi(a,b,b')}{p^j(q',b')}
\).

Since
\(w(z,b)=\psi(a,b,b')w(z',b')\), the fixed-variable rule gives
\ifarxiv
\begin{equation*}
\mathbb E[\hat A^{r,j}(z,v)\mid\mathcal F_i^j]
=\frac{\mathbf{1}_{z'\in S^{r,j}(q',b')}}
{\rho^j(v)w(z,b)}
\frac{\rho^j(v)\psi(a,b,b')}{p^j(q',b')}
=A^{r,j}(z',(q',b'))
\end{equation*}
\else
\begin{align*}
\mathbb E[\hat A^{r,j}(z,v)\mid\mathcal F_i^j]
&=\frac{\mathbf{1}_{z'\in S^{r,j}(q',b')}}
{\rho^j(v)w(z,b)}
\frac{\rho^j(v)\psi(a,b,b')}{p^j(q',b')}\\
&=A^{r,j}(z',(q',b'))
\end{align*}
\fi
Conditioned on \(\hat{\mathcal F}_i^j\), the only remaining randomness in
\(S^{r,j}(v)\) is its final reduction with probability \(p^j(v)/\rho^j(v)\). The
fixed-variable rule therefore gives
\ifarxiv
\begin{equation*}
\mathbb E[A^{r,j}(z,v)\mid\hat{\mathcal F}_i^j]
=\frac{\mathbf{1}_{z\in\hat S^{r,j}(v)}}{p^j(v)w(z,b)}
\frac{p^j(v)}{\rho^j(v)}
=\hat A^{r,j}(z,v)
\end{equation*}
\else
\begin{align*}
\mathbb E[A^{r,j}(z,v)\mid\hat{\mathcal F}_i^j]
&=\frac{\mathbf{1}_{z\in\hat S^{r,j}(v)}}{p^j(v)w(z,b)}
\frac{p^j(v)}{\rho^j(v)}\\
&=\hat A^{r,j}(z,v)
\end{align*}
\fi
\end{proof}

\begin{lemma}
\label{lem:weighted-two-atom-transfer}
Let \((z_1,v_1)\neq(z_2,v_2)\) be two atom--state pairs at the same positive
suffix depth \(i\), and let \((z_1',v_1')\) and \((z_2',v_2')\) be their
canonical children. Then
\ifarxiv
\begin{equation*}
\mathbb E[A^{r,j}(z_1,v_1)A^{r,j}(z_2,v_2)\mid\hat{\mathcal F}_i^j]=
\hat A^{r,j}(z_1,v_1)\hat A^{r,j}(z_2,v_2)
\end{equation*}
\else
\begin{align*}
&\mathbb E[A^{r,j}(z_1,v_1)A^{r,j}(z_2,v_2)\mid\hat{\mathcal F}_i^j]=\\
&
\hat A^{r,j}(z_1,v_1)\hat A^{r,j}(z_2,v_2)
\end{align*}
\fi
and 
\ifarxiv
\begin{equation*}
\mathbb E[\hat A^{r,j}(z_1,v_1)\hat A^{r,j}(z_2,v_2)\mid\mathcal F_i^j]=
A^{r,j}(z_1',v_1')A^{r,j}(z_2',v_2')
\end{equation*}
\else
\begin{align*}
& \mathbb E[\hat A^{r,j}(z_1,v_1)\hat A^{r,j}(z_2,v_2)\mid\mathcal F_i^j]=\\
& A^{r,j}(z_1',v_1')A^{r,j}(z_2',v_2')
\end{align*}
\fi

The second right-hand side is a square when the canonical children coincide.
The identities also hold for different indices \(r_1,r_2\) when the triples
\((r_1,z_1,v_1)\) and \((r_2,z_2,v_2)\) are distinct.
\end{lemma}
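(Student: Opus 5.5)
The plan is to mirror the proof of Lemma~\ref{lem:weighted-one-atom-transfer}, replacing single inclusion indicators by products of two indicators. The key point is that distinct atom--state pairs at the same depth are included through \emph{distinct} independent uniform variables in the final or edge calls to \(\mathsf{reduce}\). Conditional on the relevant \(\sigma\)-field, the joint inclusion probability therefore factorizes, except when the two events share an underlying random variable.

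\textbf{First identity.} Condition on \(\hat{\mathcal F}_i^j\). The sets \(\hat S^{r,j}(v_1)\), \(\hat S^{r,j}(v_2)\), the rates \(p^j(v_k)\) and \(\rho^j(v_k)\), and the weights are all fixed. The remaining randomness consists of the final \(\mathsf{reduce}\) calls, which draw one independent uniform per (index \(r\), state \(v\), element of \(\hat S^{r,j}(v)\)). Since \((r_1,z_1,v_1)\neq(r_2,z_2,v_2)\), the two inclusion events \(\{z_k\in S^{r_k,j}(v_k)\}\) are governed by different uniforms, or one of them is impossible because \(z_k\notin\hat S^{r_k,j}(v_k)\). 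Hence
\[
\mathbb E\bigl[\mathbf 1_{z_1\in S^{r_1,j}(v_1)}\mathbf 1_{z_2\in S^{r_2,j}(v_2)}\bigm|\hat{\mathcal F}_i^j\bigr]
=\prod_{k=1}^2\mathbf 1_{z_k\in\hat S^{r_k,j}(v_k)}\frac{p^j(v_k)}{\rho^j(v_k)}.
\]
Dividing by the fixed normalizers \(p^j(v_k)w(z_k,b_k)\), using the fact that fixed variables factor out, gives \(\hat A\hat A\).

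\textbf{Second identity.} Condition on \(\mathcal F_i^j\). All child sample sets and all rates \(\rho^j(v_k)\) and \(p^j\) at depth \(i-1\) are fixed. The atom \(z_k\) lies in \(\hat S^{r_k,j}(v_k)\) iff \(z_k'\in S^{r_k,j}(v_k')\) and the edge reduction on \((a_k,q_k',b_k')\) at \(v_k\) keeps the copy \((a_k,b_k')\cdot z_k'\). The canonical-union filter is deterministic and is passed exactly via the canonical child, as in Lemma~\ref{lem:weighted-one-atom-transfer}. The two edge-reduction uniforms are indexed by \((r_k,v_k,\text{edge},z_k')\), equivalently by \((r_k,v_k,z_k)\), because the prepended label \((a_k,b_k')\) and the child are determined by \(z_k\) and \(v_k\). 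These indices therefore differ, so the two uniforms are distinct and independent, and the joint probability is
\[
\prod_{k=1}^{2}\mathbf 1_{z_k'\in S^{r_k,j}(v_k')}\frac{\rho^j(v_k)\psi(a_k,b_k,b_k')}{p^j(v_k')}.
\]
Normalizing with \(w(z_k,b_k)=\psi(a_k,b_k,b_k')w(z_k',b_k')\) yields \(A^{r_1,j}(z_1',v_1')A^{r_2,j}(z_2',v_2')\). When the canonical children coincide, i.e.\ \((r_1,z_1',v_1')=(r_2,z_2',v_2')\), the child indicator appears twice. Since \(\mathbf 1^2=\mathbf 1\), the product of the two normalized child terms is still the correct expression, and it is a square. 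The required independence holds because the edge uniforms still differ, since \(v_1\neq v_2\) or the prepended labels differ. At the terminal state, the child variables equal \(1\) and the statement is trivial.

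\textbf{Main obstacle.} The delicate step is the bookkeeping claim that distinct triples always use distinct reduce-uniforms, both at the edge-reduction stage and at the final stage, even when the two atoms share a child. I would make it explicit by fixing the implementation: each \(\mathsf{reduce}\) call assigns an independent uniform to each (call, element) pair. I would then observe that the map from \((r,z,v)\) to its (call, element) pair is injective at each stage. The conditioning step also needs that these uniforms are independent of \(\hat{\mathcal F}_i^j\) (respectively \(\mathcal F_i^j\)); this follows from the exposure order set up in Section~\ref{sec:weighted-probabilistic-environment}.
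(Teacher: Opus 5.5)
Your proposal is correct and follows essentially the same route as the paper: condition on \(\hat{\mathcal F}_i^j\) (respectively \(\mathcal F_i^j\)), use that distinct triples are decided by distinct independent reduce-uniforms so the joint inclusion probability factors, and normalize with \(w(z_\nu,b_\nu)=\psi_\nu w(z_\nu',b_\nu')\); the square case comes from distinct parent uniforms sharing a single child indicator. Your explicit injectivity bookkeeping, mapping \((r,z,v)\) to the (call, element) pair on the canonical edge, is a more careful version of the paper's one-line assertion that distinct atom--state pairs use distinct random variables.
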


\begin{proof}
Write \(v_\nu=(q_\nu,b_\nu)\),
\(z_\nu=(a_\nu,b_\nu')\cdot z_\nu'\), and
\(\psi_\nu=\psi(a_\nu,b_\nu,b_\nu')\) for \(\nu\in\{1,2\}\). Distinct atom--state
pairs use distinct random variables in the final calls to
\(\mathsf{reduce}\).
Conditional independence gives
\ifarxiv
\begin{equation*}
\begin{aligned}
&\mathbb E[A^{r,j}(z_1,v_1)A^{r,j}(z_2,v_2)\mid\hat{\mathcal F}_i^j]
=
\frac{\mathbf{1}_{z_1\in\hat S^{r,j}(v_1)}}{p^j(v_1)w(z_1,b_1)}
\frac{p^j(v_1)}{\rho^j(v_1)}\cdot
\frac{\mathbf{1}_{z_2\in\hat S^{r,j}(v_2)}}{p^j(v_2)w(z_2,b_2)}
\frac{p^j(v_2)}{\rho^j(v_2)}\\
&=\hat A^{r,j}(z_1,v_1)\hat A^{r,j}(z_2,v_2)
\end{aligned}
\end{equation*}
\else
\begin{align*}
&\mathbb E[A^{r,j}(z_1,v_1)A^{r,j}(z_2,v_2)\mid\hat{\mathcal F}_i^j]\\
&=
\frac{\mathbf{1}_{z_1\in\hat S^{r,j}(v_1)}}{p^j(v_1)w(z_1,b_1)}
\frac{p^j(v_1)}{\rho^j(v_1)}\cdot
\frac{\mathbf{1}_{z_2\in\hat S^{r,j}(v_2)}}{p^j(v_2)w(z_2,b_2)}
\frac{p^j(v_2)}{\rho^j(v_2)}\\
&=\hat A^{r,j}(z_1,v_1)\hat A^{r,j}(z_2,v_2)
\end{align*}
\fi

For the other transition, canonical deduplication assigns the two parent
atoms to distinct calls to \(\mathsf{reduce}\). Conditioned on
\(\mathcal F_i^j\), their inclusion probabilities are
\(\rho^j(v_\nu)\psi_\nu/p^j(v_\nu')\). Thus

\ifarxiv
\begin{equation*}
\begin{aligned}
&\mathbb E[\hat A^{r,j}(z_1,v_1)\hat A^{r,j}(z_2,v_2)\mid\mathcal F_i^j]
= \frac{\mathbf{1}_{z_1'\in S^{r,j}(v_1')}}{\rho^j(v_1)w(z_1,b_1)}
\frac{\rho^j(v_1)\psi_1}{p^j(v_1')}\times
\frac{\mathbf{1}_{z_2'\in S^{r,j}(v_2')}}{\rho^j(v_2)w(z_2,b_2)}
\frac{\rho^j(v_2)\psi_2}{p^j(v_2')}\\
&= A^{r,j}(z_1',v_1')A^{r,j}(z_2',v_2')
\end{aligned}
\end{equation*}
\else
\begin{align*}
& \mathbb E[\hat A^{r,j}(z_1,v_1)\hat A^{r,j}(z_2,v_2)\mid\mathcal F_i^j]\\
&= \frac{\mathbf{1}_{z_1'\in S^{r,j}(v_1')}}{\rho^j(v_1)w(z_1,b_1)}
\frac{\rho^j(v_1)\psi_1}{p^j(v_1')}\times\\
&
\frac{\mathbf{1}_{z_2'\in S^{r,j}(v_2')}}{\rho^j(v_2)w(z_2,b_2)}
\frac{\rho^j(v_2)\psi_2}{p^j(v_2')}\\
&= A^{r,j}(z_1',v_1')A^{r,j}(z_2',v_2')
\end{align*}
\fi

Here, the last equality uses
\(w(z_\nu,b_\nu)=\psi_\nu w(z_\nu',b_\nu')\). The random variables used for the two
parent atoms remain distinct when the canonical children coincide, in which
case the last expression is a square. Variables associated with distinct
indices \(r\) are independent, which gives the final assertion.
\end{proof}

Fix a productive product state \(u=(q_\ell,b)\) and set \(y_\ell=b\). For
\(z=(x_{\ell+1:n},y_{\ell+1:n})\in Z_+(u)\), write its canonical run as in
Section~\ref{subsec:canonical-product-runs}, with \((q_k,y_k)\) at layer
\(k\). For \(\ell\leq k<n\) define
\ifarxiv
\begin{equation*}
A^{r,j}(z,u,k)
=A^{r,j}((x_{k+1:n},y_{k+1:n}),(q_k,y_k))\qquad
\hat A^{r,j}(z,u,k)
=\hat A^{r,j}((x_{k+1:n},y_{k+1:n}),(q_k,y_k))
\end{equation*}
\else
\begin{align*}
A^{r,j}(z,u,k)
&=A^{r,j}((x_{k+1:n},y_{k+1:n}),(q_k,y_k))\\
\hat A^{r,j}(z,u,k)
&=\hat A^{r,j}((x_{k+1:n},y_{k+1:n}),(q_k,y_k))
\end{align*}
\fi
At \(k=n\), both variables are one at \(q_F^n\).

Let \(z_1,z_2\in Z_+(u)\), and write
\(z_\nu=(x_{\ell+1:n}^{(\nu)},y_{\ell+1:n}^{(\nu)})\) for
\(\nu\in\{1,2\}\). Write \(v^{z_1,z_2}=(q_a,y_a)\) when their convergence state
lies at layer \(a\). Then
\((x_{a+1:n}^{(1)},y_{a+1:n}^{(1)})
=(x_{a+1:n}^{(2)},y_{a+1:n}^{(2)})\). For
\(\ell\leq k\leq n\), define the correction variables as follows.
\begin{equation}
\ifarxiv
\hat C^{r,j}(z_1,z_2,u,k)
=
\begin{cases}\dfrac{p^j(q_a,y_a)W(q_a,y_a)}{1-\kappa}
\hat A^{r,j}(z_1,u,k)-1 & k<a\\
\frac{W(q_a,y_a)}
{(1-\kappa)w((x_{a+1:n}^{(1)},y_{a+1:n}^{(1)}),y_a)}-1 & k\geq a
\end{cases}
\else
\begin{aligned}
&\hat C^{r,j}(z_1,z_2,u,k)\\
&=\begin{cases}
\dfrac{p^j(q_a,y_a)W(q_a,y_a)}{1-\kappa}
\hat A^{r,j}(z_1,u,k)-1 & k<a\\
\frac{W(q_a,y_a)}
{(1-\kappa)w((x_{a+1:n}^{(1)},y_{a+1:n}^{(1)}),y_a)}-1 & k\geq a
\end{cases}
\end{aligned}
\fi
\label{eq:hat-correction-variable}
\end{equation}
\begin{equation}
\ifarxiv
C^{r,j}(z_1,z_2,u,k)
=
\begin{cases}\dfrac{p^j(q_a,y_a)W(q_a,y_a)}{1-\kappa}
A^{r,j}(z_1,u,k)-1 & k<a\\
\frac{W(q_a,y_a)}
{(1-\kappa)w((x_{a+1:n}^{(1)},y_{a+1:n}^{(1)}),y_a)}-1 & k\geq a
\end{cases}
\else
\begin{aligned}
&C^{r,j}(z_1,z_2,u,k)\\
&=\begin{cases}
\dfrac{p^j(q_a,y_a)W(q_a,y_a)}{1-\kappa}
A^{r,j}(z_1,u,k)-1 & k<a\\
\frac{W(q_a,y_a)}
{(1-\kappa)w((x_{a+1:n}^{(1)},y_{a+1:n}^{(1)}),y_a)}-1 & k\geq a
\end{cases}
\end{aligned}
\fi
\label{eq:correction-variable}
\end{equation}
If \(v^{z_1,z_2}=q_F^n\), set \(a=n\) and interpret
\(p^j(q_a,y_a)=W(q_a,y_a)=1\) and assign weight one to the empty suffix. When
\(z_1=z_2\), the convergence state is \(u\), so the second branch is
\(W(u)/((1-\kappa)w(z_1,b))-1\).

\begin{lemma}
\label{lem:corrected-pair-transport}
For \(\ell\leq k<n\),
\ifarxiv
\begin{equation}
\mathbb E[\hat C^{r,j}(z_1,z_2,u,k)\hat A^{r,j}(z_2,u,k)
\mid\mathcal F_{n-k}^j]=
C^{r,j}(z_1,z_2,u,k+1)A^{r,j}(z_2,u,k+1)
\label{eq:corrected-hat-to-child}
\end{equation}
\else
\begin{align}
   & \mathbb E[\hat C^{r,j}(z_1,z_2,u,k)\hat A^{r,j}(z_2,u,k)
\mid\mathcal F_{n-k}^j]=\notag\\
&C^{r,j}(z_1,z_2,u,k+1)A^{r,j}(z_2,u,k+1)
\label{eq:corrected-hat-to-child}
\end{align}
\fi
\ifarxiv
\begin{equation}
\mathbb E[C^{r,j}(z_1,z_2,u,k)A^{r,j}(z_2,u,k)
\mid\hat{\mathcal F}_{n-k}^j]=
\hat C^{r,j}(z_1,z_2,u,k)\hat A^{r,j}(z_2,u,k)
\label{eq:corrected-final-reduce}
\end{equation}
\else
\begin{align}
&\mathbb E[C^{r,j}(z_1,z_2,u,k)A^{r,j}(z_2,u,k)
\mid\hat{\mathcal F}_{n-k}^j]=\notag\\
&\hat C^{r,j}(z_1,z_2,u,k)\hat A^{r,j}(z_2,u,k)
\label{eq:corrected-final-reduce}
\end{align}
\fi

\end{lemma}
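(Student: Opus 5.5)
The plan is to split each identity according to whether the layer \(k\) lies before the convergence layer \(a\) of \(z_1,z_2\) or not. In each case we reduce to the one-atom and two-atom transfer identities of Lemmas~\ref{lem:weighted-one-atom-transfer} and~\ref{lem:weighted-two-atom-transfer}. The state \((q_k,y_k)\) on the canonical run of \(z_2\) has suffix depth \(n-k\), so the conditioning \(\sigma\)-fields \(\mathcal F_{n-k}^j\) and \(\hat{\mathcal F}_{n-k}^j\) are exactly the ones appearing in those lemmas with \(i=n-k\). Throughout, the prefactor \(p^j(q_a,y_a)W(q_a,y_a)/(1-\kappa)\) involves only the rate at the convergence state, which lies at layer \(a\). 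We will therefore check at each step whether it is fixed by the conditioning.

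\emph{Case \(k\geq a\).} Here \(\hat C^{r,j}=C^{r,j}\) is the same deterministic constant \(c=W(q_a,y_a)/((1-\kappa)w(\cdot,y_a))-1\), independent of \(k\). For \eqref{eq:corrected-final-reduce}, pull \(c\) out by the fixed-variable rule and apply the first identity of Lemma~\ref{lem:weighted-one-atom-transfer} to \(A^{r,j}(z_2,u,k)\).

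For \eqref{eq:corrected-hat-to-child}, pull \(c\) out and apply the second identity of Lemma~\ref{lem:weighted-one-atom-transfer}. This identity sends \(\hat A^{r,j}(z_2,u,k)\) to the normalized variable at the canonical child, namely \(A^{r,j}(z_2,u,k+1)\). Since \(k+1>a\), the coefficient at \(k+1\) is still \(c\), which gives the claim.

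\emph{Case \(k<a\).} Now \(k+1\leq a\). For \eqref{eq:corrected-final-reduce}, the quantity \(p^j(q_a,y_a)\) belongs to a state of suffix depth \(n-a<n-k\), so it is fixed by \(\hat{\mathcal F}_{n-k}^j\). Expanding \(C\cdot A\) gives
\[
\tfrac{p^jW}{1-\kappa}A^{r,j}(z_1,u,k)A^{r,j}(z_2,u,k)-A^{r,j}(z_2,u,k).
\]
Because \(k\) is strictly before convergence, the pairs at layer \(k\) are distinct, so the first identity of Lemma~\ref{lem:weighted-two-atom-transfer} applies to the product. Lemma~\ref{lem:weighted-one-atom-transfer} handles the linear term, and together they yield \(\hat C\hat A\).

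For \eqref{eq:corrected-hat-to-child}, the second identity of Lemma~\ref{lem:weighted-two-atom-transfer} sends the product of hats to \(A^{r,j}(z_1,u,k+1)A^{r,j}(z_2,u,k+1)\). If \(k+1<a\), this matches \(C^{r,j}(\cdot,k+1)A^{r,j}(z_2,u,k+1)\) directly.

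The boundary subcase \(k+1=a\) is the main obstacle. There the canonical children coincide, so we obtain the square
\[
\tfrac{p^jW}{1-\kappa}\bigl(A^{r,j}(z_2,u,a)\bigr)^2-A^{r,j}(z_2,u,a).
\]
To handle it, use that \(A^{r,j}(z_2,u,a)\) is an indicator divided by \(p^j(q_a,y_a)w(\cdot,y_a)\). Then \(A^2=A/(p^j w)\), so \(p^jW A^2/(1-\kappa)=\frac{W}{(1-\kappa)w}A\). The expression therefore equals \(\bigl(\frac{W}{(1-\kappa)w}-1\bigr)A^{r,j}(z_2,u,a)=C^{r,j}(z_1,z_2,u,a)A^{r,j}(z_2,u,a)\) by the second branch.

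When \(a=n\) (convergence at \(q_F^n\)), the same computation works with the conventions \(p=W=w=1\) and \(A\equiv1\). We also need to verify that \(p^j(q_a,y_a)\) is fixed by \(\mathcal F_{n-k}^j\) when \(a=k+1\). This holds because \(\mathcal F_{n-k}^j\) contains all variables of depth \(<n-k\), and the convergence state has depth \(n-a=n-k-1\).
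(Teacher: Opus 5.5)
Your proposal is correct and follows essentially the same route as the paper's proof. You split on the position of \(k\) relative to the convergence layer \(a\), use the two-atom transfers when \(k+1<a\), the idempotence identity \(A^2=A/(p^j(q_a,y_a)w)\) at the boundary \(k+1=a\), and the one-atom transfers once the correction factor is deterministic (\(k\geq a\)), with the final-reduction identity handled by the two-atom rule before convergence and the one-atom rule after.
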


\begin{proof}
Suppose first that \(k+1<a\). The current atom--state pairs and their
children are distinct. Moreover, \(p^j(q_a,y_a)\) is fixed by
\(\mathcal F_{n-k}^j\).
Lemmas~\ref{lem:weighted-one-atom-transfer} and
\ref{lem:weighted-two-atom-transfer} give
\ifarxiv
\begin{equation*}
\begin{aligned}
&\mathbb E[\hat C^{r,j}(z_1,z_2,u,k)\hat A^{r,j}(z_2,u,k)
\mid\mathcal F_{n-k}^j]\\
=&\frac{p^j(q_a,y_a)W(q_a,y_a)}{1-\kappa}
A^{r,j}(z_1,u,k+1)A^{r,j}(z_2,u,k+1)-A^{r,j}(z_2,u,k+1)
\end{aligned}
\end{equation*}
\else
\begin{align*}
& \mathbb E[\hat C^{r,j}(z_1,z_2,u,k)\hat A^{r,j}(z_2,u,k)
\mid\mathcal F_{n-k}^j]=
\\
&\frac{p^j(q_a,y_a)W(q_a,y_a)}{1-\kappa}
A^{r,j}(z_1,u,k+1)A^{r,j}(z_2,u,k+1)\\
&-A^{r,j}(z_2,u,k+1)
\end{align*}
\fi

This is the right-hand side of~\eqref{eq:corrected-hat-to-child}.

Suppose next that \(k+1=a\). The two children are the same atom--state pair.
Write \(A=A^{r,j}(z_1,u,a)=A^{r,j}(z_2,u,a)\).
The indicator in \(A\) is idempotent, and therefore
\[A^2=
\frac{A}{p^j(q_a,y_a)w((x_{a+1:n}^{(1)},y_{a+1:n}^{(1)}),y_a)}
\]
It follows that
\ifarxiv
\begin{equation*}
\frac{p^j(q_a,y_a)W(q_a,y_a)}{1-\kappa}A^2-A
=\left(
\frac{W(q_a,y_a)}
{(1-\kappa)w((x_{a+1:n}^{(1)},y_{a+1:n}^{(1)}),y_a)}-1
\right)A
\end{equation*}
\else
\begin{align*}
& \frac{p^j(q_a,y_a)W(q_a,y_a)}{1-\kappa}A^2-A
\\
&=\left(
\frac{W(q_a,y_a)}
{(1-\kappa)w((x_{a+1:n}^{(1)},y_{a+1:n}^{(1)}),y_a)}-1
\right)A
\end{align*}
\fi
This is again the desired right-hand side. If \(k\geq a\), the correction
factor is deterministic and the claim follows from the one-atom transition.
This proves~\eqref{eq:corrected-hat-to-child}. No convergence layer is crossed
in~\eqref{eq:corrected-final-reduce}. Before the convergence state we use the
two-atom final transition, and after it we use the one-atom final transition.
\end{proof}

We also record the transition from the samples produced by
Algorithm~\ref{alg:hmm-weighted-count-nfa} to those used by
Algorithm~\ref{alg:hmm-weighted-prefix-estimate}. Fix a reachable state set
\(R\subseteq Q^\ell\) and \(b\in H_\ell\), and write
\(R_b^+=\{q\in R:\mathcal E_\ell(q,b)\neq\emptyset\}\),
\(Z_R^+(b)=\bigcup_{q\in R_b^+}Z_+(q,b)\), and
\(W_R(b)=\sum_{z\in Z_R^+(b)}w(z,b)\).
For \(R=R(x_{1:\ell})\), this agrees with the definition in
Section~\ref{sec:canonical-runs}.
Assume \(W_R(b)>0\). For \(z=(x,y)\in Z_R^+(b)\), let
\(q_R(z)=\mathsf{first}(R,x)\). This is the unique state whose copy of \(z\)
is included by the canonical union in the latter algorithm. Since \(z\) has
positive weight,
\(q_R(z)\in R_b^+\). Put
\(\rho_R^j(b)=\min_{q\in R_b^+}p^j(q,b)\) and define
\(\tilde A_R^{r,j}(z,b)=
\frac{\mathbf{1}_{z\in\widetilde S_R^{r,j}(b)}}{\rho_R^j(b)w(z,b)}\).
Let \(\mathcal F_{\mathrm{pc}}^j\) contain all variables produced by the
completed outer repetition of Algorithm~\ref{alg:hmm-weighted-count-nfa}.
Conditioned on these variables, \(z\) appears in
\(\widetilde S_R^{r,j}(b)\) if and only
if it belongs to \(S^{r,j}(q_R(z),b)\) and the corresponding call to
\(\mathsf{reduce}\) includes the atom. Its inclusion probability is
\(\rho_R^j(b)/p^j(q_R(z),b)\). Hence
\ifarxiv
\begin{equation}
\mathbb E[\tilde A_R^{r,j}(z,b)\mid\mathcal F_{\mathrm{pc}}^j]
=\frac{\mathbf{1}_{z\in S^{r,j}(q_R(z),b)}}
{\rho_R^j(b)w(z,b)}
\frac{\rho_R^j(b)}{p^j(q_R(z),b)}
=A^{r,j}(z,(q_R(z),b))
\label{eq:online-one-atom-transfer}
\end{equation}
\else
\begin{align}
\mathbb E[\tilde A_R^{r,j}(z,b)\mid\mathcal F_{\mathrm{pc}}^j]
&=\frac{\mathbf{1}_{z\in S^{r,j}(q_R(z),b)}}
{\rho_R^j(b)w(z,b)}
\frac{\rho_R^j(b)}{p^j(q_R(z),b)}\notag\\
&=A^{r,j}(z,(q_R(z),b))
\label{eq:online-one-atom-transfer}
\end{align}
\fi
If \((r_1,z_1)\neq(r_2,z_2)\), the corresponding calls to
\(\mathsf{reduce}\) use distinct uniform random variables, even when
\(q_R(z_1)=q_R(z_2)\) or the two atoms have the same token suffix. These
variables are independent, and the same calculation gives
\ifarxiv
\begin{equation}
\mathbb E[\tilde A_R^{r_1,j}(z_1,b)\tilde A_R^{r_2,j}(z_2,b)
\mid\mathcal F_{\mathrm{pc}}^j]
=A^{r_1,j}(z_1,(q_R(z_1),b))
A^{r_2,j}(z_2,(q_R(z_2),b))
\label{eq:online-two-atom-transfer}
\end{equation}
\else
\begin{multline}
\mathbb E[\tilde A_R^{r_1,j}(z_1,b)\tilde A_R^{r_2,j}(z_2,b)
\mid\mathcal F_{\mathrm{pc}}^j]
\\
=A^{r_1,j}(z_1,(q_R(z_1),b))
A^{r_2,j}(z_2,(q_R(z_2),b))
\label{eq:online-two-atom-transfer}
\end{multline}
\fi

\begin{lemma}
\label{lem:weighted-block-intersection}
In \(\mathcal N^{*,j}\), for every \(u\in\mathcal V\) and
\(\mathcal T\subseteq[n_t]\),
\ifarxiv
\begin{equation}
\mathbb E\!\left[
\prod_{t\in\mathcal T}(M^{t,j}(u)-W(u))^2
\right]
\leq
\left(
\frac{(n+1)W(u)^2}{(1-\kappa)n_s}
\right)^{|\mathcal T|}
\label{eq:weighted-block-moment}
\end{equation}
\else
\begin{align}
 & \mathbb E\!\left[
\prod_{t\in\mathcal T}(M^{t,j}(u)-W(u))^2
\right]
&\notag\\
&\leq
\left(
\frac{(n+1)W(u)^2}{(1-\kappa)n_s}
\right)^{|\mathcal T|}
\label{eq:weighted-block-moment}
\end{align}
\fi
Fix \(\ell\in[n-1]\), \(x_{1:\ell}\in\Sigma^\ell\), and
\(b\in H_\ell\) with \(W_R(b)>0\), where \(R=R(x_{1:\ell})\).
The same bound holds with \(M^{t,j}(u)\) and \(W(u)\) replaced by
\(M_R^{t,j}(b)\) and \(W_R(b)\), respectively, where the expectation is
over \(\mathcal N^{*,j}\) and \(\mathcal G^j(x_{1:\ell})\).
\end{lemma}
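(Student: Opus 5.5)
We bound the product moment by expanding each squared deviation into a sum over pairs of atoms and then reducing every cross term to a single-pair computation. For a block \(t\), write \(M^{t,j}(u)-W(u)=\frac1{n_s}\sum_{r\in\mathrm{blk}(t)}\sum_{z\in Z_+(u)}w(z,b)\bigl(\hat A^{r,j}(z,u)-1\bigr)\). This identity holds because \(|\hat S^{r,j}(u)|/\rho^j(u)=\sum_z w(z,b)\hat A^{r,j}(z,u)\), and canonical deduplication guarantees that \(\hat S^{r,j}(u)\subseteq Z_+(u)\) contains each atom at most once.

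Squaring and multiplying over \(t\in\mathcal T\) gives a sum of products of factors \((\hat A^{r}(z_1)-1)(\hat A^{r'}(z_2)-1)\) over tuples of index pairs \((r,z_1),(r',z_2)\). We will first show that any factor with \(r\neq r'\) contributes zero in expectation. Different indices \(r\) use independent reduce-variables along the whole backward process. Iterating Lemma~\ref{lem:weighted-two-atom-transfer} (its distinct-\(r\) clause) together with Lemma~\ref{lem:weighted-one-atom-transfer} and the tower rule down to the terminal, where every normalized variable equals one, should show that the cross term has expectation \(\mathbb E[\hat A^{r}\hat A^{r'}]-\mathbb E\hat A^r-\mathbb E\hat A^{r'}+1=0\). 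This step needs care, because the rates \(p^j\) are shared random quantities. Since different blocks use disjoint sets of \(r\), the product over \(\mathcal T\) likewise factorizes into a sum over disjoint index sets. The plan is to process all blocks simultaneously, layer by layer, applying the multi-atom analogue of Lemma~\ref{lem:weighted-two-atom-transfer}, whose product form holds for any collection of distinct triples.

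For a single \(r\) and a pair \(z_1,z_2\), we will use the correction variables of Lemma~\ref{lem:corrected-pair-transport}. The key observation is that \(\hat A^r(z_1)\hat A^r(z_2)-\hat A^r(z_2)\) is dominated by \(\hat C^{r,j}(z_1,z_2,u,\ell)\hat A^{r,j}(z_2,u,\ell)\) plus a term of the same form. At layers before the convergence state \(v^{z_1,z_2}=(q_a,y_a)\), the oracle guarantee \(p^j(q_a,y_a)W(q_a,y_a)\ge 1-\kappa\) in \(\mathcal N^{*,j}\) makes the prefactor at least \(1\). Transporting with \eqref{eq:corrected-hat-to-child} and \eqref{eq:corrected-final-reduce} down to the terminal, with all variables becoming one, yields
\[
\mathbb E\bigl[(\hat A^r(z_1)-1)(\hat A^r(z_2)-1)\bigr]\le \frac{W(q_a,y_a)}{(1-\kappa)w(z^{\mathrm{suf}},y_a)}-1 .
\]
The inequality here is the only place the oracle correction enters. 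Summing \(w(z_1,b)w(z_2,b)\) times this bound over \(z_1\), grouped by convergence layer \(k\in\{\ell,\dots,n\}\), we apply Proposition~\ref{lem:weighted-splicing}. It gives \(\sum_{z_1\in D(z_2,u,k)}w(z_1,b)\cdot W(q_k,y_k)/w(\cdot)\le W(u)\), so each of the \(n+1\) layers contributes at most \(W(u)w(z_2,b)/(1-\kappa)\). Summing over \(z_2\) then gives \((n+1)W(u)^2/(1-\kappa)\) per \(r\). Averaging over the \(n_s\) independent \(r\) in a block, the \(n_s^{-2}\) normalization leaves \((n+1)W(u)^2/((1-\kappa)n_s)\) per block, and multiplying over \(\mathcal T\) gives \eqref{eq:weighted-block-moment}.

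\textbf{Main obstacle.} The hardest step is making the multi-block product rigorous. Distinct blocks are not independent, since they share the rates \(p^j\), so the expectation cannot be factorized naively. We will instead carry the whole product \(\prod_t(\cdot)\) through the filtration \(\mathcal F_i^j\subseteq\hat{\mathcal F}_i^j\) at once. Each pair uses its own correction variable, and the deterministic factors obtained after convergence are pulled out as constants. The pairs' factors multiply because all the triples involved are distinct. For the online variant we would first apply \eqref{eq:online-one-atom-transfer} and \eqref{eq:online-two-atom-transfer} to reduce to the precomputation variables at states \(q_R(z)\), then repeat the same argument with \(\rho_R^j(b)\ge\min_q(1-\kappa)/W(q,b)\). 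The splicing bound still applies with \(W_R(b)\) in place of \(W(u)\), because the sets \(Z_+(q,b)\) restricted by \(\mathsf{first}(R,\cdot)\) partition \(Z_R^+(b)\).
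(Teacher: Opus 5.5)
Your single-block computation is correct and uses the paper's ingredients: the correction variables \(\hat C^{r,j}\), Lemma~\ref{lem:corrected-pair-transport}, Proposition~\ref{lem:weighted-splicing}, and at most \(n+1\) convergence classes. However, the step that carries the lemma for general \(\mathcal T\) fails in the order you propose. You expand \(\prod_{t\in\mathcal T}(M^{t,j}(u)-W(u))^2\) and first claim that every term containing a factor \(\hat B^{r}(z_1)\hat B^{r'}(z_2)\) with \(r\neq r'\) has zero expectation. For \(|\mathcal T|\geq2\) this is false in general when another block contributes an uncorrected diagonal factor \(\hat A^{r_3}(z_3)\hat A^{r_3}(z_4)\). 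Once \(z_3,z_4\) converge at \(v_a\), idempotence turns this factor into \(A^{r_3}(\cdot,a)/(p^j(v_a)w)\) (for \(z_3=z_4\) it is \(1/\rho^j(u)\) directly). Through the median of block means \(\hat W^j(v_a)\), \(p^j(v_a)\) depends on the depth-\((n-a)\) reduce variables of all sample indices, including \(r\) and \(r'\). Conditioning on \(\mathcal F_{n-a}^j\) then no longer factorizes across blocks, and a quantity like \(\mathbb E[\hat B^{r}\hat B^{r'}\,g(\hat W^j(v_a))]\) with non-additive \(g\) is generically nonzero. For the same reason, your per-pair bounds \(\mathbb E[(\hat A^r(z_1)-1)(\hat A^r(z_2)-1)]\leq\cdots\) cannot be multiplied across blocks. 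Nor can ``each pair uses its own correction variable'' be applied termwise inside the expanded product, because the individual terms have indefinite sign. This includes \(-\hat B\), \(\hat B\hat B\), and \(\hat A\hat A-\hat A\) when \(\hat A(z_1)=0\).

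The missing step, which is what the paper does, is to dominate pointwise and per block before forming any product or expectation:
\(0\leq n_s^2(M^{t,j}(u)-W(u))^2\leq\hat N_{t,\ell}^j+\hat V_{t,\ell}^j\).
Here \(\hat N\) collects the corrected diagonal terms \(\hat C\hat A\), and \(\hat V\) keeps the cross-\(r\) and linear terms unevaluated. In \(\hat N\), the factor \(p^j(v_a)W(v_a)/(1-\kappa)\geq1\) is fixed by the filtration at every layer before convergence and cancels the \(1/p^j(v_a)\) exactly at convergence, so no unfixed random rate ever appears. Nonnegativity of the left side lets you multiply the \(|\mathcal T|\) inequalities. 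Proposition~\ref{lem:weighted-potential-transport} then transports \(\prod_t(\hat N_{t,k}^j+\hat V_{t,k}^j)\) layer by layer with a valid blockwise factorization. The cross terms vanish only at the terminal, where \(V_{t,n}^j=0\) and \(N_{t,n}^j\) is deterministic and bounded by your splicing count. Your closing paragraph points in this direction, but this nonnegative block-level domination is the idea that makes it work. Two smaller points. First, the oracle correction also enters through \(\rho^j(u)\geq(1-\kappa)/W(u)\) in the \(z_1=z_2\) terms. Second, in the online case the splicing bound holds because a spliced atom still has an accepting run from a state of \(R\), and so lies in \(Z_R^+(b)\) even if \(\mathsf{first}(R,\cdot)\) changes; it does not follow from the partition property.
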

Lemma~\ref{lem:weighted-block-intersection} supplies the dependent
concentration estimate for Lemma~\ref{lem:state-estimates}. Its proof is
deferred to Section~\ref{sec:proof-block-intersection}.

\begin{lemma}
\label{lem:weighted-unit-expectation}
For every productive product state \(v\), every \(z\in Z_+(v)\), and every
\(r\in[n_sn_t]\), in both \(\mathcal N^j\) and \(\mathcal N^{*,j}\),
\(\mathbb E[A^{r,j}(z,v)]=\mathbb E[\hat A^{r,j}(z,v)]=1\).
\end{lemma}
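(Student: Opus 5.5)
We prove both identities by induction on the suffix depth of \(v\), chaining the two transition identities of Lemma~\ref{lem:weighted-one-atom-transfer} with the tower rule.

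\emph{Base case.} At the identified terminal state \(q_F^n\) (suffix depth \(0\)), both normalized variables are defined to be one. Hence \(\mathbb E[A^{r,j}(z,q_F^n)]=\mathbb E[\hat A^{r,j}(z,q_F^n)]=1\) trivially.

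\emph{Inductive step.} Let \(v=(q,b)\) be productive with suffix depth \(i\geq1\), let \(z\in Z_+(v)\), and let \((z',(q',b'))\) be the canonical child of \((z,v)\). We first check that the child is legitimate. Since \(w(z,b)=\psi(a,b,b')w(z',b')>0\), we have \(z'\in Z_+(q',b')\). Either \(q'=q_F^n\), or \((q',b')\) is a productive state of suffix depth \(i-1\); in the latter case \((a,q',b')\in\mathcal E_\ell(q,b)\), so \(A^{r,j}(z',(q',b'))\) is well defined and the induction hypothesis applies to it.

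By the second identity of Lemma~\ref{lem:weighted-one-atom-transfer} and the tower rule,
\(\mathbb E[\hat A^{r,j}(z,v)]=\mathbb E\bigl[\mathbb E[\hat A^{r,j}(z,v)\mid\mathcal F_i^j]\bigr]=\mathbb E[A^{r,j}(z',(q',b'))]=1\).
By the first identity of the same lemma and the tower rule,
\(\mathbb E[A^{r,j}(z,v)]=\mathbb E\bigl[\mathbb E[A^{r,j}(z,v)\mid\hat{\mathcal F}_i^j]\bigr]=\mathbb E[\hat A^{r,j}(z,v)]=1\).

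\emph{Both executions.} The argument is identical for \(\mathcal N^j\) and \(\mathcal N^{*,j}\). The proof of Lemma~\ref{lem:weighted-one-atom-transfer} never uses the correction rule; it uses only three facts:
\begin{itemize}
\item the rates \(p^j(\cdot)\) and \(\rho^j(\cdot)\) are fixed by the appropriate \(\sigma\)-field,
\item the reduction probabilities lie in \([0,1]\),
\item each atom in \(\hat S^{r,j}(v)\) arises from exactly one canonical copy.
\end{itemize}

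\emph{Main obstacle.} The delicate point is legality of the reductions, i.e.\ that the ratios \(\rho^j(v)\psi/p^j(q',b')\) and \(p^j(v)/\rho^j(v)\) are at most \(1\). Without this, the conditional inclusion probabilities computed in Lemma~\ref{lem:weighted-one-atom-transfer} would be wrong.

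In \(\mathcal N^j\), legality holds by construction: \(\rho^j\) is a minimum over the productive edges, and \(p^j(v)=\min(\rho^j(v),\hat W^j(v)^{-1})\leq\rho^j(v)\). One subcase needs care: if \(\hat W^j(v)=0\), we interpret \(\hat W^j(v)^{-1}=\infty\), so \(p^j(v)=\rho^j(v)\) and the ratio equals \(1\).

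In \(\mathcal N^{*,j}\), legality was verified just before Lemma~\ref{lem:core-success-coupling}, via \(\rho^j(q,b)\geq(1-\kappa)W(q,b)^{-1}\). The corrected value is therefore at most \(\rho^j(v)\).

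Finally, every rate is strictly positive, so \(A^{r,j}\) and \(\hat A^{r,j}\) never involve division by zero. In \(\mathcal N^j\), positivity propagates inductively: terminal rates are \(1\), \(\rho^j\) is a minimum of positive quantities, and \(\hat W^j(v)^{-1}>0\). In \(\mathcal N^{*,j}\), positivity follows from the target interval \((1\pm\kappa)W(v)^{-1}\).
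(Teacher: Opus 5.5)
Your proof is correct and follows the paper's: induction on suffix depth, applying the two identities of Lemma~\ref{lem:weighted-one-atom-transfer} with the tower rule to obtain $\mathbb E[A^{r,j}(z,v)]=\mathbb E[\hat A^{r,j}(z,v)]=\mathbb E[A^{r,j}(z',v')]=1$. Your extra checks on the legality of the reductions and the positivity of the rates (including the $\hat W^j(v)=0$ convention) are not part of the paper's proof of this lemma, but they are consistent with the setup that proof relies on.
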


\begin{proof}
We induct on the suffix depth. At the terminal state both variables equal one
deterministically. Let \((z',v')\) be the canonical child of \((z,v)\).
Lemma~\ref{lem:weighted-one-atom-transfer} and the tower rule give
\ifarxiv
\begin{equation*}
\mathbb E[A^{r,j}(z,v)]
=\mathbb E[\hat A^{r,j}(z,v)]
=\mathbb E[A^{r,j}(z',v')]
\end{equation*}
\else
\begin{align*}
\mathbb E[A^{r,j}(z,v)]
&=\mathbb E[\hat A^{r,j}(z,v)]
=\mathbb E[A^{r,j}(z',v')]
\end{align*}
\fi
The last expectation is one by the induction hypothesis.
\end{proof}

\subsubsection{Proof of Lemma~\ref{lem:state-estimates}}

We prove Lemma~\ref{lem:state-estimates} by bounding the tentative value
\(\bar p^j(v)\) at each state in \(\mathcal V\) in the oracle-corrected algorithm
and then
substituting the bound into~\eqref{eq:first-bad-union-bound}.

\begin{proof}[Proof of Lemma~\ref{lem:state-estimates}]
Fix \(v\in\mathcal V\) and abbreviate
\(W=W(v)\). If a block mean is outside
\([W/(1+\kappa),W/(1-\kappa)]\), then
\(|M^{t,j}(v)-W|\geq\kappa W/(1+\kappa)\). With
\(\eta=\kappa/(1+\kappa)\), the parameter choice for \(n_s\) gives
\(\frac{n+1}{(1-\kappa)n_s\eta^2}
\leq\frac{(1+\kappa)^2}{16}
\leq\frac9{64}<\frac16\),
where we used \(0<\varepsilon\leq1\) and hence \(\kappa\leq1/2\).

If the median \(\hat W^j(v)\) is outside the inverse interval, at least half
of the blocks satisfy the preceding deviation event. Applying
Lemma~\ref{lem:weighted-block-intersection}, Markov's inequality, and
Fact~\ref{fact:intersection-tail} gives
\ifarxiv
\begin{equation*}
\mathbb{P}_{\mathcal N^{*,j}}\!\left[
\hat W^j(v)\notin(1\pm\kappa)^{-1}W(v)
\right]
\leq\left(\frac{4}{6}\right)^{n_t/2}\leq e^{-n_t/8}\leq\frac1{16h|Q^u|}
\end{equation*}
\else
\begin{align*}
& \mathbb{P}_{\mathcal N^{*,j}}\!\left[
\hat W^j(v)\notin(1\pm\kappa)^{-1}W(v)
\right]\\
&\leq\left(\frac{4}{6}\right)^{n_t/2}\leq e^{-n_t/8}\leq\frac1{16h|Q^u|}
\end{align*}
\fi

It remains to pass from the median to the tentative value. If
\(\hat W^j(v)\in(1\pm\kappa)^{-1}W(v)\), then
\(\hat W^j(v)^{-1}\in(1\pm\kappa)W(v)^{-1}\). We have already proved that
\(\rho^j(v)\geq(1-\kappa)/W(v)\). Since
\(\bar p^j(v)=\min\{\rho^j(v),\hat W^j(v)^{-1}\}\), it follows that
\(\bar p^j(v)\in(1\pm\kappa)W(v)^{-1}\). Therefore
\ifarxiv
\begin{equation*}
\mathbb{P}_{\mathcal N^j}[\mathcal A^j]
\leq
\sum_{v\in\mathcal V}
\mathbb{P}_{\mathcal N^{*,j}}\!\left[
\bar p^j(v)\notin(1\pm\kappa)W(v)^{-1}
\right]
\leq\frac{|\mathcal V|}{16h|Q^u|}
\leq\frac1{16}
\end{equation*}
\else
\begin{align*}
& \mathbb{P}_{\mathcal N^j}[\mathcal A^j] \\
&\leq
\sum_{v\in\mathcal V}
\mathbb{P}_{\mathcal N^{*,j}}\!\left[
\bar p^j(v)\notin(1\pm\kappa)W(v)^{-1}
\right]\\
&\leq\frac{|\mathcal V|}{16h|Q^u|}
\leq\frac1{16}
\end{align*}
\fi
Here, the first inequality is~\eqref{eq:first-bad-union-bound}, and
\(|\mathcal V|\leq h|Q^u|\). This proves
Lemma~\ref{lem:state-estimates}.
\end{proof}

\subsubsection{Proof of Lemma~\ref{lem:sample-overflow}}

We first establish the unit-expectation proposition used to bound the number
of samples.

When \(\mathcal A^j\) does not occur, for every productive \(v=(q,b)\),
\ifarxiv
\begin{equation*}
\begin{aligned}
&\mathbb E\!\left[
|S^{r,j}(v)|\mathbf{1}_{(\mathcal A^j)^c}
\right]
=\mathbb E\!\left[
p^j(v)\sum_{z\in Z_+(v)}w(z,b)A^{r,j}(z,v)
\mathbf{1}_{(\mathcal A^j)^c}
\right]\\
&\leq\frac{1+\kappa}{W(v)}
\sum_{z\in Z_+(v)}w(z,b)\mathbb E[A^{r,j}(z,v)]
=1+\kappa
\end{aligned}
\end{equation*}
\else
\begin{align*}
&\mathbb E\!\left[
|S^{r,j}(v)|\mathbf{1}_{(\mathcal A^j)^c}
\right]\\
&=\mathbb E\!\left[
p^j(v)\sum_{z\in Z_+(v)}w(z,b)A^{r,j}(z,v)
\mathbf{1}_{(\mathcal A^j)^c}
\right]\\
&\leq\frac{1+\kappa}{W(v)}
\sum_{z\in Z_+(v)}w(z,b)\mathbb E[A^{r,j}(z,v)]\\
&=1+\kappa
\end{align*}
\fi
The sample sets at zero-mass states are empty. Summing over all layers, hidden
states, and sample indices and then applying Markov's inequality gives
\ifarxiv
\begin{equation*}
\mathbb{P}_{\mathcal N^j}\!\left[
\mathcal B^j\cap(\mathcal A^j)^c
\right]
\leq
\frac{(1+\kappa)n_sn_th|Q^u|}{\theta}
\leq\frac1{16}
\end{equation*}
\else
\begin{align*}
\mathbb{P}_{\mathcal N^j}\!\left[
\mathcal B^j\cap(\mathcal A^j)^c
\right]
&\leq
\frac{(1+\kappa)n_sn_th|Q^u|}{\theta}
\leq\frac1{16}
\end{align*}
\fi
Consequently,
\ifarxiv
\begin{equation*}
\mathbb{P}_{\mathcal N^j}[(\mathcal C^j)^c]
=
\mathbb{P}_{\mathcal N^j}[\mathcal A^j\cup\mathcal B^j]
=
\mathbb{P}_{\mathcal N^j}[\mathcal A^j]
+
\mathbb{P}_{\mathcal N^j}\!\left[
\mathcal B^j\cap(\mathcal A^j)^c
\right]
\leq\frac18
\end{equation*}
\else
\begin{align*}
\mathbb{P}_{\mathcal N^j}[(\mathcal C^j)^c]
&=
\mathbb{P}_{\mathcal N^j}[\mathcal A^j\cup\mathcal B^j]\\
&=
\mathbb{P}_{\mathcal N^j}[\mathcal A^j]
+
\mathbb{P}_{\mathcal N^j}\!\left[
\mathcal B^j\cap(\mathcal A^j)^c
\right]\\
&\leq\frac18
\end{align*}
\fi
This proves Lemma~\ref{lem:sample-overflow}.

\subsubsection{Proof of Lemma~\ref{lem:integrated-one-core}}
\label{sec:proof-integrated-one-core}

Fix \(j\in[n_u]\).
If \(n=1\), then \(\Delta^j=0\), so
\(\mathcal I^j=\mathcal C^j\) and the result follows from
Lemma~\ref{lem:sample-overflow}. Assume \(n\geq2\).

For \(\ell\in[n-1]\) and every prefix satisfying
\(P(x_{1:\ell}\mid\alpha)>0\), define the oracle-corrected relative error by
\(\xi_{*}^j(x_{1:\ell})
=
\frac{
\hat P_{\text{hmm},*}^{\,j}(\alpha\mid x_{1:\ell})
}{
P_{\text{hmm}}(\alpha\mid x_{1:\ell})
}
-1\).
Define the corresponding oracle-corrected integrated squared error by
\begin{equation*}
\Delta_*^j
=
\sum_{\ell=1}^{n-1}
\sum_{\substack{x_{1:\ell}\in\Sigma^\ell\\
P(x_{1:\ell}\mid\alpha)>0}}
P(x_{1:\ell}\mid\alpha)
\left(\xi_{*}^j(x_{1:\ell})\right)^2
\end{equation*}

Fix \(\ell\in[n-1]\), \(x_{1:\ell}\in\Sigma^\ell\) satisfying
\(P(x_{1:\ell}\mid\alpha)>0\), and put
\(R=R(x_{1:\ell})\). Fix \(b\in H_\ell\) satisfying \(W_R(b)>0\).
Every \(q\in R_b^+\) is productive. In the oracle-corrected execution,
\(p^j(q,b)\in(1\pm\kappa)W(q,b)^{-1}\), and
\(W_R(b)\geq W(q,b)\). Therefore
\ifarxiv
\begin{equation}
\rho_R^j(b)
=\min_{q\in R_b^+}p^j(q,b)
\geq
\min_{q\in R_b^+}\frac{1-\kappa}{W(q,b)}
\geq\frac{1-\kappa}{W_R(b)}
\label{eq:online-rate-lower-bound}
\end{equation}
\else
\begin{align}
\rho_R^j(b)
&=\min_{q\in R_b^+}p^j(q,b)\notag\\
&\geq
\min_{q\in R_b^+}\frac{1-\kappa}{W(q,b)}\notag\\
&\geq\frac{1-\kappa}{W_R(b)}
\label{eq:online-rate-lower-bound}
\end{align}
\fi

Let \(\hat W_{R,*}^j(b)\) be the median of the block means
\(M_R^{1,j}(b),\ldots,M_R^{n_t,j}(b)\) obtained from
\(\mathcal N^{*,j}\). The singleton case of
Lemma~\ref{lem:weighted-block-intersection} and the parameter choice for
\(n_s\) give, for every \(t\in[n_t]\),
\ifarxiv
\begin{equation*}
\mathbb E\!\left[
\left(
\frac{M_R^{t,j}(b)}{W_R(b)}-1
\right)^2
\right]
\leq
\frac{n+1}{(1-\kappa)n_s}
\leq\frac{\kappa^2}{16}
\end{equation*}
\else
\begin{align*}
\mathbb E\!\left[
\left(
\frac{M_R^{t,j}(b)}{W_R(b)}-1
\right)^2
\right]
&\leq
\frac{n+1}{(1-\kappa)n_s}
\leq\frac{\kappa^2}{16}
\end{align*}
\fi
Fact~\ref{fact:median-square} gives the pointwise inequality
\(\left(
\frac{\hat W_{R,*}^j(b)}{W_R(b)}-1
\right)^2
\leq
\frac2{n_t}
\sum_{t=1}^{n_t}
\left(
\frac{M_R^{t,j}(b)}{W_R(b)}-1
\right)^2\).
Taking expectations yields
\begin{equation}
\mathbb E\!\left[
\left(
\frac{\hat W_{R,*}^j(b)}{W_R(b)}-1
\right)^2
\right]
\leq\frac{\kappa^2}{8}
\label{eq:oracle-corrected-prefix-state-mse}
\end{equation}

On the indices satisfying
\(P_{\text{hmm}}(y_\ell=b\mid x_{1:\ell})W_R(b)>0\), define
\(\beta_{\ell,b}
=\frac{P_{\text{hmm}}(y_\ell=b\mid x_{1:\ell})W_R(b)}
{P_{\text{hmm}}(\alpha\mid x_{1:\ell})}\).
The \(\beta_{\ell,b}\) are nonnegative weights that sum to one.
If \(P_{\text{hmm}}(y_\ell=b\mid x_{1:\ell})=0\), the contribution of \(b\) is zero. If
\(W_R(b)=0\), then \(R_b^+=\emptyset\), so the query returns
\(\hat W_{R,*}^j(b)=0\). Thus the terms satisfying
\(P_{\text{hmm}}(y_\ell=b\mid x_{1:\ell})W_R(b)=0\) contribute zero to both the exact and oracle-corrected
estimates, and
\ifarxiv
\begin{equation*}
\xi_{*}^j(x_{1:\ell})
=
\sum_{\substack{b\in H_\ell\\
P_{\text{hmm}}(y_\ell=b\mid x_{1:\ell})W_R(b)>0}}
\beta_{\ell,b}
\left(
\frac{\hat W_{R,*}^j(b)}{W_R(b)}-1
\right)
\end{equation*}
\else
\begin{align*}
& \xi_{*}^j(x_{1:\ell})
=
\sum_{\substack{b\in H_\ell\\
P_{\text{hmm}}(y_\ell=b\mid x_{1:\ell})W_R(b)>0}}
\beta_{\ell,b}
\left(
\frac{\hat W_{R,*}^j(b)}{W_R(b)}-1
\right)
\end{align*}
\fi
Jensen's inequality and~\eqref{eq:oracle-corrected-prefix-state-mse} give
\begin{equation}
\mathbb E\!\left[
\left(\xi_{*}^j(x_{1:\ell})\right)^2
\right]
\leq\frac{\kappa^2}{8}
\label{eq:oracle-corrected-prefix-mse}
\end{equation}

All sums are finite. Thus~\eqref{eq:oracle-corrected-prefix-mse} and linearity of
expectation give
\ifarxiv
\begin{equation}
\mathbb E[\Delta_*^j]
\leq
\sum_{\ell=1}^{n-1}
\sum_{x_{1:\ell}\in\Sigma^\ell}
P(x_{1:\ell}\mid\alpha)\frac{\kappa^2}{8}
=
\frac{(n-1)\kappa^2}{8}
\label{eq:oracle-corrected-integrated-mse}
\end{equation}
\else
\begin{align}
\mathbb E[\Delta_*^j]
&\leq
\sum_{\ell=1}^{n-1}
\sum_{x_{1:\ell}\in\Sigma^\ell}
P(x_{1:\ell}\mid\alpha)\frac{\kappa^2}{8}
=
\frac{(n-1)\kappa^2}{8}
\label{eq:oracle-corrected-integrated-mse}
\end{align}
\fi

By Lemma~\ref{lem:core-success-coupling}, the actual budgeted and oracle-corrected
estimates agree on \(\mathcal C^j\) for \(\ell\in[n-1]\). Together with the
zero terminal contribution to \(\Delta^j\), this gives
\(\mathbf{1}_{\mathcal C^j}\Delta^j
=
\mathbf{1}_{\mathcal C^j}\Delta_*^j
\leq\Delta_*^j\).
This is an indicator domination; in particular, we do not condition the
oracle-corrected execution on \(\mathcal C^j\). Taking expectations and
using~\eqref{eq:oracle-corrected-integrated-mse} gives
\(\mathbb E\!\left[
\mathbf{1}_{\mathcal C^j}\Delta^j
\right]
\leq\frac{(n-1)\kappa^2}{8}\).

Markov's inequality gives
\ifarxiv
\begin{equation*}
\mathbb{P}\!\left[
\mathcal C^j
\cap
\left\{\Delta^j>2n\kappa^2\right\}
\right]
\leq\frac{n-1}{16n}
\leq\frac1{16}
\end{equation*}
\else
\begin{align*}
\mathbb{P}\!\left[
\mathcal C^j
\cap
\left\{\Delta^j>2n\kappa^2\right\}
\right]
&\leq\frac{n-1}{16n}
\leq\frac1{16}
\end{align*}
\fi
Consequently,
\ifarxiv
\begin{equation*}
\mathbb{P}[(\mathcal I^j)^c]
\leq \mathbb{P}[(\mathcal C^j)^c]+
\mathbb{P}\!\left[\mathcal C^j
\cap
\left\{\Delta^j>2n\kappa^2\right\}
\right]
\leq\frac18+\frac1{16}=\frac3{16}
\end{equation*}
\else
\begin{align*}
&\mathbb{P}[(\mathcal I^j)^c]\\
&\leq \mathbb{P}[(\mathcal C^j)^c]+
\mathbb{P}\!\left[\mathcal C^j
\cap
\left\{\Delta^j>2n\kappa^2\right\}
\right]\\
&\leq\frac18+\frac1{16}=\frac3{16}
\end{align*}
\fi
This proves Lemma~\ref{lem:integrated-one-core}.

\subsubsection{Proof of Lemma~\ref{lem:fixed-sequence-one-core}}

\begin{proof}
For \(n=1\), the union is empty. Assume \(n\geq2\) and fix
\(j\in[n_u]\). For each \(\ell\in[n-1]\), put \(R=R(x_{1:\ell})\).
For every \(b\in H_\ell\) satisfying \(W_R(b)>0\), the oracle-corrected execution
satisfies
\ifarxiv
\begin{equation*}
\rho_R^j(b)
=\min_{q\in R_b^+}p^j(q,b)
\geq\min_{q\in R_b^+}\frac{1-\kappa}{W(q,b)}
\geq\frac{1-\kappa}{W_R(b)}
\end{equation*}
\else
\begin{align*}
\rho_R^j(b)
&=\min_{q\in R_b^+}p^j(q,b)\\
&\geq\min_{q\in R_b^+}\frac{1-\kappa}{W(q,b)}\\
&\geq\frac{1-\kappa}{W_R(b)}
\end{align*}
\fi
If \(\hat W_{R,*}^j(b)\notin(1\pm\kappa)^{-1}W_R(b)\), at least
\(n_t/2\) block means differ from \(W_R(b)\) by at least
\(\kappa W_R(b)/(1+\kappa)\). Lemma~\ref{lem:weighted-block-intersection},
Markov's inequality, Fact~\ref{fact:intersection-tail}, and the parameter
choices give
\ifarxiv
\begin{equation*}
\mathbb P\!\left[
\hat W_{R,*}^j(b)\notin(1\pm\kappa)^{-1}W_R(b)
\right]
\leq e^{-n_t/8}
\leq\frac1{16h|Q^u|}
\end{equation*}
\else
\begin{align*}
\mathbb P\!\left[
\hat W_{R,*}^j(b)\notin(1\pm\kappa)^{-1}W_R(b)
\right]
&\leq e^{-n_t/8}\\
&\leq\frac1{16h|Q^u|}
\end{align*}
\fi
If \(W_R(b)=0\), then \(R_b^+=\emptyset\) and
\(\hat W_{R,*}^j(b)=0\).
By nonnegativity and the identity
\ifarxiv
\begin{equation*}
P_{\mathrm{hmm}}(\alpha\mid x_{1:\ell})
=\sum_{b\in H_\ell}
P_{\mathrm{hmm}}(y_\ell=b\mid x_{1:\ell})W_R(b)
\end{equation*}
\else
\begin{align*}
P_{\mathrm{hmm}}(\alpha\mid x_{1:\ell})
&=\sum_{b\in H_\ell}
P_{\mathrm{hmm}}(y_\ell=b\mid x_{1:\ell})W_R(b)
\end{align*}
\fi
we obtain
\ifarxiv
\begin{equation*}
\begin{aligned}
&\mathbb P\!\left[
\bigcup_{\ell=1}^{n-1}
\left\{
\hat P_{\mathrm{hmm},*}^{\,j}(\alpha\mid x_{1:\ell})
\notin
(1\pm\kappa)^{-1}
P_{\mathrm{hmm}}(\alpha\mid x_{1:\ell})
\right\}
\right]\\
&\leq
\sum_{\ell=1}^{n-1}
\sum_{\substack{b\in H_\ell\\W_R(b)>0}}
\mathbb P\!\left[
\hat W_{R,*}^j(b)\notin(1\pm\kappa)^{-1}W_R(b)
\right]
\leq\sum_{\ell=1}^{n-1}\frac{|H_\ell|}{16h|Q^{u}|}
\leq\frac1{16}
\end{aligned}
\end{equation*}
\else
\begin{align*}
&\mathbb P\!\left[
\bigcup_{\ell=1}^{n-1}
\left\{
\hat P_{\mathrm{hmm},*}^{\,j}(\alpha\mid x_{1:\ell})
\notin
(1\pm\kappa)^{-1}
P_{\mathrm{hmm}}(\alpha\mid x_{1:\ell})
\right\}
\right]\\
&\leq
\sum_{\ell=1}^{n-1}
\sum_{\substack{b\in H_\ell\\W_R(b)>0}}
\mathbb P\!\left[
\hat W_{R,*}^j(b)\notin(1\pm\kappa)^{-1}W_R(b)
\right]
\\
&\leq\sum_{\ell=1}^{n-1}\frac{|H_\ell|}{16h|Q^{u}|}\\
&\leq\frac1{16}
\end{align*}
\fi
On \(\mathcal C^j\), Lemma~\ref{lem:core-success-coupling} identifies the
ordinary and oracle-corrected estimates simultaneously for every
\(\ell\in[n-1]\), which proves the result.
\end{proof}

\subsubsection{Proof of Lemma~\ref{lem:weighted-block-intersection}}
\label{sec:proof-block-intersection}
Fix \(u=(q_\ell,b)\), let
\(\mathcal D_t=\{n_s(t-1)+1,\ldots,n_st\}\), and put
\(B^{r,j}(z,u,k)=A^{r,j}(z,u,k)-1\) and
\(\hat B^{r,j}(z,u,k)=\hat A^{r,j}(z,u,k)-1\). The block mean at \(u\) is
\begin{equation}
M^{t,j}(u)=\frac1{n_s}\sum_{r\in\mathcal D_t}
\sum_{z\in Z_+(u)}w(z,b)\hat A^{r,j}(z,u,\ell)
\label{eq:product-state-block-mean}
\end{equation}

For \(\ell\leq k\leq n\), define
\ifarxiv
\begin{equation*}
\begin{aligned}
&\hat N_{t,k}^j
=\sum_{r\in\mathcal D_t}
\sum_{z_1,z_2\in Z_+(u)}w(z_1,b)w(z_2,b)
\cdot\hat C^{r,j}(z_1,z_2,u,k)\hat A^{r,j}(z_2,u,k)\\
&N_{t,k}^j
=\sum_{r\in\mathcal D_t}
\sum_{z_1,z_2\in Z_+(u)}w(z_1,b)w(z_2,b)
\cdot C^{r,j}(z_1,z_2,u,k)A^{r,j}(z_2,u,k)
\end{aligned}
\end{equation*}
\else
\begin{align*}
\hat N_{t,k}^j
&=\sum_{r\in\mathcal D_t}
\sum_{z_1,z_2\in Z_+(u)}w(z_1,b)w(z_2,b)\\
&\cdot\hat C^{r,j}(z_1,z_2,u,k)\hat A^{r,j}(z_2,u,k)\\
N_{t,k}^j
&=\sum_{r\in\mathcal D_t}
\sum_{z_1,z_2\in Z_+(u)}w(z_1,b)w(z_2,b)\\
&\cdot C^{r,j}(z_1,z_2,u,k)A^{r,j}(z_2,u,k)
\end{align*}
\fi
and
\ifarxiv
\begin{equation*}
\begin{aligned}
&\hat V_{t,k}^j
=\sum_{\substack{r_1,r_2\in\mathcal D_t\\r_1\neq r_2}}
\sum_{z_1,z_2\in Z_+(u)}w(z_1,b)w(z_2,b) \cdot\hat B^{r_1,j}(z_1,u,k)\hat B^{r_2,j}(z_2,u,k)\\
&-
\sum_{r\in\mathcal D_t}
\sum_{z_1,z_2\in Z_+(u)}w(z_1,b)w(z_2,b) \cdot\hat B^{r,j}(z_1,u,k)\\
&V_{t,k}^j
=\sum_{\substack{r_1,r_2\in\mathcal D_t\\r_1\neq r_2}}
\sum_{z_1,z_2\in Z_+(u)}w(z_1,b)w(z_2,b) \cdot B^{r_1,j}(z_1,u,k)B^{r_2,j}(z_2,u,k)\\
&-
\sum_{r\in\mathcal D_t}
\sum_{z_1,z_2\in Z_+(u)}w(z_1,b)w(z_2,b) \cdot B^{r,j}(z_1,u,k)
\end{aligned}
\end{equation*}
\else
\begin{align*}
\hat V_{t,k}^j
&=\sum_{\substack{r_1,r_2\in\mathcal D_t\\r_1\neq r_2}}
\sum_{z_1,z_2\in Z_+(u)}w(z_1,b)w(z_2,b)\\
&\cdot\hat B^{r_1,j}(z_1,u,k)\hat B^{r_2,j}(z_2,u,k)\\
&-
\sum_{r\in\mathcal D_t}
\sum_{z_1,z_2\in Z_+(u)}w(z_1,b)w(z_2,b)\\
&\cdot\hat B^{r,j}(z_1,u,k)\\
V_{t,k}^j
&=\sum_{\substack{r_1,r_2\in\mathcal D_t\\r_1\neq r_2}}
\sum_{z_1,z_2\in Z_+(u)}w(z_1,b)w(z_2,b)\\
&\cdot B^{r_1,j}(z_1,u,k)B^{r_2,j}(z_2,u,k)\\
&-
\sum_{r\in\mathcal D_t}
\sum_{z_1,z_2\in Z_+(u)}w(z_1,b)w(z_2,b)\\
&\cdot B^{r,j}(z_1,u,k)
\end{align*}
\fi
The following proposition gives the important recurrence used in the proof:
\begin{proposition}
\label{lem:weighted-potential-transport}
For every \(\mathcal T\subseteq[n_t]\) and \(\ell\leq k<n\),
\begin{equation}
\mathbb E\!\left[
\prod_{t\in\mathcal T}(\hat N_{t,k}^j+\hat V_{t,k}^j)
\right]
=
\mathbb E\!\left[
\prod_{t\in\mathcal T}(N_{t,k+1}^j+V_{t,k+1}^j)
\right]
\label{eq:hat-to-next-potential}
\end{equation}
and
\begin{equation}
\mathbb E\!\left[
\prod_{t\in\mathcal T}(N_{t,k}^j+V_{t,k}^j)
\right]
=
\mathbb E\!\left[
\prod_{t\in\mathcal T}(\hat N_{t,k}^j+\hat V_{t,k}^j)
\right]
\label{eq:final-to-hat-potential}
\end{equation}
\end{proposition}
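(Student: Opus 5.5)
The plan is to prove both identities by conditioning on the appropriate $\sigma$-fields and using the tower rule. For \eqref{eq:hat-to-next-potential} I condition on $\mathcal F_{n-k}^j$; for \eqref{eq:final-to-hat-potential} I condition on $\hat{\mathcal F}_{n-k}^j$. In each case the product over $t\in\mathcal T$ expands, by multilinearity, into a finite sum of monomials. Each monomial is a product of factors of the form $\hat C^{r,j}\hat A^{r,j}$ or $\hat B^{r_1,j}\hat B^{r_2,j}$, or $\hat B^{r,j}$, one chosen per block $t$. The blocks $\mathcal D_t$ are disjoint, and within the $V$-term the indices satisfy $r_1\neq r_2$. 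So every monomial involves atom–state pairs carrying pairwise distinct sample indices $r$, except for the single-index pieces. Those appear either as the paired factor $\hat C^{r,j}(z_1,z_2,u,k)\hat A^{r,j}(z_2,u,k)$ in $\hat N$ or as a lone $\hat B^{r,j}$ in $\hat V$.

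The key steps are as follows. First, I will note that the coefficients $w(z_\nu,b)$ and the factor $p^j(q_a,y_a)W(q_a,y_a)/(1-\kappa)$ are deterministic or fixed by the current $\sigma$-field whenever $a>k$. This holds because $(q_a,y_a)$ has smaller suffix depth. Second, I will observe that, conditioned on $\mathcal F_{n-k}^j$, the reductions at depth $n-k$ for distinct indices $r$ use independent uniform variables. Hence the conditional expectation of a monomial factorizes across distinct $r$.

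Third, for the factors with a single $r$ I use the identities already established. For each $C\cdot A$ factor I apply Lemma~\ref{lem:corrected-pair-transport}. For each lone $\hat A$ I apply the one-atom transfer of Lemma~\ref{lem:weighted-one-atom-transfer}, which gives $\mathbb E[\hat B^{r,j}(z,u,k)\mid\mathcal F_{n-k}^j]=B^{r,j}(z,u,k+1)$. For a cross product of $\hat B$'s with distinct $r_1\neq r_2$, the factorization together with the one-atom transfer gives $B^{r_1,j}(\cdot,k+1)B^{r_2,j}(\cdot,k+1)$. Fourth, I reassemble the sum, which yields exactly $\prod_t(N^j_{t,k+1}+V^j_{t,k+1})$ as the conditional expectation. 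Taking expectations then gives \eqref{eq:hat-to-next-potential}. The argument for \eqref{eq:final-to-hat-potential} is identical. There I use the final-reduction halves of Lemmas~\ref{lem:weighted-one-atom-transfer}, \ref{lem:weighted-two-atom-transfer}, and \ref{lem:corrected-pair-transport}, namely \eqref{eq:corrected-final-reduce}. The final reductions producing $S^{r,j}(q_k,y_k)$ for distinct $r$ (and for distinct atoms with the same $r$) are conditionally independent given $\hat{\mathcal F}_{n-k}^j$.

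The main obstacle is justifying the factorization when different blocks, or different $r$'s within a block, involve the same atom–state pair at layer $k$ or share canonical children. I expect to handle this by pointing out that distinctness of the index $r$ alone suffices: each call to $\mathsf{reduce}$ in sample index $r$ uses its own independent uniform variables. This is exactly the last assertion of Lemma~\ref{lem:weighted-two-atom-transfer}. Within a single $r$, only the pair $(z_1,z_2)$ inside one $C\cdot A$ factor interacts, and Lemma~\ref{lem:corrected-pair-transport} treats that case, including the coalescence at $k+1=a$. I will also check that terms where $k\geq a$, in which the correction factor is deterministic, and terms at the terminal layer $k+1=n$, where $A\equiv1$, fit the same pattern.
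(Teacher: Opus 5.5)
Your proposal is correct and follows essentially the same route as the paper. You condition on $\mathcal F_{n-k}^j$ (respectively $\hat{\mathcal F}_{n-k}^j$), use the tower rule and the independence of the $\mathsf{reduce}$ variables across the disjoint blocks $\mathcal D_t$ to factor the conditional expectation, then transport $\hat N$ termwise via Lemma~\ref{lem:corrected-pair-transport} and $\hat V$ via the one-atom transfer plus independence for $r_1\neq r_2$. The only cosmetic difference is that you expand into monomials before factoring across indices, while the paper factors block by block first; the two are equivalent by multilinearity.
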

\begin{proof}[Proof of Proposition~\ref{lem:weighted-potential-transport}]

We prove the two identities in turn. By the tower rule,
\ifarxiv
\begin{equation*}
\mathbb E\!\left[
\prod_{t\in\mathcal T}(\hat N_{t,k}^j+\hat V_{t,k}^j)
\right]={}
\mathbb E\!\left[
\mathbb E\!\left[
\prod_{t\in\mathcal T}(\hat N_{t,k}^j+\hat V_{t,k}^j)
\mathrel{\Big|}\mathcal F_{n-k}^j
\right]
\right]
\end{equation*}
\else
\begin{align*}
&\mathbb E\!\left[
\prod_{t\in\mathcal T}(\hat N_{t,k}^j+\hat V_{t,k}^j)
\right]={}\\
&\mathbb E\!\left[
\mathbb E\!\left[
\prod_{t\in\mathcal T}(\hat N_{t,k}^j+\hat V_{t,k}^j)
\mathrel{\Big|}\mathcal F_{n-k}^j
\right]
\right]
\end{align*}
\fi
Conditioned on \(\mathcal F_{n-k}^j\), all child sets and sampling
probabilities at layer \(k\) are fixed. Distinct blocks use disjoint sample
indices, and the current calls to \(\mathsf{reduce}\) use independent random
variables across these indices. The inner
conditional expectation therefore factors as
\ifarxiv
\begin{equation*}
\mathbb E\!\left[
\prod_{t\in\mathcal T}(\hat N_{t,k}^j+\hat V_{t,k}^j)
\mathrel{\Big|}\mathcal F_{n-k}^j
\right]={}
\prod_{t\in\mathcal T}
\left(
\mathbb E[\hat N_{t,k}^j\mid\mathcal F_{n-k}^j]
+\mathbb E[\hat V_{t,k}^j\mid\mathcal F_{n-k}^j]
\right)
\end{equation*}
\else
\begin{align*}
&\mathbb E\!\left[
\prod_{t\in\mathcal T}(\hat N_{t,k}^j+\hat V_{t,k}^j)
\mathrel{\Big|}\mathcal F_{n-k}^j
\right]={}\\
&\prod_{t\in\mathcal T}
\left(
\mathbb E[\hat N_{t,k}^j\mid\mathcal F_{n-k}^j]
+\mathbb E[\hat V_{t,k}^j\mid\mathcal F_{n-k}^j]
\right)
\end{align*}
\fi

Applying Lemma~\ref{lem:corrected-pair-transport} term by term gives
\(\mathbb E[\hat N_{t,k}^j\mid\mathcal F_{n-k}^j]=N_{t,k+1}^j\). The one-atom
transition gives
\(\mathbb E[\hat B^{r,j}(z,u,k)\mid\mathcal F_{n-k}^j]=B^{r,j}(z,u,k+1)\). If
\(r_1\neq r_2\), independence of the corresponding random variables gives
\ifarxiv
\begin{equation*}
\mathbb E[\hat B^{r_1,j}(z_1,u,k)\hat B^{r_2,j}(z_2,u,k)
\mid\mathcal F_{n-k}^j]={}
B^{r_1,j}(z_1,u,k+1)B^{r_2,j}(z_2,u,k+1)
\end{equation*}
\else
\begin{align*}
&\mathbb E[\hat B^{r_1,j}(z_1,u,k)\hat B^{r_2,j}(z_2,u,k)
\mid\mathcal F_{n-k}^j]={}\\
&B^{r_1,j}(z_1,u,k+1)B^{r_2,j}(z_2,u,k+1)
\end{align*}
\fi
Thus the two required identities are
\ifarxiv
\begin{equation*}
\mathbb E[\hat N_{t,k}^j\mid\mathcal F_{n-k}^j]
=N_{t,k+1}^j\qquad
\mathbb E[\hat V_{t,k}^j\mid\mathcal F_{n-k}^j]
=V_{t,k+1}^j
\end{equation*}
\else
\begin{align*}
\mathbb E[\hat N_{t,k}^j\mid\mathcal F_{n-k}^j]
&=N_{t,k+1}^j\\
\mathbb E[\hat V_{t,k}^j\mid\mathcal F_{n-k}^j]
&=V_{t,k+1}^j
\end{align*}
\fi
Substitution in the conditional product proves
\eqref{eq:hat-to-next-potential}.

For~\eqref{eq:final-to-hat-potential}, condition on
\(\hat{\mathcal F}_{n-k}^j\). The final reductions are independent across
blocks. Lemma~\ref{lem:corrected-pair-transport} gives
\(\mathbb E[N_{t,k}^j\mid\hat{\mathcal F}_{n-k}^j]=\hat N_{t,k}^j\), and the
one-atom transition together with independence across distinct sample
indices gives the remaining identity. In summary,
\ifarxiv
\begin{equation*}
\mathbb E[N_{t,k}^j\mid\hat{\mathcal F}_{n-k}^j]
=\hat N_{t,k}^j\qquad
\mathbb E[V_{t,k}^j\mid\hat{\mathcal F}_{n-k}^j]
=\hat V_{t,k}^j
\end{equation*}
\else
\begin{align*}
\mathbb E[N_{t,k}^j\mid\hat{\mathcal F}_{n-k}^j]
&=\hat N_{t,k}^j\\
\mathbb E[V_{t,k}^j\mid\hat{\mathcal F}_{n-k}^j]
&=\hat V_{t,k}^j
\end{align*}
\fi
\end{proof}
We now prove Lemma~\ref{lem:weighted-block-intersection}.
We now bound the joint second moment of the block errors. From
\eqref{eq:product-state-block-mean},
\ifarxiv
\begin{equation}
\begin{aligned}
&n_s^2(M^{t,j}(u)-W(u))^2 =\sum_{r\in\mathcal D_t}
\sum_{z_1,z_2\in Z_+(u)}w(z_1,b)w(z_2,b) \cdot\hat A^{r,j}(z_1,u,\ell)\hat A^{r,j}(z_2,u,\ell)\\
&-\sum_{r\in\mathcal D_t}
\sum_{z_1,z_2\in Z_+(u)}w(z_1,b)w(z_2,b) \cdot\hat A^{r,j}(z_2,u,\ell)\\
&-\sum_{r\in\mathcal D_t}
\sum_{z_1,z_2\in Z_+(u)}w(z_1,b)w(z_2,b) \cdot\hat B^{r,j}(z_1,u,\ell)\\
&+\sum_{\substack{r_1,r_2\in\mathcal D_t\\r_1\neq r_2}}
\sum_{z_1,z_2\in Z_+(u)}w(z_1,b)w(z_2,b) \cdot\hat B^{r_1,j}(z_1,u,\ell)\hat B^{r_2,j}(z_2,u,\ell)
\end{aligned}
\label{eq:weighted-second-moment-expansion}
\end{equation}
\else
\begin{align}
&n_s^2(M^{t,j}(u)-W(u))^2
\notag\\
&=\sum_{r\in\mathcal D_t}
\sum_{z_1,z_2\in Z_+(u)}w(z_1,b)w(z_2,b)
\notag\\
&\cdot\hat A^{r,j}(z_1,u,\ell)\hat A^{r,j}(z_2,u,\ell)
\notag\\
&-\sum_{r\in\mathcal D_t}
\sum_{z_1,z_2\in Z_+(u)}w(z_1,b)w(z_2,b)
\notag\\
&\cdot\hat A^{r,j}(z_2,u,\ell)
\notag\\
&-\sum_{r\in\mathcal D_t}
\sum_{z_1,z_2\in Z_+(u)}w(z_1,b)w(z_2,b)
\notag\\
&\cdot\hat B^{r,j}(z_1,u,\ell)
\notag\\
&+\sum_{\substack{r_1,r_2\in\mathcal D_t\\r_1\neq r_2}}
\sum_{z_1,z_2\in Z_+(u)}w(z_1,b)w(z_2,b)
\notag\\
&\cdot\hat B^{r_1,j}(z_1,u,\ell)\hat B^{r_2,j}(z_2,u,\ell)
\label{eq:weighted-second-moment-expansion}
\end{align}
\fi

If \(z_1\neq z_2\), their convergence state occurs strictly after \(u\).
The oracle correction guarantees \(p^j(v^{z_1,z_2})W(v^{z_1,z_2})/(1-\kappa)\geq1\).
Nonnegativity of the normalized atoms gives
\(\hat A^{r,j}(z_1,u,\ell)\hat A^{r,j}(z_2,u,\ell)
-\hat A^{r,j}(z_2,u,\ell)
\leq\hat C^{r,j}(z_1,z_2,u,\ell)\hat A^{r,j}(z_2,u,\ell)\).
For \(z_1=z_2=z\), idempotence of the indicator and
\(\rho^j(u)\geq(1-\kappa)/W(u)\) give
\ifarxiv
\begin{equation*}
\begin{aligned}
&(\hat A^{r,j}(z,u,\ell))^2-\hat A^{r,j}(z,u,\ell)\\
=&\left(\frac1{\rho^j(u)w(z,b)}-1\right)\hat A^{r,j}(z,u,\ell)\\
\leq&\left(\frac{W(u)}{(1-\kappa)w(z,b)}-1\right)\hat A^{r,j}(z,u,\ell)\\
=& \hat C^{r,j}(z,z,u,\ell)\hat A^{r,j}(z,u,\ell)
\end{aligned}
\end{equation*}
\else
\begin{align*}
&(\hat A^{r,j}(z,u,\ell))^2-\hat A^{r,j}(z,u,\ell)\\
&=\left(\frac1{\rho^j(u)w(z,b)}-1\right)\hat A^{r,j}(z,u,\ell)\\
&\leq\left(\frac{W(u)}{(1-\kappa)w(z,b)}-1\right)\hat A^{r,j}(z,u,\ell)\\
&=\hat C^{r,j}(z,z,u,\ell)\hat A^{r,j}(z,u,\ell)
\end{align*}
\fi
Substituting these bounds into
\eqref{eq:weighted-second-moment-expansion} gives
\(0\leq n_s^2(M^{t,j}(u)-W(u))^2
\leq\hat N_{t,\ell}^j+\hat V_{t,\ell}^j\).
Multiplying over \(t\in\mathcal T\), taking expectations, and successively
using Lemma~\ref{lem:weighted-potential-transport} gives
\ifarxiv
\begin{equation}
n_s^{2|\mathcal T|}
\mathbb E\!\left[
\prod_{t\in\mathcal T}(M^{t,j}(u)-W(u))^2
\right]
\leq\mathbb E\!\left[
\prod_{t\in\mathcal T}(N_{t,n}^j+V_{t,n}^j)
\right]
\label{eq:transport-to-terminal}
\end{equation}
\else
\begin{align}
&n_s^{2|\mathcal T|}
\mathbb E\!\left[
\prod_{t\in\mathcal T}(M^{t,j}(u)-W(u))^2
\right]
\notag\\
&
\leq\mathbb E\!\left[
\prod_{t\in\mathcal T}(N_{t,n}^j+V_{t,n}^j)
\right]
\label{eq:transport-to-terminal}
\end{align}
\fi

At \(q_F^n\), all normalized atoms equal one. Hence \(V_{t,n}^j=0\) and
\ifarxiv
\begin{equation*}
N_{t,n}^j
=n_s\left(\frac{W(u)^2}{1-\kappa}-W(u)^2\right)
+\frac{n_s}{1-\kappa}
\sum_{\substack{z_1,z_2\in Z_+(u)\\z_1\neq z_2}}\frac{w(z_1,b)w(z_2,b)W(q_a,y_a)}
{w((x_{a+1:n}^{(1)},y_{a+1:n}^{(1)}),y_a)}
\end{equation*}
\else
\begin{align*}
&N_{t,n}^j
=n_s\left(\frac{W(u)^2}{1-\kappa}-W(u)^2\right)\\
&+\frac{n_s}{1-\kappa}
\sum_{\substack{z_1,z_2\in Z_+(u)\\z_1\neq z_2}}\frac{w(z_1,b)w(z_2,b)W(q_a,y_a)}
{w((x_{a+1:n}^{(1)},y_{a+1:n}^{(1)}),y_a)}
\end{align*}
\fi
where \(v^{z_1,z_2}=(q_a,y_a)\), using the terminal convention stated after
\eqref{eq:correction-variable}.

For fixed \(z_1\), partition \(z_2\) according to its convergence state with
\(z_1\). Proposition~\ref{lem:weighted-splicing} bounds the inner sum by \(W(u)\)
whenever the convergence state is not \(q_F^n\). When the convergence state is
\(q_F^n\), the weight factor is one and the corresponding atoms form a subset
of \(Z_+(u)\), so their total weight is at most \(W(u)\). There are at most
\(n-\ell+1\) convergence states. The pairs satisfying \(z_1=z_2\) contribute
exactly \(W(u)^2\). It follows that
\(0\leq N_{t,n}^j
\leq\frac{n_s(n+1)W(u)^2}{1-\kappa}\).
Combining this with~\eqref{eq:transport-to-terminal},
\ifarxiv
\begin{equation*}
\mathbb E\!\left[
\prod_{t\in\mathcal T}(M^{t,j}(u)-W(u))^2
\right]
\leq
\left(\frac{(n+1)W(u)^2}{(1-\kappa)n_s}\right)^{|\mathcal T|}
\end{equation*}
\else
\begin{align*}
&\mathbb E\!\left[
\prod_{t\in\mathcal T}(M^{t,j}(u)-W(u))^2
\right]
\leq
\left(\frac{(n+1)W(u)^2}{(1-\kappa)n_s}\right)^{|\mathcal T|}
\end{align*}
\fi
This proves~\eqref{eq:weighted-block-moment}.

We next prove the remaining claim for \(\mathsf{compute}\). Fix
\(R\) and \(b\in H_\ell\) with \(W_R(b)>0\), and set \(y_\ell=b\). For
\(z=(x_{\ell+1:n},y_{\ell+1:n})\in Z_R^+(b)\), follow the canonical run
beginning at \((q_R(z),b)\). At layer \(k\), denote its NFA state by
\(q_k(z)\), and define
\begin{align*}
A_R^{r,j}(z,b,k)
&=A^{r,j}((x_{k+1:n},y_{k+1:n}),(q_k(z),y_k))\\
\hat A_R^{r,j}(z,b,k)
&=\hat A^{r,j}((x_{k+1:n},y_{k+1:n}),(q_k(z),y_k))
\end{align*}
for \(\ell\leq k<n\), with both values one at \(k=n\). For two distinct
atoms, write \(z_\nu=(x_{\ell+1:n}^{(\nu)},y_{\ell+1:n}^{(\nu)})\) for
\(\nu\in\{1,2\}\), and let \(v^{z_1,z_2}=(q_a,y_a)\) be the convergence state of
the canonical runs beginning at \((q_R(z_1),b)\) and \((q_R(z_2),b)\).
Define \(C_R^{r,j}\) and \(\hat C_R^{r,j}\) for such a pair by the two branches
in~\eqref{eq:correction-variable} and
\eqref{eq:hat-correction-variable}, replacing \(A^{r,j},\hat A^{r,j}\) by
\(A_R^{r,j},\hat A_R^{r,j}\), respectively. For \(z_1=z_2=z\), define at every layer
\(C_R^{r,j}(z,z,b,k)=\hat C_R^{r,j}(z,z,b,k)
=\frac{W_R(b)}{(1-\kappa)w(z,b)}-1\).
Thus the correction for \(z_1=z_2\) preserves the mass \(W_R(b)\) throughout
the run.

At the reduction stage of \(\mathsf{compute}\), define
\begin{equation*}
\ifarxiv
\tilde C_R^{r,j}(z_1,z_2,b){}
=
\begin{cases}\dfrac{p^j(q_a,y_a)W(q_a,y_a)}{1-\kappa}
\tilde A_R^{r,j}(z_1,b)-1 & z_1\neq z_2\\
\dfrac{W_R(b)}{(1-\kappa)w(z_1,b)}-1 & z_1=z_2
\end{cases}
\else
\begin{aligned}
&\tilde C_R^{r,j}(z_1,z_2,b){}\\
&=\begin{cases}
\dfrac{p^j(q_a,y_a)W(q_a,y_a)}{1-\kappa}
\tilde A_R^{r,j}(z_1,b)-1 & z_1\neq z_2\\
\dfrac{W_R(b)}{(1-\kappa)w(z_1,b)}-1 & z_1=z_2
\end{cases}
\end{aligned}
\fi
\end{equation*}
The terminal convention is used when the two canonical runs agree only at
\(q_F^n\). Equations~\eqref{eq:online-one-atom-transfer} and
\eqref{eq:online-two-atom-transfer} give
\ifarxiv
\begin{equation}
\mathbb E[\tilde C_R^{r,j}(z_1,z_2,b)\tilde A_R^{r,j}(z_2,b)
\mid\mathcal F_{\mathrm{pc}}^j]={}
C_R^{r,j}(z_1,z_2,b,\ell)A_R^{r,j}(z_2,b,\ell)
\label{eq:online-corrected-transfer}
\end{equation}
\else
\begin{align}
&\mathbb E[\tilde C_R^{r,j}(z_1,z_2,b)\tilde A_R^{r,j}(z_2,b)
\mid\mathcal F_{\mathrm{pc}}^j]={}\notag\\
&C_R^{r,j}(z_1,z_2,b,\ell)A_R^{r,j}(z_2,b,\ell)
\label{eq:online-corrected-transfer}
\end{align}
\fi
For \(\ell\leq k<n\), the two corrected-pair transitions also hold for
\(C_R^{r,j},A_R^{r,j}\): the proof is identical to that of
Lemma~\ref{lem:corrected-pair-transport}, and
Lemma~\ref{lem:weighted-two-atom-transfer} already permits the two atoms to
belong to different product states at the same layer.

Put \(\tilde B_R^{r,j}(z,b)=\tilde A_R^{r,j}(z,b)-1\). For every block,
define
\ifarxiv
\begin{equation*}
\begin{aligned}
&\tilde N_{R,t}^j
=\sum_{r\in\mathcal D_t}
\sum_{z_1,z_2\in Z_R^+(b)}w(z_1,b)w(z_2,b)
\cdot\tilde C_R^{r,j}(z_1,z_2,b)\tilde A_R^{r,j}(z_2,b)\\
&\tilde V_{R,t}^j
=\sum_{\substack{r_1,r_2\in\mathcal D_t\\r_1\neq r_2}}
\sum_{z_1,z_2\in Z_R^+(b)}w(z_1,b)w(z_2,b)
\cdot\tilde B_R^{r_1,j}(z_1,b)\tilde B_R^{r_2,j}(z_2,b)\\
&-
\sum_{r\in\mathcal D_t}
\sum_{z_1,z_2\in Z_R^+(b)}w(z_1,b)w(z_2,b)
\cdot\tilde B_R^{r,j}(z_1,b)
\end{aligned}
\end{equation*}
\else
\begin{align*}
\tilde N_{R,t}^j
&=\sum_{r\in\mathcal D_t}
\sum_{z_1,z_2\in Z_R^+(b)}w(z_1,b)w(z_2,b)\\
&\cdot\tilde C_R^{r,j}(z_1,z_2,b)\tilde A_R^{r,j}(z_2,b)\\
\tilde V_{R,t}^j
&=\sum_{\substack{r_1,r_2\in\mathcal D_t\\r_1\neq r_2}}
\sum_{z_1,z_2\in Z_R^+(b)}w(z_1,b)w(z_2,b)\\
&\cdot\tilde B_R^{r_1,j}(z_1,b)\tilde B_R^{r_2,j}(z_2,b)\\
&-
\sum_{r\in\mathcal D_t}
\sum_{z_1,z_2\in Z_R^+(b)}w(z_1,b)w(z_2,b)\\
&\cdot\tilde B_R^{r,j}(z_1,b)
\end{align*}
\fi
For \(\ell\leq k\leq n\), define
\(N_{R,t,k}^j,\hat N_{R,t,k}^j,V_{R,t,k}^j,\hat V_{R,t,k}^j\) by the four
product-state potential formulas above, replacing \(Z_+(u)\), \(A^{r,j}\),
\(\hat A^{r,j}\), \(C^{r,j}\), and \(\hat C^{r,j}\) by their \(R\)-subscripted
counterparts.

The block mean produced by \(\mathsf{compute}\) satisfies
\(M_R^{t,j}(b)=\frac1{n_s}\sum_{r\in\mathcal D_t}
\sum_{z\in Z_R^+(b)}w(z,b)\tilde A_R^{r,j}(z,b)\).
This is precisely the block mean computed by
Algorithm~\ref{alg:hmm-weighted-prefix-estimate}. The same expansion used
above applies. For \(z_1\neq z_2\), the oracle correction again gives
\(p^j(v^{z_1,z_2})W(v^{z_1,z_2})/(1-\kappa)\geq1\). For \(z_1=z_2=z\),
idempotence and \(\rho_R^j(b)\geq(1-\kappa)/W_R(b)\) give
\ifarxiv
\begin{equation*}
(\tilde A_R^{r,j}(z,b))^2-\tilde A_R^{r,j}(z,b)
=\left(\frac1{\rho_R^j(b)w(z,b)}-1\right)\tilde A_R^{r,j}(z,b)
\leq\left(\frac{W_R(b)}{(1-\kappa)w(z,b)}-1\right)\tilde A_R^{r,j}(z,b)
\end{equation*}
\else
\begin{align*}
& (\tilde A_R^{r,j}(z,b))^2-\tilde A_R^{r,j}(z,b)\\
&=\left(\frac1{\rho_R^j(b)w(z,b)}-1\right)\tilde A_R^{r,j}(z,b)\\
&\leq\left(\frac{W_R(b)}{(1-\kappa)w(z,b)}-1\right)\tilde A_R^{r,j}(z,b)
\end{align*}
\fi
Hence
\begin{equation}
0\leq n_s^2(M_R^{t,j}(b)-W_R(b))^2
\leq\tilde N_{R,t}^j+\tilde V_{R,t}^j
\label{eq:online-moment-domination}
\end{equation}
Conditional independence of the variables in \(\mathcal G^j\) across blocks,
\eqref{eq:online-one-atom-transfer}, \eqref{eq:online-two-atom-transfer},
and \eqref{eq:online-corrected-transfer} transport both the corrected and
centered terms and give
\ifarxiv
\begin{equation}
\mathbb E\!
\prod_{t\in\mathcal T}(\tilde N_{R,t}^j+\tilde V_{R,t}^j)
={}\mathbb E\!
\prod_{t\in\mathcal T}(N_{R,t,\ell}^j+V_{R,t,\ell}^j)
\label{eq:online-initial-potential-transfer}
\end{equation}
\else
\begin{align}
\mathbb E\!
\prod_{t\in\mathcal T}(\tilde N_{R,t}^j+\tilde V_{R,t}^j)
={}\mathbb E\!
\prod_{t\in\mathcal T}(N_{R,t,\ell}^j+V_{R,t,\ell}^j)
\label{eq:online-initial-potential-transfer}
\end{align}
\fi
For the \(R\)-subscripted potentials, the same two transitions give
for \(\ell\leq k<n\),
\begin{align*}
&\mathbb E\prod_{t\in\mathcal T}
(\hat N_{R,t,k}^j+\hat V_{R,t,k}^j)={}\mathbb E\prod_{t\in\mathcal T}
(N_{R,t,k+1}^j+V_{R,t,k+1}^j)\\
&\mathbb E\prod_{t\in\mathcal T}
(N_{R,t,k}^j+V_{R,t,k}^j)={}\mathbb E\prod_{t\in\mathcal T}
(\hat N_{R,t,k}^j+\hat V_{R,t,k}^j)
\end{align*}
Their proof is exactly the proof of
Lemma~\ref{lem:weighted-potential-transport}, applied to the canonical runs
beginning at \((q_R(z),b)\).
Thus the last expectation in~\eqref{eq:online-initial-potential-transfer}
is transported to the terminal layer.

It remains to bound the terminal expression. For fixed \(z\in Z_R^+(b)\)
and \(\ell<k<n\), let \(D_R(z,k)\) contain the atoms \(z'\in Z_R^+(b)\)
whose canonical run beginning at \((q_R(z'),b)\) first agrees with that of
\(z\) at \((q_k(z),y_k)\). The prefix--suffix injection from
Proposition~\ref{lem:weighted-splicing} now has image in
\(Z_R^+(b)\), and HMM factorization gives
\begin{equation}
\frac{W(q_k(z),y_k)}
{w((x_{k+1:n},y_{k+1:n}),y_k)}
\sum_{z'\in D_R(z,k)}w(z',b)
\leq W_R(b)
\label{eq:online-splicing}
\end{equation}
Replacing the common tail may change \(\mathsf{first}(R,x)\), but this is
irrelevant: the new atom still has an accepting run from some state in
\(R\), so it remains in the set union \(Z_R^+(b)\), and HMM factorization
preserves positive weight. Within one convergence class the atoms share the
same tail and distinct atoms have distinct prefixes. Concatenation at the
fixed cut is therefore injective.
At the terminal layer, the definition of the correction for \(z_1=z_2\)
gives the exact identity
\ifarxiv
\begin{equation*}
N_{R,t,n}^j
=n_s\left(\frac{W_R(b)^2}{1-\kappa}-W_R(b)^2\right)
+\frac{n_s}{1-\kappa}
\sum_{\substack{z_1,z_2\in Z_R^+(b)\\z_1\neq z_2}}\frac{w(z_1,b)w(z_2,b)W(q_a,y_a)}
{w((x_{a+1:n}^{(1)},y_{a+1:n}^{(1)}),y_a)}
\end{equation*}
\else
\begin{align*}
&N_{R,t,n}^j
=n_s\left(\frac{W_R(b)^2}{1-\kappa}-W_R(b)^2\right)\\
&+\frac{n_s}{1-\kappa}
\sum_{\substack{z_1,z_2\in Z_R^+(b)\\z_1\neq z_2}}\frac{w(z_1,b)w(z_2,b)W(q_a,y_a)}
{w((x_{a+1:n}^{(1)},y_{a+1:n}^{(1)}),y_a)}
\end{align*}
\fi
where \(v^{z_1,z_2}=(q_a,y_a)\), with the terminal convention used when
the two runs agree only at \(q_F^n\). Before multiplication by
\((1-\kappa)^{-1}\), the positive contribution from pairs satisfying
\(z_1=z_2\) is \(W_R(b)^2\). Grouping every pair satisfying
\(z_1\neq z_2\) by its convergence state and
applying~\eqref{eq:online-splicing} shows
\begin{equation}
0\leq N_{R,t,n}^j
\leq\frac{n_s(n+1)W_R(b)^2}{1-\kappa}
\label{eq:online-terminal-potential}
\end{equation}
Indeed, for fixed \(z_1\), the equality \(z_2=z_1\) supplies one class, and
unequal atoms can agree only at layers \(\ell+1,\ldots,n\). Thus the number
of classes is at most
\(n-\ell+1\leq n+1\). Combining
\eqref{eq:online-moment-domination}--
\eqref{eq:online-terminal-potential} gives
\ifarxiv
\begin{equation*}
\mathbb E\!\left[
\prod_{t\in\mathcal T}(M_R^{t,j}(b)-W_R(b))^2
\right]
\leq
\left(\frac{(n+1)W_R(b)^2}{(1-\kappa)n_s}\right)^{|\mathcal T|}
\end{equation*}
\else
\begin{align*}
&\mathbb E\!\left[
\prod_{t\in\mathcal T}(M_R^{t,j}(b)-W_R(b))^2
\right]
\leq
\left(\frac{(n+1)W_R(b)^2}{(1-\kappa)n_s}\right)^{|\mathcal T|}
\end{align*}
\fi
This proves the remaining claim for \(\mathsf{compute}\) in
Lemma~\ref{lem:weighted-block-intersection}.

\label{sec:proof-potential-transport}

\section{Time Complexity}
\label{sec:time_complexity}
\timecomplexity*
Let \(m=|Q|\), and retain \(|\Sigma|\) explicitly as the alphabet size. The
number of states in the unrolled NFA, \(|Q^u|\), is at most \(mn\). Recall
that Algorithm~\ref{alg:hmm-weighted-count-nfa} sets the following parameters:
\(n_s=O(n\varepsilon^{-2}),
n_t=O(\log(nmh)),
n_u=O(\log(\delta^{-1})),
\theta
=O\!\left(n^2mh\varepsilon^{-2}\log(nmh)\right)
\).
Theorem~\ref{thm:distribution-preservation} uses these parameters without
modification; in particular,
\(n_u=\lceil8\log(\delta^{-1})\rceil\).

We first analyze one of the \(n_u\) repetitions of
\(\mathsf{precompute}\). The sample counter is maintained incrementally,
so checking whether it reaches \(\theta\) takes constant time. Before the
repetition sets \(\mathsf{fail}^j=1\) and terminates, the number of stored
samples, and hence the total number of cache rows, is smaller than \(\theta\).

\begin{itemize}
\item For \(\mathsf{computeCache}\), at every layer \(\ell\) and for every
token in \(\Sigma\), we multiply an
\(|\mathcal S_{\ell+1}|\times|Q^{\ell+1}|\) matrix by an
\(|Q^{\ell+1}|\times|Q^\ell|\) matrix. If \(\omega\leq3\) is the
matrix-multiplication exponent, the total cost of all cache computations and
updates is
\(O\!\left(
|\Sigma|\left(nm^\omega+\theta m^{\omega-1}\right)
\right)
=O\!\left(|\Sigma|\theta m^2\right)\).

\item For \(\mathsf{reduce}\), each sampled atom is examined only for
productive labeled incoming edges from the preceding layer. It is therefore
examined at most \(|\Sigma|hm\) times. The cumulative cost of all reductions is
\(O(|\Sigma|\theta hm)
=O\!\left(
|\Sigma|n^2m^2h^2\varepsilon^{-2}\log(nmh)
\right)\).

\item For the median-of-means computations, there are at most
\(h|Q^{u}|\) productive product states. Computing their block means
and medians takes
\(O(n_sn_th|Q^{u}|)
=O\!\left(n^2mh\varepsilon^{-2}\log(nmh)\right)\).

\item For the canonical union, deciding whether an atom belongs to an
earlier NFA successor takes at most \(m\) cache lookups. The reductions
examine \(O(|\Sigma|\theta hm)\) candidate atoms, so all canonical unions take
\(O(|\Sigma|\theta hm^2)
=O\!\left(
|\Sigma|n^2m^3h^2\varepsilon^{-2}\log(nmh)
\right)\).
\end{itemize}

The canonical unions dominate the other operations. Thus one outer
repetition of \(\mathsf{precompute}\) takes
\(O\!\left(
|\Sigma|n^2m^3h^2\varepsilon^{-2}\log(nmh)
\right)\) time. Multiplying by \(n_u\), the complete weighted-\#NFA
precomputation takes
\[
O\!\left(
|\Sigma|n^2m^3h^2\varepsilon^{-2}
\log(nmh)\log(\delta^{-1})
\right)
\]
time. Constructing the unrolled NFA and its productive product states has
cost \(O(|\Sigma|nm^2h^2)\) and is absorbed in this bound.

We next analyze constrained generation. At each generation step \(\ell\), the
decoder evaluates
\(\hat P_{\text{hmm}}(\alpha\mid x_{1:\ell})\) for every \(x_\ell\in\Sigma\).
This requires \(|\Sigma|\) calls to \(\mathsf{compute}\) per step and
\(n|\Sigma|\) calls across all \(n\) steps.
For a fixed index \(j\), each relevant atom is tested once by
\(\mathsf{reduce}\) for each candidate prefix in which it appears, and the
canonical union performs at most \(m\) cache lookups for that atom. The block
means and medians take \(O(n_sn_th)\) additional time per generated position
along one candidate path. Thus, all completion-probability computations for a
fixed index \(j\) take
\(O\!\left(
|\Sigma|\left(m\theta+nn_sn_th\right)
\right)\)
time. Summing over all \(n_u\) repetitions and substituting the parameter
values, the completion-probability computations take
\ifarxiv
\begin{equation*}
O\!\left(
|\Sigma|n_u\left(m\theta+nn_sn_th\right)
\right)
=O\!\left(
|\Sigma|n^2m^2h\varepsilon^{-2}
\log(nmh)\log(\delta^{-1})
\right)
\end{equation*}
\else
\begin{align*}
&O\!\left(
|\Sigma|n_u\left(m\theta+nn_sn_th\right)
\right)\\
&=O\!\left(
|\Sigma|n^2m^2h\varepsilon^{-2}
\log(nmh)\log(\delta^{-1})
\right)
\end{align*}
\fi
time. At each generated position, computing
\(P_{\text{hmm}}(y_\ell=b\mid x_{1:\ell})\) for every \(b\in H_\ell\),
together with \(P_{\text{hmm}}(x_{1:\ell})\), for every candidate
\(x_\ell\in\Sigma\) takes
\(O(|\Sigma|h^2)\) time. Therefore, 
constrained generation of a length-\(n\) sequence takes
\[
O\!\left(
|\Sigma|n^2m^2h\varepsilon^{-2}
\log(nmh)\log(\delta^{-1})
+
|\Sigma|nh^2
\right)
\]
time.

Combining precomputation and constrained generation, the total running time is
\[
O\!\left(
|\Sigma|n^2m^3h^2\varepsilon^{-2}
\log(nmh)\log(\delta^{-1})
\right)
\]

\section{Constraint Satisfaction}
\label{sec:constraint-satisfaction}

We conclude by showing that any sequence returned by constrained generation
satisfies the NFA constraint. This guarantee does not depend on the accuracy
of the completion-probability estimates.

\begin{lemma}
\label{lem:constraint-satisfaction}
Suppose constrained generation uses the value
\(\hat{P}_{\text{hmm}}(\alpha\mid x_{1:\ell})\) returned by
Algorithm~\ref{alg:hmm-weighted-prefix-estimate} in its next-token
distribution and that the distribution's normalizing constant is positive at
every step. Every length-\(n\) sequence \(x_{1:n}\) returned by constrained
generation belongs to \(L_n(A)\).
\end{lemma}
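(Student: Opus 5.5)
We prove by contradiction, or equivalently by a step-by-step invariant. Let \(x_{1:n}\) be returned. At every step \(\ell\), the token \(x_\ell\) is sampled with probability proportional to \(P_{\text{lm}}(x_\ell\mid x_{1:\ell-1})\hat P_{\text{hmm}}(\alpha\mid x_{1:\ell})\). The normalizing constant is positive by assumption, so any token actually drawn has strictly positive weight. In particular, \(\hat P_{\text{hmm}}(\alpha\mid x_{1:\ell})>0\) for every \(\ell\in[n]\). It therefore suffices to apply this at \(\ell=n\).

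At \(\ell=n\), Algorithm~\ref{alg:hmm-weighted-prefix-estimate} takes its first branch and returns \(\mathbf{1}_{q_F^n\in R(x_{1:n})}\) exactly, with no randomness and no dependence on the repetitions \(j\) or on \(\mathsf{fail}^j\). Positivity of the weight of the drawn last token therefore forces \(q_F^n\in R(x_{1:n})\). This means \(x_{1:n}\in L(q_0^0,q_F^n)=L(A^u)\). By the unrolling property \(L(A^u)=L_n(A)\) from Section~\ref{sec:canonical-runs}, we obtain \(x_{1:n}\in L_n(A)\).

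The only point requiring care is that the sampling step at layer \(n\) uses the exact terminal value rather than a median of estimates. If the estimate were ever positive on a nonaccepting completion, the argument would fail. We therefore check that the returned value at \(\ell=n\) is the indicator, which it is by construction, and the argument does not depend on the accuracy guarantees of Theorem~\ref{thm:prefix-accuracy}. For intermediate steps no property is needed: if a nonviable prefix were chosen, the final step would have zero normalizing constant, contradicting the hypothesis.

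As an optional strengthening, one can note that the estimate at intermediate \(\ell<n\) returns \(0\) whenever \(R=\emptyset\) or \(R_b^+=\emptyset\) for all \(b\). Hence zero-viability prefixes are never chosen, so the positivity hypothesis at later steps is natural. This is not needed for the lemma.
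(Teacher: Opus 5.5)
Your proof is correct and takes essentially the same approach as the paper. At step $n$, the terminal clause of Algorithm~\ref{alg:hmm-weighted-prefix-estimate} returns the exact indicator $\mathbf{1}_{q_F^n\in R(x_{1:n})}$, so any drawn last token has positive weight (guaranteed by the positive normalizing constant), which forces $x_{1:n}\in L(A^u)=L_n(A)$; the paper phrases this as every nonaccepting candidate receiving zero probability, and your remarks about intermediate steps are, as you say, not needed.
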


\begin{proof}
Consider the final generation step. For every \(a\in\Sigma\), the
unnormalized probability assigned to \(a\) is
\(P_{\text{lm}}(a\mid x_{1:n-1})
\hat{P}_{\text{hmm}}(\alpha\mid x_{1:n-1}\cdot a)\).
By the terminal clause of
Algorithm~\ref{alg:hmm-weighted-prefix-estimate}, the definition of \(R\), and
\(L(A^u)=L_n(A)\),
\(\hat{P}_{\text{hmm}}(\alpha\mid x_{1:n-1}\cdot a)
=
\mathbf{1}_{q_F^n\in R(x_{1:n-1}\cdot a)}
=
\mathbf{1}_{x_{1:n-1}\cdot a\in L_n(A)}\).
Thus every \(a\) for which \(x_{1:n-1}\cdot a\notin L_n(A)\) has probability
zero and cannot be returned. It follows that the returned token \(x_n\)
satisfies \(q_F^n\in R(x_{1:n})\), and hence
\(x_{1:n}\in L(A^u)=L_n(A)\).
\end{proof}

\clearpage
\section{Prompt Templates}
\label{sec:prompt-templates}

The prompt for evaluating the LM baseline is shown below. The constraint is
included in the instruction prompt.

\begin{tcolorbox}[
  colframe=black!40,
  colback=white,
  coltitle=black,
  fonttitle=\bfseries,
  title=Base LM,
  boxrule=0.5pt
]
\begin{lstlisting}[style=prompt]
Generate a natural and coherent output that uses the following concepts: {concept_list}. The output must follow the following Regex constraint: {nfa_constraint}. The length of the output must be within {minimum_content_tokens} and {maximum_content_tokens}. Return only the output without explanation:

\end{lstlisting}
\end{tcolorbox}

The prompt for evaluating constrained generation methods
(XGrammar~\citep{xgrammar}, Ctrl-G~\citep{ctrlg},
and \framework{}) is shown below. The constraint is passed to the constrained
generation engine through the interface and is not shown in the prompt.

\begin{tcolorbox}[
  colframe=black!40,
  colback=white,
  coltitle=black,
  fonttitle=\bfseries,
  title=Constrained Generation Engine,
  boxrule=0.5pt
]
\begin{lstlisting}[style=prompt]
Generate a natural and coherent output that uses the following concepts: {concept_list}. Return only the output without explanation:
\end{lstlisting}
\end{tcolorbox}

The prompt for the LM judge is shown below.
\ifarxiv\else\newpage\fi
\begin{tcolorbox}[
  colframe=black!40,
  colback=white,
  coltitle=black,
  fonttitle=\bfseries,
  title=LM quality-judge prompt,
  boxrule=0.5pt
]
\begin{lstlisting}[style=prompt]
You are a strict quality evaluator of English paragraphs. Evaluate the quality of the following paragraph according to the following grading scheme. For each criterion, assign a grade of 1, 2, 3, or 4. Return the average grade.

1. Fluency: grammar, readability, and natural wording. A grade of 4 is well-formed and natural; a 3 has a tiny error; a 2 is understandable but has noticeable awkwardness; a 1 is unreadable.

2. Coherence: semantic reasonableness and internal consistency. A grade of 4 means logical and well connected; a 3 means broadly consistent but has unclear transitions; a 2 has a serious consistency issue but is readable; a 1 makes no semantic sense or is self-contradictory.

Put your final score within the delimiters <<< >>> for extraction.

Your Task: {output_to_be_graded}
Your Score:
\end{lstlisting}
\end{tcolorbox}


\end{document}